\theoremstyle{plain}
\newtheorem{theorem}{Theorem}[section]
\newtheorem{lemma}[theorem]{Lemma}
\theoremstyle{definition}
\theoremstyle{remark}
\icmltitlerunning{Scalable First-order Method for Certifying Optimal k-Sparse GLMs}
\newtheoremstyle{named}{}{}{\itshape}{}{\bfseries}{.}{.5em}{#1 \thmnote{#3}}
\theoremstyle{named}
\newtheorem*{namedtheorem}{Theorem}
\newtheorem*{namedlemma}{Lemma}
\newcommand{\ba}{\bm{a}}
\newcommand{\bb}{\bm{b}}
\newcommand{\bx}{\bm{x}}
\newcommand{\bX}{\bm{X}}
\newcommand{\bQ}{\bm{Q}}
\newcommand{\bz}{\bm{z}}
\newcommand{\by}{\bm{y}}
\newcommand{\bt}{\bm{t}}
\newcommand{\balpha}{\bm{\alpha}}
\newcommand{\bI}{\bm{I}}
\newcommand{\bA}{\bm{A}}
\newcommand{\bbeta}{\bm{\beta}}
\newcommand{\brho}{\bm{\rho}}
\newcommand{\bgamma}{\bm{\gamma}}
\newcommand{\bmu}{\bm{\mu}}
\newcommand{\bnu}{\bm{\nu}}
\newcommand{\bpi}{\bm{\pi}}
\newcommand{\bSigma}{\bm{\Sigma}}
\newcommand{\bzeta}{\bm{\zeta}}
\newcommand{\bpsi}{\bm{\psi}}
\newcommand{\bphi}{\bm{\phi}}
\newcommand{\bbR}{\mathbb{R}}
\newcommand{\bbP}{\mathbb{P}}
\newcommand{\calB}{\mathcal{B}}
\newcommand{\calJ}{\mathcal{J}}
\newcommand{\calL}{\mathcal{L}}
\newcommand{\calN}{\mathcal{N}}
\newcommand{\calS}{\mathcal{S}}
\DeclareMathOperator*{\argmin}{\arg\!\min}
\def\1{\bm{1}}
\DeclareMathAlphabet{\mathsfit}{\encodingdefault}{\sfdefault}{m}{sl}
\SetMathAlphabet{\mathsfit}{bold}{\encodingdefault}{\sfdefault}{bx}{n}
\newcommand{\R}{\mathbb{R}}
\def\vert#1{\lvert #1 \rvert}
\def\Vert#1{\lVert #1 \rVert}
\definecolor{predcolor}{gray}{0.95}
\definecolor{scorecolor}{gray}{0.95}
\definecolor{riskcolor}{gray}{0.95}
\definecolor{transparentcolor}{gray}{0.95}
\definecolor{ForestGreen}{rgb}{0.13, 0.55, 0.13} 
\DeclareMathOperator{\cl}{cl}
\DeclareMathOperator{\conv}{conv}
\DeclareMathOperator{\TopSum}{TopSum}
\DeclareMathOperator{\prox}{prox}
\DeclareMathOperator{\st}{s.\!t.\!}
\DeclareMathOperator{\sgn}{sgn}
\begin{document}

\twocolumn[
\icmltitle{Scalable First-order Method for Certifying Optimal k-Sparse GLMs}




\icmlsetsymbol{equal}{*}




\begin{icmlauthorlist}
\icmlauthor{Jiachang Liu}{yyy}
\icmlauthor{Soroosh Shafiee}{yyy}
\icmlauthor{Andrea Lodi}{comp}
\end{icmlauthorlist}

\icmlaffiliation{yyy}{School of Operations Research and Information Engineering, Cornell University, Ithaca, NY, USA}
\icmlaffiliation{comp}{Jacobs Technion-Cornell Institute, Cornell Tech and Technion–IIT, New York, NY, USA}


\icmlcorrespondingauthor{Jiachang Liu}{jiachang.liu@cornell.edu}
\icmlcorrespondingauthor{Soroosh Shafiee}{shafiee@cornell.edu}
\icmlcorrespondingauthor{Andrea Lodi}{al748@cornell.edu}

\icmlkeywords{Machine Learning, ICML}

\vskip 0.3in
]



\printAffiliationsAndNotice{}  



\begin{abstract}

This paper investigates the problem of certifying optimality for sparse generalized linear models (GLMs), where sparsity is enforced through an $\ell_0$ cardinality constraint.
While branch-and-bound (BnB) frameworks can certify optimality by pruning nodes using dual bounds, existing methods for computing these bounds are either computationally intensive or exhibit slow convergence, limiting their scalability to large-scale problems.
To address this challenge, we propose a first-order proximal gradient algorithm designed to solve the perspective relaxation of the problem within a BnB framework.
Specifically, we formulate the relaxed problem as a composite optimization problem and demonstrate that the proximal operator of the non-smooth component can be computed exactly in log-linear time complexity, eliminating the need to solve a computationally expensive second-order cone program.
Furthermore, we introduce a simple restart strategy that enhances convergence speed while maintaining low per-iteration complexity.
Extensive experiments on synthetic and real-world datasets show that our approach significantly accelerates dual bound computations and is highly effective in providing optimality certificates for large-scale problems.
\end{abstract}

\section{Introduction}

Sparse generalized linear models (GLMs) are essential tools in machine learning (ML), widely applied in fields like healthcare, finance, engineering, and science. 
These models provide a flexible framework for capturing relationships between variables while ensuring interpretability, which is critical in high-stakes applications.
Recently, using the $\ell_0$ norm to induce sparsity has gained significant attention. This approach provides distinct advantages over traditional convex relaxation methods, such as replacing $\ell_0$ with $\ell_1$, particularly in cases involving highly correlated features.

In this paper, we aim to solve
\begin{align} \label{obj:original_sparse_problem}
    \begin{array}{cl}
        \min\limits_{\bbeta \in \R^p} & f(\bX \bbeta, \by) + \lambda_2 \lVert \bbeta \rVert_2^2 \\
        \st & \| \bbeta \|_\infty \leq M, ~ \lVert \bbeta \rVert_0 \leq k,
    \end{array}
\end{align}
where $\bX \in \R^{n \times p}$ and $\by \in \R^n$ denote the matrix of features and the vector of labels, respectively, while the parameter $M > 0$ can be either user-defined based on prior knowledge or estimated from the data~\citep{park2020subset}. The GLM loss function, denoted by $f : \R^n \times \R^n \to \R$, is assumed to be Lipschitz smooth, the parameter $k \in \mathbb N$ controls the number of nonzero coefficients, and $\lambda_2 > 0$ is a small Tikhonov regularization coefficient to address collinearity.
However, problem~\eqref{obj:original_sparse_problem} is NP-hard~\citep{natarajan1995sparse}. 
As a result, most existing methods rely on heuristics that deliver high-quality approximations but lack guarantees of optimality. 
This limitation is particularly problematic in high-stakes applications like healthcare, where ensuring accuracy, reliability, and safety is essential. 
Therefore, we emphasize the pursuit of certifiably optimal solutions.

A naive approach to solve~\eqref{obj:original_sparse_problem} to optimality is to reformulate it as a mixed-integer programming (MIP) problem and use commercial MIP solvers. However, these solvers struggle with scalability, especially for large datasets or nonlinear objectives. A key bottleneck is the computation of tight lower bounds at each branch-and-bound (BnB) node, which is essential for effective pruning and solver performance.
Existing methods for computing lower bounds typically rely on linear or conic optimization.
However, these approaches either generate loose bounds that reduce pruning efficiency or result in high computational costs per iteration. 
Additionally, they are difficult to parallelize, limiting their ability to leverage modern hardware accelerators like GPUs.

To address these challenges, we propose a scalable first-order method for efficiently calculating lower bounds within the BnB framework.
We begin with a perspective reformulation of~\eqref{obj:original_sparse_problem} and derive its continuous relaxation.
The resulting formulation is then expressed as an unconstrained optimization problem, characterized by a convex composite objective function, which enables the application of the Fast Iterative Shrinkage-Thresholding Algorithm (FISTA), a well-known first-order method~\citep{beck2009fast}, to compute lower bounds.
The successful implementation of FISTA, however, relies on efficient computation of the proximal operator, which requires solving a second order cone program (SOCP) problem.
To the best of our knowledge, the efficient computation of this proximal operator has not been previously addressed in the literature. 
Therefore, we propose a customized pooled-adjacent-violation algorithm (PAVA) that evaluates the proximal operator exactly with log-linear time complexity, ensuring the scalability of our FISTA approach for large problem instances.
A major advantage of our approach is its computational efficiency, in which instead of solving costly linear systems, it only relies on matrix-vector multiplication, which is highly amenable to GPU acceleration.
This capability addresses a key limitation of existing approaches that struggle to parallelize their computations on modern hardware.

To accelerate the performance of the FISTA algorithm, we introduce a restart heuristic. 
This leads to an empirical linear convergence rate, a result not previously achieved by other first-order methods for this type of problem.
Empirically, our method demonstrates substantial speedups in computing dual bounds -- often by 1-2 orders of magnitude -- compared to existing techniques. 
These improvements significantly enhance the overall efficiency of the BnB process, enabling the certification of large-scale instances of~\eqref{obj:original_sparse_problem} that were previously intractable using commercial MIP solvers. All omitted proofs are provided in~\ref{appendix_sec:proofs}. The experimental setup is detailed in~\ref{appendix:experimental_setup}. Additional numerical results are reported in~\ref{appendix:numerical}.

\subsection{Contributions}
The key contributions of this paper are summarized below.
\begin{itemize}[label=$\diamond$,leftmargin=*]
    \item We propose a FISTA-based first-order method to enhance the scalability of solving~\eqref{obj:original_sparse_problem}, with a focus on efficient lower-bound computation within the BnB framework.
    \item The proximal operator in the FISTA method is computed using a customized PAVA that leverages hidden mathematical structures and enjoys log-linear time complexity, ensuring scalability for large-scale problems.
    \item Besides achieving fast convergence rates (via a restart strategy) and low per-iteration computational complexity, our method can be easily parallelized on GPUs, something not currently achievable by MIP methods.
    \item We validate the practical efficiency of our approach on both synthetic and real-world datasets, demonstrating substantial speedups in computing dual bounds and certifying optimal solutions for large-scale sparse GLMs.
\end{itemize}

\subsection{Related Works}
\label{sec:related_work}

\paragraph{MIP for ML.}
MIP has been successfully applied in
medical scoring systems~\citep{ustun2016supersparse, ustun2019learning, liu2022fasterrisk}, 
portfolio optimization \citep{bienstock1996computational,wei2022convex}, nonlinear identification systems~\citep{bertsimas2023learning, liu2024okridge},
decision trees~\citep{bertsimas2017optimal, hu2019optimal},
survival analysis~\citep{zhang2023optimal, liu2024fastsurvival},
hierarchical models~\citep{bertsimas2020sparse}, regression and classification models~\citep{atamturk2020safe, bertsimas2020sparse, bertsimas2020sparse1, bertsimas2020sparse2, hazimeh2020fast, xie2020scalable, atamturk2021sparse, dedieu2021learning, hazimeh2022sparse, liu2024okridge, guyard2024el0ps}, graphical models \citep{manzour2021integer, kucukyavuz2023consistent}, and outlier detection \citep{gomez2021outlier,gomez2023outlier}.
The primary focus of these works is on obtaining high-quality feasible solutions, with only a small subset addressing the certification of optimality.
Our work aims to contribute to this literature, with a strong focus on enhancing the computational scalability of certifying optimality for solving sparse GLM problems.

\paragraph{Perspective Formulations.} 
The application of perspective functions to derive convex relaxations for~\eqref{obj:original_sparse_problem} dates back to the seminal work of \citet{ceria1999convex}.
Perspective formulations have been developed for separable functions in \citep{gunluk2010perspective, xie2020scalable,  wei2022ideal, cacciola2023deep, bacci2019new, shafiee2024constrained} and for rank-one functions in \citep{frangioni2020decompositions, wei2020convexification, wei2022ideal, atamturk2020supermodularity, han2021compact, shafiee2024constrained} under various conditions. 
Our work uses perspective formulation of separable functions that appear in~\eqref{obj:original_sparse_problem} as the Tikhonov regularization function. This formulation provides tighter relaxations compared to the $\ell_1$-based methods proposed in \citep{mhenni2020sparse}.

\paragraph{Lower Bound Calculation.}
A key aspect of certifying optimality in MIP problems is the efficient computation of tight lower bounds.
Commercial MIP solvers typically iteratively linearize the objective function using the celebrated outer approximation method~\citep{kelley1960cutting} (via cutting planes) and solve the resulting linear program~\citep{schrijver1998theory, wolsey2020integer}.
However, this approach often produce loose lower bounds, especially when high-quality linear cuts are not generated.
Alternatively, solvers may use conic convex relaxations and solve them with the interior-point method (IPM)~\citep{dikin1967iterative, renegar2001mathematical, nesterov1994interior}. 
While this approach often yields tighter lower bounds, IPM does not scale well due to its reliance on second-order information and because -- differently from the linear case -- effectively warm-starting IPMs is not possible.
Recent attempts are based on first-order methods, including subgradient descent~\citep{bertsimas2020sparse1}, ADMM~\citep{liu2024okridge}, and coordinate descent~\citep{hazimeh2022sparse}. Our work builds on this, offering faster convergence, low computational complexity, and significant GPU acceleration. We also observe that our proposed FISTA method achieves linear convergence rates empirically, a result not previously achieved by other first-order methods for this problem.

\paragraph{GPU Acceleration.}
Recently, there have been some promising works on using GPUs to accelerate continuous optimization problems, including linear programming~\citep{applegate2021practical, lu2023cupdlp}, quadratic programming~\citep{lu2023practical}, and semidefinite programming~\citep{han2024accelerating}.
A natural way to leverage GPUs for discrete problems is to implement greedy heuristics on GPUs \citep{blanchard2013gpu}. Alternatively, one can use GPU-based LPs within MIP solvers, as demonstrated by~\citet{de2024power} for solving clustering problems.
However, in \citep{de2024power}, the challenge is to approximate the original objective function with a potentially exponential number of cutting planes. 
In contrast, we develop a customized FISTA method that directly handles the nonlinear objective function, while the computation can be easily parallelized since it only involves matrix-vector multiplication. 
Other first-order methods, such as ADMM~\citep{liu2024okridge} and coordinate descent~\citep{hazimeh2022sparse}, are unsuitable for GPUs: ADMM requires solving linear systems, while coordinate descent is inherently sequential.

\section{Problem Formulation}
\label{sec:problem_formulation}
In this preliminary section, we introduce some backgrounds on how to obtain a lower bound (which will be used for the branch-and-bound process to prune nodes) for the optimal value of Problem~\eqref{obj:original_sparse_problem} by solving an associated convex relaxation problem.
First, note that we can cast problem~\eqref{obj:original_sparse_problem}~as 
\begin{align}
    \label{epigraph:formulation}
    \min \left\{ \tau \,:\, (\tau, \bbeta, \bz) \in \mathcal S \right\},
\end{align}
where the extended feasible set is defined as
\begin{align}
    \label{eq:S}
    \mathcal{S} = \left\{ (\tau, \bbeta, \bz)  \;\middle|\;
    \begin{array}{l} 
        \| \bbeta \|_\infty \leq M, \\
        \bz \in \{0, 1\}^p, \, \mathbf{1}^T \bz \leq k, \\
        \beta_j ( 1 - z_j) = 0 ~~ \forall j \in [p] \\
        f(\bX \bbeta, \by) + \lambda_2 \| \bbeta \|_2^2 \leq \tau
    \end{array}
    \right\},
\end{align}
and $[p] = \{1, \dots, p \}$ stands for the set of all integers up to $p \in \mathbb N$.
Put it differently, each binary variable $z_j$ indicates whether a continuous variable $\beta_j$ is zero or not by requiring $\beta_j = 0$ when $z_j = 0$ and allowing $\beta_j$ to take any value when $z_j = 1$. 
Meanwhile, the objective function is linearized using the epigraph reformulation technique, which allows us to interpret the optimal value of~\eqref{epigraph:formulation} as the evaluation of the support function of $\mathcal S$ at $(\bm 0, \bm 0, 1)$.
By virtue of \citep[\S13]{rockafellar1970convex}, the optimal value of~\eqref{epigraph:formulation} remains unchanged if we replace $\mathcal S$ with $\cl \conv(\mathcal S)$, where $\cl \conv(\mathcal S)$ denotes the closed convex hull of $\mathcal S$. 
However, the exact description of $\cl \conv(\mathcal S)$ requires exponentially many (nonlinear) constraints, which leads to the NP-hardness of~\eqref{obj:original_sparse_problem}.

We thus explore other options for a convex relaxation of~\eqref{obj:original_sparse_problem}.
It turns out that a tractable convex hull can be obtained if the objective function only includes the Tikhonov regularization term $\| \bbeta \|_2^2$, using the perspective function.
The perspective function of the quadratic function $h(\beta) = \beta^2$ is $h^\pi(\beta, z) = \beta^2 / z$ if $z > 0$, $= 0$ if $\beta = z = 0$, and $= \infty$ otherwise.
For simplicity, we write $\beta^2/z$ instead of $h^\pi(\beta, z)$ even if $z = 0$. 
The following lemma provides an exact perspective formulation of the convex hull when $\mathcal S$ does not include $f(\bX \bbeta, \by)$. This result extends \citep[Lemma~6]{gunluk2010perspective} by incorporating sparsity constraints, while also extending \citep[Theorem~2]{shafiee2024constrained} to account for $\ell_\infty$ box constraint on $\bbeta$.
\begin{lemma}
    \label{lemma:equivalence_between_perspective_relaxation_and_convexification}
    The closed convex hull of the set
    \begin{align*}
        \left\{ (\tau, \bbeta, \bz) \middle|
        \begin{array}{l}
            \| \bbeta \|_\infty \leq M, \,  \\
            \bz \in \{0, 1\}^p, \, \mathbf{1}^T \bz \leq k, \\ \beta_j ( 1 - z_j) = 0 ~~ \forall j \in [p], \\
            \sum_{j \in [p]} \beta_j^2 \leq \tau
        \end{array}
        \right\}
    \end{align*}
    is given by the set
    \begin{align*}
        \left\{ (\tau, \bbeta, \bz)  \;\middle|\;
        \begin{array}{l} 
            -M z_j\leq \bbeta_j \leq M z_j ~ \forall j \in [p], \\
            \bz \in [0, 1]^p, \, \mathbf{1}^T \bz \leq k, \\
            \sum_{j \in [p]} \beta_j^2 / z_j \leq \tau
        \end{array}
        \right\}.
    \end{align*}
\end{lemma}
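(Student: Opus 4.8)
The plan is to prove the two inclusions between $\cl\conv(\mathcal S_0)$ and the proposed perspective set, where I write $\mathcal S_0$ for the discrete set in the hypothesis and $\mathcal C$ for the candidate convex set in the conclusion. \emph{Easy inclusion.} First I would record that $\mathcal C$ is closed and convex: the map $(\beta,z)\mapsto \beta^2/z$ is the perspective of the convex function $\beta\mapsto\beta^2$, hence jointly convex and lower semicontinuous on $\{z\ge 0\}$, so $\sum_{j}\beta_j^2/z_j\le\tau$ defines a closed convex set, and the remaining constraints are linear. Then I would verify $\mathcal S_0\subseteq\mathcal C$ directly: for $\bz\in\{0,1\}^p$ the coupling $\beta_j(1-z_j)=0$ together with $\|\bbeta\|_\infty\le M$ gives $-Mz_j\le\beta_j\le Mz_j$, and $\sum_{j}\beta_j^2/z_j=\sum_{j}\beta_j^2\le\tau$ under the stated $\beta^2/z$ convention. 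Since $\mathcal C$ is closed and convex and contains $\mathcal S_0$, it contains $\cl\conv(\mathcal S_0)$.

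The real content is the reverse inclusion $\mathcal C\subseteq\conv(\mathcal S_0)$. Fix $(\tau,\bbeta,\bz)\in\mathcal C$. The key reduction is that the cardinality polytope $\{\bz\in[0,1]^p:\mathbf 1^\top\bz\le k\}$ is integral, i.e.\ equal to $\conv\{\bz\in\{0,1\}^p:\mathbf 1^\top\bz\le k\}$; I would establish this by a short vertex argument (at any vertex at most one coordinate can be fractional once $0\le z_j\le 1$ are imposed, and the knapsack equality $\mathbf 1^\top\bz=k$ then forces that coordinate to be integral too). Consequently I can write $\bz=\sum_i\lambda_i\bz^{(i)}$ as a convex combination of binary vectors $\bz^{(i)}$, each obeying $\mathbf 1^\top\bz^{(i)}\le k$.

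Given this decomposition, the decisive construction is the shared scaling $\beta_j^{(i)}:=(\beta_j/z_j)\,z_j^{(i)}$, set to $0$ when $z_j=0$ (consistent, since then $\beta_j=0$ by the box constraint). Putting $\tau^{(i)}:=\sum_{j}(\beta_j^{(i)})^2$ and using the marginal identity $\sum_i\lambda_i z_j^{(i)}=z_j$, I would check three facts: $\sum_i\lambda_i\beta_j^{(i)}=(\beta_j/z_j)z_j=\beta_j$; $\sum_i\lambda_i\tau^{(i)}=\sum_{j}(\beta_j/z_j)^2 z_j=\sum_{j}\beta_j^2/z_j\le\tau$; and $|\beta_j^{(i)}|\le M$ on the support of $\bz^{(i)}$ because $|\beta_j|\le Mz_j$. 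Hence each $(\tau^{(i)},\bbeta^{(i)},\bz^{(i)})\in\mathcal S_0$, their $\lambda$-combination reproduces $(\bbeta,\bz)$ in the first two blocks, and the nonnegative slack $\tau-\sum_{j}\beta_j^2/z_j$ can be absorbed by inflating one $\tau^{(i)}$, yielding $(\tau,\bbeta,\bz)\in\conv(\mathcal S_0)$. In fact this shows $\conv(\mathcal S_0)=\mathcal C$ already, so closure is automatic.

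The main obstacle is exactly this second inclusion: the points $\bbeta^{(i)}$ must be chosen to respect \emph{simultaneously} the box bound $|\beta_j^{(i)}|\le M$, the support condition $\beta_j^{(i)}=0$ whenever $z_j^{(i)}=0$, and the exact reproduction of the nonlinear perspective term in the $\tau$-coordinate. The scaling $\beta_j^{(i)}=(\beta_j/z_j)z_j^{(i)}$ is what reconciles all three at once. Relative to the cited results, the difficulty is that the box constraint (handled in \citep[Theorem~2]{shafiee2024constrained}) and the cardinality decomposition (extending \citep[Lemma~6]{gunluk2010perspective}) must be treated jointly rather than in isolation, which is precisely what the marginal-matching decomposition above accomplishes.
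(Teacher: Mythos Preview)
Your argument is correct, but it differs substantially from the paper's proof. The paper does not argue the two inclusions directly. Instead, it rewrites the discrete set via the perspective of $\beta_j^2$ and the big-$M$ constraints, lifts it to an extended set $\overline{\mathcal T}$ by introducing auxiliary epigraph variables $\bt$ with $\bm 1^\top\bt=\tau$, recognizes $\overline{\mathcal T}$ as a mixed-binary conic set of the specific form treated in \citep{shafiee2024constrained}, and then invokes \citep[Lemma~4 and Theorem~1]{shafiee2024constrained} as a black box to conclude that the continuous relaxation of $\overline{\mathcal T}$ is its closed convex hull. Projecting out $\bt$ via Fourier--Motzkin recovers the claimed perspective description.

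Your route is more elementary and fully self-contained: you exploit the integrality of the cardinality polytope $\{\bz\in[0,1]^p:\mathbf 1^\top\bz\le k\}$ and then disintegrate $\bbeta$ along the resulting binary decomposition via the shared scaling $\beta_j^{(i)}=(\beta_j/z_j)\,z_j^{(i)}$. This simultaneously respects the box, the support coupling, and reproduces the perspective term in the $\tau$-coordinate, so you obtain $\mathcal C=\conv(\mathcal S_0)$ without closure or any external theorem. What the paper's approach buys is modularity: it shows the lemma is an instance of a general structural result about separable mixed-binary conic sets, so variations (different separable regularizers, different box shapes) follow by the same template. What your approach buys is transparency and independence from the cited machinery; in particular it makes explicit that the convex hull is already closed, a fact the paper's proof only gets as a byproduct of the black-box theorem.
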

The convex hull formulation presented in Lemma~\ref{lemma:equivalence_between_perspective_relaxation_and_convexification} is a second-order conic set.
Specifically, the epigraph of the sum of perspective functions in the last line satisfies
\begin{align*}
    \sum_{j \in [p]} {\beta_j^2}/{z_j} \leq \tau \iff \exists \bt \in \R_+^p ~ \st ~ 
    \begin{cases}
        \bm 1^\top \bm t = \tau, \\
        \beta_j^2 \leq z_j t_j ~ \forall j \in [p],
    \end{cases}
\end{align*}
which is second order cone representable.
Motivated by Lemma~\ref{lemma:equivalence_between_perspective_relaxation_and_convexification}, we immediately see that the extended feasible set $\mathcal S$ defined in~\eqref{eq:S} admits the following perspective representation
\begin{align*}
    \mathcal{S} = \left\{ (\tau, \bbeta, \bz)  \;\middle|\;
    \begin{array}{l} 
        -M z_j\leq \bbeta_j \leq M z_j ~ j \in [p], \\
        \bz \in \{0, 1\}^p, \, \mathbf{1}^T \bz \leq k,  \\
        f(\bX \bbeta, \by) + \lambda_2 \sum_{j \in [p]} \beta_j^2 / z_j \leq \tau
    \end{array}
    \right\}.
\end{align*}
Plutting in this new perspective representation into Problem~\eqref{epigraph:formulation}, we can reformulate~\eqref{obj:original_sparse_problem} as follows
\begin{align}
    \label{obj:original_sparse_problem_perspective_formulation}
    P_{\text{MIP}}^\star = \left\{
    \begin{array}{cll}
        \min\limits_{\bbeta, \bz \in \R^p} & f(\bX \bbeta, \by) + \lambda_2 \sum_{j \in [p]} {\beta_j^2}/{z_j} \\[1ex]
        \text{\; s.t.} & \bz \in \{0, 1\}^p, \, \mathbf{1}^T \bz \leq k, \\[1ex]
        & -M z_j \leq \beta_j \leq M z_j ~ \forall j \in [p].
    \end{array}
    \right.
\end{align}
By relaxing the binary variables $z_j$ to the interval $[0, 1]$, we obtain the following strong convex relaxation of~\eqref{obj:original_sparse_problem_perspective_formulation}
\begin{align}
    \label{obj:original_sparse_problem_perspective_formulation_convex_relaxation}
    P_{\text{conv}}^\star = \left\{
    \begin{array}{cll}
        \min\limits_{\bbeta, \bz \in \R^p} & f(\bX \bbeta, \by) + \lambda_2 \sum_{j \in [p]} {\beta_j^2}/{z_j} \\[1ex]
        \text{\; s.t.} & \bz \in [0, 1]^p, \, \mathbf{1}^T \bz \leq k, \\[1ex]
        & -M z_j \leq \beta_j \leq M z_j ~ \forall j \in [p].
    \end{array}
    \right.
\end{align}
Although this is not the convex hull formulation due to the term $f(\bX \bbeta, \by)$, unlike in Lemma~\ref{lemma:equivalence_between_perspective_relaxation_and_convexification}, $P^\star_{\text{conv}}$ still provides a lower bound for Problem~\eqref{obj:original_sparse_problem}.

We can solve~\eqref{obj:original_sparse_problem_perspective_formulation_convex_relaxation} using standard conic optimization solvers like Mosek and Gurobi, which rely on IPMs for solving such subproblems in the BnB framework.
However, IPMs are computationally expensive and do not scale well for large datasets.
Alternatively, first-order conic solvers such as SCS~\cite{o2016conic}, based on ADMM, can be used.
While these methods are more scalable, they suffer from slow convergence rates and require solving linear systems at each iteration, which can also be computationally intensive for large instances. The main goal of the paper is to introduce an efficient and scalable first-order method to address these limitations.

\section{Methodology}
\label{sec:methodology}

We begin with reformulating~\eqref{obj:original_sparse_problem_perspective_formulation_convex_relaxation} as the following \textit{unconstrained} optimization problem
\begin{align}
    \label{obj:original_sparse_problem_convex_composite_reformulation}
    \min_{\bbeta} f(\bX \bbeta, \by) + 2 \lambda_2 \, g(\bbeta),
\end{align}
where the implicit function $g: \R^p \to \R \cup \{ \infty \}$ is defined~as
\begin{align}
    \label{eq:function_g_definition}
    g(\bbeta) = \left\{ 
    \begin{array}{cl}
        \min\limits_{\bz \in \R^p} & \frac{1}{2} \sum_{j \in [p]} \beta_j^2 / z_j \\[1ex]
        \st & \bz \in [0, 1]^p, \, \bm 1^\top \bz \leq k, \\
        & -M z_j \leq \beta_j \leq M z_j ~ \forall j \in [p].
    \end{array} 
    \right.
\end{align}
Here, we follow the standard convention that an infeasible minimization problem is assigned a value of $+\infty$. 
Note that $g$ is convex as convexity is preserved under partial minimization over a convex set~\citep[Theorem~5,3]{rockafellar1970convex}. 
Furthermore, as $f$ is assumed to be Lipschitz smooth and $g$ is non-smooth, problem~\eqref{obj:original_sparse_problem_convex_composite_reformulation} is an unconstrained optimization problem with a convex composite objective function. As such, it is amenable to be solved using the FISTA algorithm proposed in \citep{beck2009fast}. 

In the following, we first analyze the conjugate of $g$. We then propose an efficient numerical method to compute the proximal operator of $g^*$. This, in turn, enables us to compute the proximal operator of $g$, leading to an efficient implementation of the FISTA algorithm. 
To further enhance the performance of FISTA, we present an efficient approach to solve the minimization problem~\eqref{eq:function_g_definition}, which guides us in developing an effective restart procedure. Finally, we conclude this section by providing efficient lower bounds for each step of the BnB framework.

\subsection{Conjugate function $g^*$}
Recall that the conjugate of $g$ is defined as
\begin{align*}
    g^*(\bm \alpha) = \sup_{\bm \beta \in \R^p} ~ \bm \alpha^\top \bm \beta - g(\bm \beta).
\end{align*}
The following lemma gives a closed-form expression for $g^*$, where $\TopSum_k(\cdot)$ denotes the sum of the top $k$ largest elements, and $H_M: \R \to \R$ is the Huber loss function defined as
\begin{equation}
    H_M(\alpha_j) := \begin{cases}
        \frac{1}{2} \alpha_j^2 & \text{if } \vert{\alpha_j} \leq M \\
        M \vert{\alpha_j} - \frac{1}{2} M^2 & \text{if } \vert{\alpha_j} > M
    \end{cases}.
\end{equation}
For notational simplicity, we use the shorthand notation ${\bf H}_M(\balpha)$ to denote ${\bf H}_M(\balpha) = (H_M(\alpha_1), \dots, H_M(\alpha_p))$.

\begin{lemma}
    \label{lemma:fenchel_conjugate_of_g_closed_form_expression}
    The conjugate of $g$ is given by
    \begin{equation}
        g^*(\balpha) = \TopSum_k({\bf H}_M(\balpha)).
    \end{equation}
\end{lemma}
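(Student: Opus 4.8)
The plan is to compute the conjugate directly from its definition by exploiting that $g$ is itself a partial minimization, so that $-g$ becomes a partial maximization and the two suprema merge into a single joint maximization. Writing the feasible set for $\bz$ as $\mathcal Z = \{\bz \in [0,1]^p : \mathbf 1^\top \bz \le k\}$ and recalling that the constraint $-Mz_j \le \beta_j \le Mz_j$ couples $\bbeta$ and $\bz$, I would start from
$$ g^*(\balpha) = \sup_{\bbeta}\Big[\balpha^\top\bbeta - g(\bbeta)\Big] = \sup_{\bbeta}\ \sup_{\bz\in\mathcal Z,\ |\beta_j|\le Mz_j}\Big[\balpha^\top\bbeta - \tfrac12\textstyle\sum_{j}\beta_j^2/z_j\Big], $$
using $-\min_{\bz}(\cdot) = \sup_{\bz}(-(\cdot))$ and merging the outer and inner suprema into a joint supremum over $(\bbeta,\bz)$. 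Points $\bbeta$ with $g(\bbeta)=+\infty$ contribute $-\infty$ and may be ignored, so restricting to feasible pairs is harmless.

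Next I would solve the inner maximization over $\bbeta$ for fixed $\bz$. Since both the objective and the constraints decouple across coordinates, this reduces to the scalar problems $\max_{|\beta_j|\le Mz_j}\big[\alpha_j\beta_j - \beta_j^2/(2z_j)\big]$ for each $j$ with $z_j>0$ (the case $z_j=0$ forces $\beta_j=0$ and contributes $0$, consistent with the convention $\beta_j^2/z_j=0$). The unconstrained optimizer is $\beta_j^\star = \alpha_j z_j$, which is feasible exactly when $|\alpha_j|\le M$; a short case analysis then gives optimal value $\tfrac12\alpha_j^2 z_j$ when $|\alpha_j|\le M$ and, clamping $\beta_j$ to the boundary $\sgn(\alpha_j)Mz_j$, value $z_j\big(M|\alpha_j| - \tfrac12 M^2\big)$ when $|\alpha_j|>M$. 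Both cases collapse to the single expression $z_j H_M(\alpha_j)$, so the inner maximization yields $\sum_{j} z_j H_M(\alpha_j)$.

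It then remains to maximize the linear objective $\sum_j z_j H_M(\alpha_j)$ over $\mathcal Z$. Because the Huber loss is nonnegative, every coefficient $H_M(\alpha_j)\ge 0$, so the linear program $\max\{\sum_j z_j H_M(\alpha_j): \bz\in[0,1]^p,\ \mathbf 1^\top\bz \le k\}$ is optimized by spending the budget on the largest coefficients: the optimum is attained by setting $z_j=1$ for the indices of the $k$ largest values of $H_M(\alpha_j)$ and $z_j=0$ otherwise (no fractional solution can beat this, since the coefficients are nonnegative and $k\in\mathbb N$). This gives $g^*(\balpha)=\TopSum_k({\bf H}_M(\balpha))$, as claimed.

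I expect the main obstacle to be the coordinatewise constrained quadratic maximization, specifically handling the boundary case $z_j=0$ together with the $\beta_j^2/z_j$ convention, and verifying cleanly that the two regimes $|\alpha_j|\le M$ and $|\alpha_j|>M$ merge into a single Huber term; by contrast, the merging of the nested suprema at the start and the final top-$k$ linear-programming argument should be routine.
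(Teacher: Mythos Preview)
Your proposal is correct and follows essentially the same approach as the paper: merge the suprema over $\bbeta$ and $\bz$, solve the coordinatewise constrained quadratic in $\bbeta$ for fixed $\bz$ to obtain $\sum_j z_j H_M(\alpha_j)$, then maximize this linear function over $\mathcal Z$ to get $\TopSum_k(\mathbf H_M(\balpha))$. Your explicit case analysis $|\alpha_j|\le M$ versus $|\alpha_j|>M$ and your remark that $H_M\ge 0$ (needed so the budget constraint binds at the top-$k$ indices) are both a bit more explicit than the paper's version, but the argument is the same.
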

This closed-form expression enables us to compute the proximal of $g^*$. Note that while the proximal operators of both $\TopSum_k$ and ${\bf H}_M$ functions are known (see, for example, \citep[Examples~6.50 \& 6.54]{beck2017first}), the conjugate function $g^*$ is defined as the composition of these two functions.
However, there is no general formula to derive the proximal operator of a composition of two functions based on the proximal operators of the individual functions. In the next section, we see how to bypass this compositional difficulty.

\subsection{Proximal operator of $g^*$}
Recall that the proximal operator of $g^*$ with weight parameter $\rho > 0$ is defined as
\begin{align}
    \label{eq:prox:g*}
    \prox_{\rho g^*}(\bm \mu) = \argmin_{\bm \alpha \in \R^p} ~ \frac{1}{2} \Vert{\bm \alpha - \bm \mu}_2^2 + \rho g^*(\bm \alpha).
\end{align}
The evaluation of $\prox_{\rho g^*}$ involves a minimization problem that can be reformulated as a convex SOCP problem. 
Generic solvers based on IPM and ADMM require solving systems of linear equations. This results in cubic time complexity per iteration, making them computationally expensive, particularly for large-scale problems. 
These methods also cannot return \emph{exact} solutions. 
The lack of exactness can affect the stability and reliability of the proximal operator, which is crucial for the convergence of the FISTA algorithm. 
Inspired by~\citep{busing2022monotone}, we present Algorithm~\ref{alg:PAVA_algorithm}, a customized pooled adjacent violators algorithm that provides an exact evaluation of $\prox_{\rho g^*}$ in linear time after performing a simple 1D sorting step.

\begin{theorem}
    \label{theorem:pava_algorithm_linear_time_complexity_and_exact_solution}
    For any $\bmu \in \R^p$, Algorithm~\ref{alg:PAVA_algorithm} returns the \textit{exact} evaluation of $\prox_{\rho g^*}(\bmu)$ in ${\mathcal O}(p \log p)$.
\end{theorem}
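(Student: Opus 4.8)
The plan is to transform the evaluation of $\prox_{\rho g^*}$ into a separable‑convex isotonic regression, for which PAVA is exact and runs in linear time once the input is sorted. Using the closed form from Lemma~\ref{lemma:fenchel_conjugate_of_g_closed_form_expression}, the defining problem~\eqref{eq:prox:g*} reads
\begin{align*}
\prox_{\rho g^*}(\bmu) = \argmin_{\balpha \in \R^p} \ \tfrac{1}{2}\Vert{\balpha - \bmu}_2^2 + \rho\,\TopSum_k({\bf H}_M(\balpha)).
\end{align*}
First I would reduce to nonnegative magnitudes. Since $H_M(\alpha_j) = H_M(\vert{\alpha_j})$ depends only on $\vert{\alpha_j}$, replacing $\alpha_j$ by $\sgn(\mu_j)\vert{\alpha_j}$ leaves the penalty unchanged while never increasing the quadratic misfit; hence there is an optimizer with $\alpha_j^\star = \sgn(\mu_j)\,a_j^\star$ and $a_j^\star \ge 0$, so the problem reduces to the magnitudes $m_j := \vert{\mu_j}$ and nonnegative variables $a_j := \vert{\alpha_j}$.

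Next I would fix the correct ordering by a rearrangement argument. The map $\balpha \mapsto \TopSum_k({\bf H}_M(\balpha))$ is permutation‑invariant, whereas the quadratic term $\tfrac12\sum_j (a_j - m_j)^2$ is minimized, over all permutations of the $a_j$, when the $a_j$ are sorted in the same order as the $m_j$: exchanging a violating pair $i<j$ with $m_i \ge m_j$ but $a_i < a_j$ decreases the quadratic term by $(a_j - a_i)(m_i - m_j) \ge 0$ and leaves the penalty intact. Thus, after sorting so that $m_1 \ge \cdots \ge m_p$ (the $\mathcal O(p \log p)$ step), there is an optimal solution with $a_1 \ge \cdots \ge a_p \ge 0$. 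Because $H_M$ is nondecreasing on $\R_+$, this ordering makes the top $k$ entries of ${\bf H}_M(\ba)$ exactly the first $k$, so the penalty collapses to $\sum_{j=1}^k H_M(a_j)$ and the problem becomes
\begin{align*}
\min_{a_1 \ge \cdots \ge a_p \ge 0}\ \sum_{j=1}^p \phi_j(a_j), \qquad \phi_j(a) = \tfrac12 (a - m_j)^2 + \rho\, H_M(a)\,\mathbf{1}\{j \le k\}.
\end{align*}
A short comparison of optima confirms this reduction is lossless: any monotone minimizer of this restricted problem achieves the original objective value, and $\TopSum_k \ge \sum_{j\le k}$ gives the reverse inequality.

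This is a separable convex isotonic regression (each $\phi_j$ is convex and piecewise quadratic), and the classical theory of PAVA, as used in~\citep{busing2022monotone}, guarantees that pooling adjacent violators returns its exact global minimizer. The remaining technical point, which I expect to be the main obstacle, is to verify that each pooled block is solvable in closed form and constant time, so that the whole PAVA pass is linear. Each $\phi_j$ has a single kink at $a = M$ from the Huber term, so the objective of any contiguous block is convex and piecewise quadratic with that same single breakpoint; its minimizer is selected from two candidate quadratics according to whether the pooled value falls below or above $M$. I would therefore carry for each block only its aggregate statistics — the block size, the count of penalized indices ($j \le k$), and $\sum_j m_j$ — from which the pooled minimizer and the branch decision at the kink follow in $\mathcal O(1)$ via a brief case analysis on the two Huber regimes. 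Since PAVA performs $\mathcal O(p)$ pool/merge operations, each $\mathcal O(1)$, Algorithm~\ref{alg:PAVA_algorithm} runs in $\mathcal O(p)$ after the sort, and recovering $\alpha_j^\star = \sgn(\mu_j)\,a_j^\star$ costs $\mathcal O(p)$; adding the sorting cost yields the claimed $\mathcal O(p \log p)$ bound.
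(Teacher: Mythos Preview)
Your proposal is correct and follows essentially the same route as the paper: the paper's proof of Theorem~\ref{theorem:pava_algorithm_linear_time_complexity_and_exact_solution} simply invokes Lemmas~\ref{lemma:equivalence_between_proximal_operator_and_huber_isotonic_regression}, \ref{lemma:PAVA_algorithm_exact_solution}, and~\ref{lemma:PAVA_merging_linear_time_complexity}, and you have reconstructed all three in condensed form---the sign/rearrangement reduction to the sorted isotonic problem (Lemma~\ref{lemma:equivalence_between_proximal_operator_and_huber_isotonic_regression}), the appeal to classical PAVA correctness for separable convex objectives (Lemma~\ref{lemma:PAVA_algorithm_exact_solution}), and the $\mathcal O(1)$ pooled-block update via sufficient statistics (Lemma~\ref{lemma:PAVA_merging_linear_time_complexity}). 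The only stylistic difference is that the paper argues the sign- and order-preserving properties by contradiction, whereas you use a direct rearrangement inequality; both are equally valid.
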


\begin{algorithm}[tb]
\caption{Customized PAVA to solve $\text{prox}_{\rho g^*}(\bmu)$}
\label{alg:PAVA_algorithm}
\begin{flushleft}
\textbf{Input:} vector $\bmu$, scalar multiplier $\rho$, and threshold $M$ of the Huber loss function $H_M(\cdot)$\\
\end{flushleft}
\begin{algorithmic}[1]
    \STATE Initialize $\brho \in \mathbb{R}^n$ with $\rho_j = \rho$ if $j \in \{1, 2, ..., k\}$ and $\rho_j = 0$ otherwise.
    \STATE \COMMENT{Sort $\bmu$ such that $\vert{\mu_1} \geq \vert{\mu_2} \geq ... \geq \vert{\mu_p}$; $\bpi$ is the sorting order.}
    \STATE $\bmu, \bpi = \text{SpecialSort} (\bmu)$
    
    \STATE \COMMENT{STEP 1: Initialize a pool of $p$ blocks with start and end indices; each block initially has length equal to $1$}
    \STATE $\calJ = \{[1, 1], [2, 2], ..., [p, p]\}$

    \STATE \COMMENT{STEP 2: Initialize $\hat{\nu}_j$ in each block by ignoring the isotonic constraints}
    \FOR{$j=1, 2, \dots, p$} 
        \STATE $\hat{\nu}_j = \argmin_{\nu} \frac{1}{2} (\nu - \vert{\mu_j})^2 + \rho_j H_M(\nu)$ \label{alg_line:PAVA_algorithm_initialization_step}
    \ENDFOR
    
    \STATE \COMMENT{STEP 3: Whenever there is an isotonic constraint violation between two adjacent blocks, merge the two blocks by setting all values to be the minimizer of the objective function restricted to this merged block; use \textcolor{red}{Algorithm~\ref{alg:up_and_down_block_algorithm_for_merging_in_PAVA}} in~\ref{appendix_sec:proofs}}
    \WHILE{$\exists [a_1, a_2], [a_2+1, a_3] \in \calJ \text{ s.t. } \hat{\nu}_{a_1} < \hat{\nu}_{a_3}$}
        \STATE $\calJ = \calJ \setminus \{[a_1, a_2]\} \setminus \{[a_2+1, a_3]\} \cup \{[a_1, a_3]\}$
        \STATE $\hat{\nu}_{[a_1:a_3]} = \argmin\limits_{\nu} \sum\limits_{j=a_1}^{a_3} \left[ \frac{1}{2} (\nu - \vert{\mu_j})^2 + \rho_j H_M(\nu) \right]$ \label{alg_line:PAVA_algorithm_pooling_step}
    \ENDWHILE

    \STATE \COMMENT{Return $\hat{\bnu}$ with the inverse sorting order}
    \STATE \textbf{Return} $\text{sgn}(\bmu) \odot \bpi^{-1}(\hat{\bnu})$
    
\end{algorithmic}
\end{algorithm}

The proof relies on several auxiliary lemmas.
We start with the following lemma, which uncovers a close connection between the proximal operator of $g^*$ and the generalized isotonic regression problems.

\begin{lemma}
    \label{lemma:equivalence_between_proximal_operator_and_huber_isotonic_regression}
    For any $\bmu \in \R^p$, we have 
    $$\prox_{\rho g^*}(\bmu) = \sgn(\bmu) \odot \bnu^\star, $$ 
    where $\odot$ denotes the Hadamard (element-wise) product, $\bnu^\star$ is the unique solution of the following optimization problem
    \begin{align}
        \label{obj:KyFan_Huber_isotonic_regression}
        \begin{array}{cl}
            \min\limits_{\bnu \in \R^p} & \frac{1}{2} \sum_{j \in [p]} (\nu_j - \vert{\mu_j})^2 + \rho \sum_{j \in \calJ} H_M (\nu_j) \\[2ex]
            \st & \quad \nu_j \geq \nu_l \; \text{ if } \; \vert{\mu_j} \geq \vert{\mu_l} ~~ \forall j, l \in [p],
        \end{array} 
    \end{align}
    and $\calJ$ is the set of indices of the top $k$ largest elements of~$ \vert{\mu_j}, j \in [p]$. 
\end{lemma}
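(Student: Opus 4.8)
The plan is to reduce the prox evaluation to a nonnegative, monotone (isotonic) problem in three stages, after which the dynamic top-$k$ selection in $\TopSum_k$ collapses onto the fixed index set $\calJ$. Write the prox objective as $F(\balpha) = \tfrac{1}{2}\Vert{\balpha - \bmu}_2^2 + \rho \TopSum_k({\bf H}_M(\balpha))$, which is strongly convex and hence has a unique minimizer. Since $H_M$ is even, the penalty $\TopSum_k \circ {\bf H}_M$ depends on $\balpha$ only through the magnitudes $\vert{\alpha_j}$, so for fixed magnitudes the separable quadratic is minimized by aligning signs; thus the minimizer obeys $\alpha_j^\star = \sgn(\mu_j)\nu_j$. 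Substituting and again using evenness reduces $F$ to $\tfrac{1}{2}\sum_j (\nu_j - \vert{\mu_j})^2 + \rho\, \TopSum_k({\bf H}_M(\bnu))$, and the identity $(\nu_j - \vert{\mu_j})^2 - (\vert{\nu_j} - \vert{\mu_j})^2 = 2\vert{\mu_j}(\vert{\nu_j} - \nu_j) \ge 0$ shows that replacing any coordinate by its absolute value never increases the objective, forcing $\nu_j^\star \ge 0$. This already delivers the factorization $\prox_{\rho g^*}(\bmu) = \sgn(\bmu)\odot\bnu^\star$, reducing everything to the reduced problem $(\star)$ of minimizing $\tfrac{1}{2}\sum_j (\nu_j - \vert{\mu_j})^2 + \rho\, \TopSum_k({\bf H}_M(\bnu))$ over nonnegative $\bnu$.

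The second step is to show that $\bnu^\star$ is isotonic with respect to $\vert{\bmu}$, i.e. $\vert{\mu_j} \ge \vert{\mu_l}$ implies $\nu_j^\star \ge \nu_l^\star$. The key point is that $\TopSum_k({\bf H}_M(\cdot))$ is a \emph{symmetric} function of its argument, so interchanging the values of $\bnu$ at two coordinates leaves the penalty unchanged and only the quadratic coupling term is affected. Swapping the values at indices $j$ and $l$ alters the objective by exactly $2(\nu_j^\star - \nu_l^\star)(\vert{\mu_j} - \vert{\mu_l})$; if $\vert{\mu_j} > \vert{\mu_l}$ while $\nu_j^\star < \nu_l^\star$ this change is strictly negative, contradicting optimality, and the tie case $\vert{\mu_j} = \vert{\mu_l}$ forces $\nu_j^\star = \nu_l^\star$ by uniqueness under the zero-change swap. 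This is precisely the rearrangement inequality, and it establishes that $\bnu^\star$ satisfies the isotonic constraints of~\eqref{obj:KyFan_Huber_isotonic_regression}.

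The final step is to convert the dynamic $\TopSum_k$ into the fixed sum over $\calJ$. Since $\bnu^\star \ge 0$ and is isotonically ordered like $\vert{\bmu}$, and $H_M$ is nondecreasing on $[0,\infty)$, we get $H_M(\nu_j^\star) \ge H_M(\nu_l^\star)$ whenever $\vert{\mu_j} \ge \vert{\mu_l}$, so the $k$ largest entries of ${\bf H}_M(\bnu^\star)$ sit exactly at $\calJ$ and $\TopSum_k({\bf H}_M(\bnu^\star)) = \sum_{j\in\calJ} H_M(\nu_j^\star)$. Writing $G$ for the objective of~\eqref{obj:KyFan_Huber_isotonic_regression}, one has $F \ge G$ pointwise (the top-$k$ sum dominates any fixed $k$-subset sum) with equality on the nonnegative isotonic region. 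To conclude that $\bnu^\star$ is also the unique minimizer of~\eqref{obj:KyFan_Huber_isotonic_regression}, take any isotonic $\bnu$ and clamp it to $\bnu_+ = \max(\bnu,\vzero)$, which stays isotonic and satisfies $G(\bnu_+) \le G(\bnu)$ (clamping moves each coordinate toward the nonnegative target $\vert{\mu_j}$ and lowers $H_M$); then $G(\bnu) \ge G(\bnu_+) = F(\bnu_+) \ge F(\bnu^\star) = G(\bnu^\star)$, so $\bnu^\star$ minimizes $G$ over the isotonic set, and uniqueness (strong convexity over the polyhedron) identifies it with the solution of~\eqref{obj:KyFan_Huber_isotonic_regression}.

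The main obstacle is exactly this last step: because $\TopSum_k$ is nonsmooth and selects its active set dynamically, replacing it by the static sum over $\calJ$ is legitimate only after monotonicity of $\bnu^\star$ is in hand and the nonnegativity of the minimizers on both sides is verified, with added care for ties in $\vert{\bmu}$ (where $\calJ$ is not unique but the isotonic constraints still yield a well-posed, uniquely solvable problem). By contrast, the sign/nonnegativity reduction and the exchange argument are comparatively routine.
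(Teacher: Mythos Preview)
Your proposal is correct and follows essentially the same route as the paper: a sign reduction exploiting the evenness of $H_M$, an exchange (rearrangement) argument to establish that the minimizer is isotonic with respect to $\vert{\bmu}$, and then a replacement of $\TopSum_k$ by the fixed sum over $\calJ$. Two remarks: (i) the swap changes the objective by $(\nu_j^\star - \nu_l^\star)(\vert{\mu_j} - \vert{\mu_l})$, not twice that, though this does not affect the contradiction; (ii) your final step---the pointwise inequality $F\ge G$ with equality on the nonnegative isotonic region, together with the clamping argument to show the minimizers of the two problems coincide---is more explicit than the paper's treatment, which simply asserts the equivalence after the two structural properties are in hand and dismisses the nonnegativity constraint in one line.
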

Problem~\eqref{obj:KyFan_Huber_isotonic_regression} replaces the $\TopSum_k$ in~\eqref{eq:prox:g*} from the conjugate function $g^*$ (as shown in Lemma~\ref{lemma:fenchel_conjugate_of_g_closed_form_expression}) with linear constraints.
While this may appear computationally complex, it actually converts the problem into an instance of isotonic regression~\citep{best1990active}. Such problems can be solved exactly in linear time after performing a simple sorting step.
The procedure is known as PAVA~\citep{busing2022monotone}. Specifically, Algorithm~\ref{alg:PAVA_algorithm} implements a customized PAVA variant designed to compute $\prox_{\rho g^*}$ exactly.
The following lemma shows that the vector generated by Algorithm~\ref{alg:PAVA_algorithm} is an exact solution to~\eqref{obj:KyFan_Huber_isotonic_regression}.
Intuitively, Algorithm~\ref{alg:PAVA_algorithm} iteratively merges adjacent blocks until no isotonic constraint violations remain, at which point the resulting vector is guaranteed to be the optimal solution to~\eqref{obj:KyFan_Huber_isotonic_regression}.

\begin{lemma}
    \label{lemma:PAVA_algorithm_exact_solution}
    The vector $\hat \bnu$ in Algorithm~\ref{alg:PAVA_algorithm} solves~\eqref{obj:KyFan_Huber_isotonic_regression} exactly.
\end{lemma}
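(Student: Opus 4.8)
The plan is to recognize~\eqref{obj:KyFan_Huber_isotonic_regression} as a \emph{separable convex isotonic regression} problem and to verify that Algorithm~\ref{alg:PAVA_algorithm} terminates at a point satisfying its Karush--Kuhn--Tucker (KKT) conditions. After the sorting step the index order satisfies $\vert{\mu_1} \geq \cdots \geq \vert{\mu_p}$, so that $\calJ = \{1, \dots, k\}$ and the constraints collapse to the chain $\nu_1 \geq \nu_2 \geq \cdots \geq \nu_p$. Writing the per-coordinate costs as $\phi_j(\nu) = \tfrac{1}{2}(\nu - \vert{\mu_j})^2 + \rho_j H_M(\nu)$ with $\rho_j = \rho$ for $j \leq k$ and $\rho_j = 0$ otherwise, the objective becomes $\sum_{j \in [p]} \phi_j(\nu_j)$. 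Each $\phi_j$ is strictly convex, continuously differentiable (since $H_M$ is $C^1$), and coercive, so the problem is strictly convex with a unique minimizer---matching the uniqueness already asserted in Lemma~\ref{lemma:equivalence_between_proximal_operator_and_huber_isotonic_regression}. It therefore suffices to show that the vector $\hat\bnu$ returned by Algorithm~\ref{alg:PAVA_algorithm} is \emph{a} minimizer.

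First I would record the optimality characterization. Introducing multipliers $\lambda_j \geq 0$ for the constraints $\nu_j - \nu_{j+1} \geq 0$, stationarity gives $\phi_j'(\nu_j) = \lambda_{j-1} - \lambda_j$ (with $\lambda_0 = \lambda_p = 0$), together with complementary slackness. Telescoping these identities over any maximal run of active constraints yields the standard \emph{pooled} characterization: $\hat\bnu$ is optimal if and only if the coordinates can be grouped into contiguous blocks on which $\hat\bnu$ is constant, each block value equals the unconstrained minimizer $\argmin_\nu \sum_{j \in B} \phi_j(\nu)$ of the pooled block cost, the block values are non-increasing across blocks, and a cumulative-gradient (Kuhn--Tucker) sign condition holds on every prefix. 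This is precisely the invariant that Algorithm~\ref{alg:PAVA_algorithm} maintains: Step~2 initializes each singleton block to its unconstrained minimizer, and Step~3 re-optimizes every merged block to the pooled minimizer via Algorithm~\ref{alg:up_and_down_block_algorithm_for_merging_in_PAVA}.

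The crux---and the main obstacle---is the \emph{pooling (sandwiching) lemma}: whenever two adjacent blocks $B_1, B_2$ violate monotonicity, the minimizer of the merged cost $\Phi_{B_1 \cup B_2} = \sum_{j \in B_1 \cup B_2} \phi_j$ lies strictly between the two individual block minimizers. I would prove this from the strict monotonicity of the pooled derivative $\Phi_B' = \sum_{j \in B} \phi_j'$, which is continuous, strictly increasing, and whose unique root is the block minimizer; evaluating $\Phi_{B_1 \cup B_2}'$ at each of the two original minimizers and applying the intermediate value theorem pins the merged root between them. This property guarantees that each merge can only move block values toward feasibility, that it never introduces a violation with a block on the far side, and hence that the final partition is order-independent and its block values are genuinely non-increasing. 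Combined with the prefix sign condition---which I would verify is preserved because merging only occurs at a detected violation---the terminal $\hat\bnu$ satisfies the KKT system above and is therefore the unique optimizer.

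Finally I would dispatch the remaining routine points: that the one-dimensional pooled minimization in Algorithm~\ref{alg:up_and_down_block_algorithm_for_merging_in_PAVA} is solved \emph{exactly}, since $\Phi_B'$ is piecewise linear (a sum of affine pieces from the quadratic terms and clipped-affine pieces from the Huber derivatives), so its root admits a closed form once the active Huber regime is identified; and that the while loop terminates, since each iteration strictly decreases the number of blocks. These observations, together with the sandwiching lemma and the KKT characterization, establish that $\hat\bnu$ solves~\eqref{obj:KyFan_Huber_isotonic_regression} exactly.
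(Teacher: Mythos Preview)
Your proposal is correct and identifies the same underlying structure as the paper: after sorting, problem~\eqref{obj:KyFan_Huber_isotonic_regression} is a separable strictly convex isotonic regression, and Algorithm~\ref{alg:PAVA_algorithm} is the standard pool-adjacent-violators procedure for such problems. The difference is one of presentation rather than substance. The paper's proof is terse: it invokes two structural properties of generalized isotonic regression from \citep{best2000minimizing,ahuja2001fast}---(i) that if two single-valued adjacent blocks violate monotonicity then the constrained optimum over their union is again single-valued, and (ii) that a block-constant vector with block-wise unconstrained minimizers and no inter-block violation is globally optimal---and then observes that the algorithm maintains these invariants. You instead derive both facts from scratch: your sandwiching lemma (via the intermediate value theorem applied to the strictly increasing pooled derivative $\Phi_B'$) is exactly Property~(i), and your KKT/prefix-gradient characterization is Property~(ii). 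Your route is more self-contained and makes explicit why the per-block subproblem on line~\ref{alg_line:PAVA_algorithm_pooling_step} admits a closed-form root (piecewise-linear $\Phi_B'$), a detail the paper defers to Algorithm~\ref{alg:up_and_down_block_algorithm_for_merging_in_PAVA}; the paper's route is shorter because it outsources these standard facts to the isotonic-regression literature.
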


Finally, the merging process in Algorithm~\ref{alg:PAVA_algorithm} can be executed efficiently. Intuitively, each element of $\bmu$ is visited at most twice; once during its initial processing and once when it is included in a merged block. This ensures that the process achieves a linear time complexity.

\begin{lemma}
    \label{lemma:PAVA_merging_linear_time_complexity}
    The merging step (lines 11-14) in Algorithm~\ref{alg:PAVA_algorithm} can be performed in linear time complexity $\mathcal O(p)$.
\end{lemma}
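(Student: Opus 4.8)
The plan is to combine two ingredients: an amortized counting argument that bounds the total number of block merges the while loop can ever perform, and a demonstration that each individual merge — \emph{including} the recomputation of the block minimizer on line~\ref{alg_line:PAVA_algorithm_pooling_step} — costs only $\mathcal{O}(1)$. For the second ingredient I would realize the abstract while loop through the standard up-and-down-block (stack-based) implementation referenced as Algorithm~\ref{alg:up_and_down_block_algorithm_for_merging_in_PAVA} in~\ref{appendix_sec:proofs}: process the singleton blocks in the sorted index order $1, \dots, p$, maintain a stack of blocks whose values $\hat\nu$ are already non-increasing, and upon inserting a new block repeatedly pop-and-merge with the current top whenever the violation $\hat\nu_{\mathrm{left}} < \hat\nu_{\mathrm{right}}$ holds, rechecking only against the \emph{new} top after each merge. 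This localizes all comparisons and avoids rescanning $\calJ$ for a violation.

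The key technical step is to show that the minimizer on line~\ref{alg_line:PAVA_algorithm_pooling_step} has a closed form computable in constant time from three \emph{additive} sufficient statistics attached to each block $B$: its cardinality $m = |B|$, the partial sum $s = \sum_{j \in B} \vert{\mu_j}$, and the effective Huber count $c = |\{ j \in B : j \leq k \}|$ (recall $\rho_j = \rho$ exactly on the top-$k$ sorted indices and $0$ otherwise). Expanding the restricted objective gives
\[
\phi(\nu) = \sum_{j \in B} \tfrac{1}{2}\bigl(\nu - \vert{\mu_j}\bigr)^2 + \rho \sum_{j \in B,\, j \leq k} H_M(\nu) = \tfrac{m}{2}\nu^2 - s\,\nu + \rho c\, H_M(\nu) + \mathrm{const},
\]
which for $\nu \in [0, M]$ reduces to $\tfrac{m + \rho c}{2}\nu^2 - s\,\nu + \mathrm{const}$, minimized at $\nu = s/(m + \rho c)$, and for $\nu > M$ becomes $\tfrac{m}{2}\nu^2 - (s - \rho c M)\nu + \mathrm{const}$, minimized at $\nu = (s - \rho c M)/m$. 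Since $m \geq 1$ the denominators never vanish, and by convexity of $\phi$ exactly one candidate is admissible, so $\hat\nu_{[a_1:a_3]}$ is evaluated in $\mathcal{O}(1)$. Crucially, when two adjacent (hence index-disjoint) blocks merge, the three statistics simply add, $m = m_1 + m_2$, $s = s_1 + s_2$, $c = c_1 + c_2$, so the new minimizer is obtained without ever revisiting the pooled elements; the same closed form with $m=1$ also gives the $\mathcal{O}(p)$ initialization on line~\ref{alg_line:PAVA_algorithm_initialization_step}.

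Granting the $\mathcal{O}(1)$ merge, the amortized bound is routine: each of the $p$ singletons is pushed onto the stack exactly once, every merge pops exactly one block, and a popped block is never reintroduced, so the total number of pops — and therefore of merges and of the violation checks that trigger them — is at most $p-1$ (equivalently, $|\calJ|$ starts at $p$, drops by one per merge, and stays $\geq 1$). Summing the $\mathcal{O}(1)$ cost per push, per check, and per merge yields the claimed $\mathcal{O}(p)$ total for lines 11--14, matching the informal ``each element is visited at most twice'' intuition. The main obstacle is the second paragraph: certifying that the Huber-penalized pooled subproblem has an exact $\mathcal{O}(1)$ solution and that the requisite statistics are additive under pooling. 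Once these are in place, the counting argument is the classical PAVA analysis and presents no difficulty.
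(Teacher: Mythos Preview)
Your proposal is correct and follows essentially the same approach as the paper's proof: both realize lines 11--14 via the up-and-down-block implementation (the paper's Algorithm~\ref{alg:up_and_down_block_algorithm_for_merging_in_PAVA}), maintain additive per-block statistics---your $(m,s,c)$ are exactly the paper's $(N_b, S_b, P_b/\rho)$---to evaluate the pooled minimizer in $\mathcal O(1)$, and then invoke the same amortized counting argument (at most $p-1$ merges since the block count starts at $p$ and strictly decreases). The only cosmetic difference is that you derive the pooled minimizer by an explicit two-piece case analysis on $\nu \lessgtr M$, whereas the paper packages the same computation as $\prox_{(P_b/N_b) H_M}(S_b/N_b)$; both are equivalent and correct.
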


Armed with these lemmas, one can easily prove Theorem~\ref{theorem:pava_algorithm_linear_time_complexity_and_exact_solution}. Details are provided in~\ref{appendix_sec:proofs}.

\vspace{-2mm}
\subsection{FISTA algorithm with restart}
A critical computational step in FISTA is the efficient evaluation of the proximal operator of $g$. By the extended Moreau decomposition theorem~\citep[Theorem~6.45]{beck2017first}, for any weight parameter $\rho > 0$ and any point $\bmu \in \R^p$, the proximal operators of $g$ and $g^*$ satisfies
\begin{align}
    \label{eq:Moreaus_identity}
    \prox_{\rho^{-1} g}(\bmu) = \bmu - \rho^{-1} \, \prox_{\rho g^*} \left( \rho \bmu \right).
\end{align}
Hence, together with Theorem~\ref{theorem:pava_algorithm_linear_time_complexity_and_exact_solution}, we can compute exactly $\prox_{\rho^{-1} g}$ using Algorithm~\ref{alg:PAVA_algorithm} with log-linear time complexity. This enables an efficient implementation of the FISTA algorithm. 
We note that when $M = \infty$, the implicit function $g$ is closely related to the $k$-support norm. For this special case, \citet{argyriou2012sparse} and \citet{mcdonald2016new} have developed efficient algorithms that compute the proximal operator of $g$ in the primal space with log-linear time complexity. 
Our work extends these results by enabling the incorporation of big-M constraints through a dual analysis.

Besides, the vanilla FISTA algorithm is prone to oscillatory behavior, which results in a sub-linear convergence rate of $\mathcal O(1/T^2)$ after $T$ iterations. 
In the following, we further accelerate the empirical convergence performance of the FISTA algorithm by incorporating a simple restart strategy based on the function value, originally proposed in~\citep{o2015adaptive}.
In simple terms, the restart strategy operates as follows: if the objective function increases during the iterative process, the momentum coefficient is reset to its initial value.
The effectiveness of the restart strategy hinges on the efficient computation of the loss function. This task essentially reduces to evaluating the implicit function $g$ defined in~\eqref{eq:function_g_definition}, which would involve solving a SOCP problem.
However, the value of $g$ can be computed efficiently by leveraging the majorization technique~\citep{kim2022convexification}, as shown in Algorithm~\ref{alg:compute_g_value_algorithm}.

\begin{algorithm}[tb]
    \caption{Algorithm to compute $g(\bbeta)$}
    \label{alg:compute_g_value_algorithm}
    \begin{flushleft}
    \textbf{Input:} vector $\bbeta \in \R^p$ from Step 2 Line 8 in Algorithm~\ref{alg:main_algorithm}. 
    \end{flushleft}
    \begin{algorithmic}[1]
        \STATE Initialize: $\bpsi = \boldsymbol{0} \in \mathbb{R}^k$
        \STATE Sort $\bbeta$ partially such that \\ \vspace{0.3em}\hspace*{2em}
        $\vert{\beta_1} \geq \vert{\beta_2} \geq ... \geq \vert{\beta_k} \geq \max\limits_{k+1, ..., p} \{ \vert{\beta_j} \}$
        \STATE $s = \sum_{j=1}^p \vert{\beta_j}$
        \STATE \COMMENT{Compute the majorization vector $\bpsi$; see Appendix~\ref{theorem:compute_g_value_algorithm_correctness} for its definition and connection with $\bbeta$}
        \FOR{$j=1, 2, \dots, k$}
            \STATE $\overline{s} = s / (k - j + 1)$
            \STATE \textbf{if} $\overline{s} \geq \vert{\beta_j}$ \textbf{then} $\psi_{j:k} = \overline{s}$; break \textbf{else} $\psi_j = \vert{\beta_j}$
            \STATE $s = s - \vert{\beta_j}$
        \ENDFOR
        \STATE \textbf{return} $\frac{1}{2} \sum_{j=1}^k \psi_j^2$
    \end{algorithmic}
\end{algorithm}

\begin{theorem}
    \label{theorem:compute_g_value_algorithm_correctness}
        For any $\bbeta \in \R^p$, Algorithm~\ref{alg:compute_g_value_algorithm} computes the exact value of $g(\bbeta)$, defined in~\eqref{eq:function_g_definition}, in $\mathcal O(p + p \log k)$.
\vspace{-3mm}
\end{theorem}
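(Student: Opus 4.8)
The plan is to analyze the convex inner problem \eqref{eq:function_g_definition} defining $g$ directly through its KKT system, show that the greedy loop in Algorithm~\ref{alg:compute_g_value_algorithm} implements the resulting ``water-filling'' solution, match the returned value, and finally account for the running time. Since both the objective $\tfrac12\sum_j \beta_j^2/z_j$ and the constraints $|\beta_j|\le M z_j$ depend on $\bbeta$ only through $b_j:=|\beta_j|$, I would first argue that $g$ depends only on $|\bbeta|$ and reduce to the sorted vector $b_1\ge\cdots\ge b_p\ge 0$. I would then record that $\bbeta\in\mathrm{dom}(g)$ iff $\max_j b_j\le M$ and $\sum_j b_j\le kM$ (the latter because the cheapest feasible $\bz$ is $z_j=b_j/M$), which holds for every iterate fed to the algorithm because it is an output of $\prox_{\rho^{-1}g}$ and hence lands in $\mathrm{dom}(g)$. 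The key structural observation is that each term $b_j^2/z_j$ is decreasing in $z_j$, so the lower box constraints $z_j\ge b_j/M$ are never active at the optimum; combined with feasibility this is exactly why $M$ never appears in the computation. The inner problem thus collapses to minimizing $\tfrac12\sum_j b_j^2/z_j$ over $0\le z_j\le 1$, $\sum_j z_j\le k$.

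Writing the Lagrangian with multiplier $\mu\ge 0$ for the budget and $\lambda_j^{\mathrm{up}}\ge 0$ for $z_j\le 1$, stationarity yields $z_j^\star=\min\{1,\,b_j/\bar{s}\}$ with water level $\bar{s}:=\sqrt{2\mu}$: coordinates with $b_j>\bar{s}$ saturate at $z_j=1$, and the rest take the interior value $b_j/\bar{s}$. I would then show the greedy loop computes exactly this level and the saturation index. Letting $j^\star$ be the first index at which the \textbf{if} test succeeds and $\bar{s}^\star:=(\sum_{l\ge j^\star} b_l)/(k-j^\star+1)$, sortedness of $b$ gives the two inequalities $\bar{s}^\star\ge b_{j^\star}$ (the break condition) and $\bar{s}^\star< b_{j^\star-1}$ (the negation of the previous non-break, after a one-line rearrangement of $(b_{j^\star-1}+\sum_{l\ge j^\star}b_l)/(k-j^\star+2)<b_{j^\star-1}$). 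These verify the full KKT system: $z_j^\star=1$ for $j<j^\star$ with $\lambda_j^{\mathrm{up}}=\tfrac12(b_j^2-(\bar{s}^\star)^2)>0$, $z_j^\star=b_j/\bar{s}^\star\in[0,1]$ for $j\ge j^\star$, and a tight budget $\sum_j z_j^\star=(j^\star-1)+(\sum_{l\ge j^\star}b_l)/\bar{s}^\star=k$. The degenerate slack case (fewer than $k$ nonzeros, where $\bar{s}^\star=0$) is dispatched separately with $\mu=0$.

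Substituting $z^\star$ back yields $g(\bbeta)=\tfrac12\sum_{j<j^\star}b_j^2+\tfrac12(k-j^\star+1)(\bar{s}^\star)^2$, which is precisely $\tfrac12\sum_{j=1}^k\psi_j^2$, since the loop sets $\psi_j=b_j$ for $j<j^\star$ and $\psi_{j^\star}=\cdots=\psi_k=\bar{s}^\star$. I would emphasize the one subtle point that makes summing only $\psi_{1:k}$ correct: the tail coordinates $j>k$ contribute $\tfrac12\bar{s}^\star\sum_{j>k}b_j$ to the true objective, and this is absorbed exactly because $s$ is initialized with the \emph{full} sum $\sum_{j=1}^p b_j$, so the equal redistribution over positions $j^\star,\dots,k$ accounts for it. For the running time, the partial sort producing $b_1\ge\cdots\ge b_k\ge\max_{j>k}b_j$ is done by scanning $p$ elements against a size-$k$ min-heap in $\calO(p\log k)$, the sum $s$ costs $\calO(p)$, and the loop runs in $\calO(k)$, giving $\calO(p+p\log k)$.

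The crux, and the step requiring the most care, is the second paragraph: proving that the greedy stopping rule identifies the true optimal saturation boundary, i.e.\ that the primal–dual pair built from $\bar{s}^\star$ and $j^\star$ satisfies all KKT conditions. This rests entirely on the two monotonicity inequalities $\bar{s}^\star<b_{j^\star-1}$ and $\bar{s}^\star\ge b_{j^\star}$ forced by sortedness, together with a careful treatment of ties and of the budget-slack degenerate case; establishing dual feasibility $\lambda_j^{\mathrm{up}}\ge 0$ for every saturated coordinate is where the argument must be made airtight.
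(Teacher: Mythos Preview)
Your proposal is correct and takes a genuinely different route from the paper. The paper establishes correctness via a majorization reformulation: invoking the convex-hull characterization of \citet{kim2022convexification} for sign- and permutation-invariant sets, it rewrites
\[
g(\bbeta)=\min_{\bphi}\Bigl\{\tfrac12\textstyle\sum_{j}\phi_j^2:\;|\bbeta|\preceq_m\bphi,\;0\le\phi_k\le\cdots\le\phi_1\le M,\;\phi_{k+1}=\cdots=\phi_p=0\Bigr\},
\]
and then argues by a short case split (whether the running average $\bar s$ exceeds $b_j$) that the loop recovers the optimal majorizing vector $\bpsi$. Your approach stays in the original $\bz$-parameterization, analyzes the KKT system directly, and reads off the water-filling structure $z_j^\star=\min\{1,b_j/\bar s\}$; the loop is then identified as a search for the saturation boundary $j^\star$ and the level $\bar s^\star$.

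What each buys: your argument is self-contained (no external convex-hull result needed), gives an operational interpretation of $\bar s^\star$ as essentially $\sqrt{2\mu}$, and makes the disappearance of $M$ transparent---feasibility $\sum_j b_j\le kM$ forces $\bar s^\star\le M$, so the constraints $z_j\ge b_j/M$ never bind strictly. The paper's route, by contrast, explains why $\bpsi$ is called the ``majorization vector'' in Algorithm~\ref{alg:compute_g_value_algorithm} and connects the computation to a broader lifted-representation technique; its case analysis is terser but leans on the cited theorem. Both handle the tail $j>k$ identically---through the full sum $s=\sum_{j=1}^p b_j$---and both arrive at the same complexity breakdown. One minor point you address that the paper leaves implicit: the theorem is stated for all $\bbeta\in\R^p$, yet both arguments (and the algorithm) tacitly assume $\bbeta\in\mathrm{dom}(g)$; your observation that every input to Algorithm~\ref{alg:compute_g_value_algorithm} inside Algorithm~\ref{alg:main_algorithm} is a proximal output, hence feasible, is the right way to close this.
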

Theorem~\ref{theorem:compute_g_value_algorithm_correctness} guarantees that Algorithm~\ref{alg:compute_g_value_algorithm} can efficiently compute the value of $g(\bbeta)$, which is crucial for our value-based restart strategy to be effective in practice.
Empirically, we observe that the function value-based restart strategy can accelerate FISTA from the sub-linear convergence rate of \(O(1/T^2)\) to a linear convergence rate in many empirical results.
To the best of our knowledge, \textit{this is the first linear convergence result of using a first-order method in the MIP context} when calculating the lower bounds in the BnB tree.
The FISTA algorithm is summarized in Algorithm~\ref{alg:main_algorithm}.

\begin{algorithm}[!b]
    \caption{Main algorithm to solve problem \eqref{obj:original_sparse_problem_perspective_formulation_convex_relaxation}}
    \label{alg:main_algorithm}
    \begin{flushleft}
    \textbf{Input:} number of iterations $T$, coefficient $\lambda_2$ for the $\ell_2$ regularization, and step size $L$ (Lipschitz-continuity parameter of $\nabla F(\bbeta)$)  
    \end{flushleft}
    \begin{algorithmic}[1]
        \STATE Initialize: $\bbeta^0 = \mathbf{0}$, $\bbeta^1 = \mathbf{0}$, $\phi = 1$
        \STATE Let: $\rho = L / (2\lambda_2)$, $\calL^1 = f(\mathbf{\bbeta^1}, \by)$
        \FOR{$t=1, 2, 3, ..., T$}
            \STATE \COMMENT{Step 1: momentum acceleration} 
            \STATE $\bgamma^t = \bbeta^t + \frac{\phi}{\phi+3} (\bbeta^t - \bbeta^{t-1})$ \vspace{1.5mm}
            \STATE \COMMENT{Step 2: proximal gradient descent; use \textcolor{red}{Algorithm~\ref{alg:PAVA_algorithm}}} 
            \STATE $\bgamma^t = \bgamma^t - \frac{1}{L} \nabla F(\bgamma^t)$ \label{alg_line:gradient_descent} 
            \STATE $\bbeta^{t+1} = \bgamma^t - \rho^{-1} \text{prox}_{\rho g^*} (\rho \bgamma^t)$ \label{alg_line:proximal_step} \vspace{1.5mm} 
            \STATE \COMMENT{Step 3: restart; use \textcolor{red}{Algorithm~\ref{alg:compute_g_value_algorithm}}}
            \STATE $\calL^{t+1} = f(\bX \bbeta^{t+1}, \by) + 2 \lambda_2 g(\bbeta^{t+1})$
            \STATE \textbf{if} $\calL^{t+1} \geq \calL^{t}$ \textbf{then} $\phi = 1$ \textbf{else} $\phi = \phi + 1$
        \ENDFOR
        \STATE \textbf{return} $\bbeta^{T+1}$
    \end{algorithmic}
\end{algorithm}

\vspace{-3mm}
\subsection{Safe Lower Bounds for GLMs}
\label{subsec:safe_lower_bounds_for_glms}
We conclude this section by commenting on how to use Algorithm~\ref{alg:main_algorithm} in the BnB tree to prune nodes.
As an iterative algorithm, FISTA yields only an approximate solution $\hat{\bbeta}$ to~\eqref{obj:original_sparse_problem_perspective_formulation_convex_relaxation}. Consequently, while we can calculate the objective function for $\hat{\bbeta}$ efficiently, this value is not necessarily a lower bound of the original problem--only the optimal value of the relaxed problem~\eqref{obj:original_sparse_problem_perspective_formulation_convex_relaxation} serves as a guaranteed lower bound.
To get a safe lower bound, we rely on the weak duality theorem, in which for any proper, lower semi-continuous, and convex functions $F: \R^n \to \R \cup \{\infty\}$ and $G: \R^p \to \R \cup \{\infty\}$, we have
\begin{align*}
    \inf_{\bbeta \in \R^p} F(\bX \bbeta) + G(\bbeta) 
    &\geq \sup_{\bzeta \in \R^n} - F^*(-\bzeta) - G^*(\bX^\top \bzeta) \\ 
    &\geq - F^*(-\bzeta) - G^*(\bX^\top \bzeta) \,~ \forall \bzeta \in \R^n\!\!,
\end{align*}
where $F^*$ and $G^*$ denote the conjugates of $F$ and $G$, respectively, while the second inequality follows from the definition of the supremum operator. Letting $F(\bX \bbeta) = f(\bm X \bm \beta, \bm y)$, $G(\bbeta) = 2 \lambda_2 g(\bbeta)$ and $\bzeta = \nabla F(\bX \hat \bbeta)$, where $\hat \bbeta$ is the output of the FISTA Algorithm~\ref{alg:main_algorithm}, we arrive at the safe lower bound
\begin{align}
    \label{eq:fenchel_duality_theorem_F_y(Ax)+G(x)}
    P_{\text{MIP}}^\star \geq - F^*(-\hat{\bzeta}) - G^*(\bX^\top \hat{\bzeta}),
\end{align}
where the inequality follows from the relaxation bound $P_{\text{MIP}}^\star \geq P_{\text{conv}}^\star$ and the weak duality theorem.
For convenience, we provide a list of $F^*(\cdot)$ for different GLM loss functions in~\ref{appendix_sec:convex_conjugate_for_GLM_loss_functions}.
The readers are also referred to~\ref{appendix_sec:safe_lower_bound_more_discussions},  where we derive the safe lower bound for the linear regression problem with eigen-perspective relaxation as an example.


\vspace{-3mm}
\section{Experiments}
\vspace{-1mm}

We evaluate our proposed methods using both synthetic and real-world datasets to address three key empirical questions: \\[-2em]
\begin{itemize}[label=$\diamond$,leftmargin=*]
    \item How fast is our customized PAVA algorithm in evaluating $\prox_g$ compared to existing solvers? \\[-1.5em]
    \item How fast is our proposed FISTA method in calculating the lower bounds compared to existing solvers? \\[-1.5em]
    \item How fast is our customized BnB algorithm compared to existing solvers?
\end{itemize}
We implement our algorithms in python.
For baselines, we compare with the following state-of-the-art commercial and open-source SOCP solvers: Gurobi~\citep{gurobi}, MOSEK~\citep{mosek}, SCS~\citep{scs}, and Clarabel~\cite{Clarabel}, with the python package cvxpy~\cite{cvxpy} as the interface to these solvers.

\vspace{-2mm}
\subsection{How Fast Can We Evaluate $\text{prox}_{\rho^{-1} g}(\cdot)$?}

\begin{figure*}[!htb]
    \centering
    \includegraphics[width=0.85\textwidth]{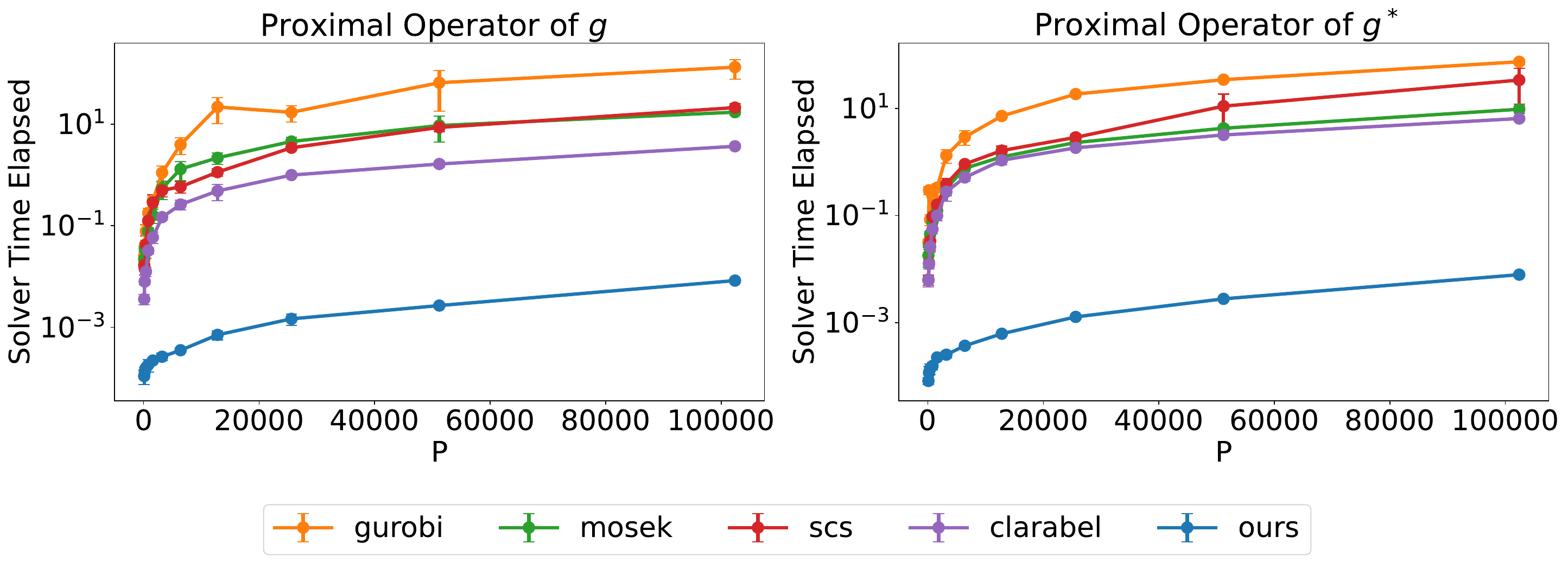}
    \vspace{-1em}
    \caption{Running time comparison of evaluating the proximal operators, for both $g$ (left) and $g^*$ (right).
    The baselines evaluate the proximal operators by directly solving the corresponding second-order conic problems (SOCP), respectively.}
    \label{fig:prox_comparison}
    \vspace{-2mm}
\end{figure*}

\begin{figure*}[!htb]
    \centering
    \includegraphics[width=0.85\textwidth]{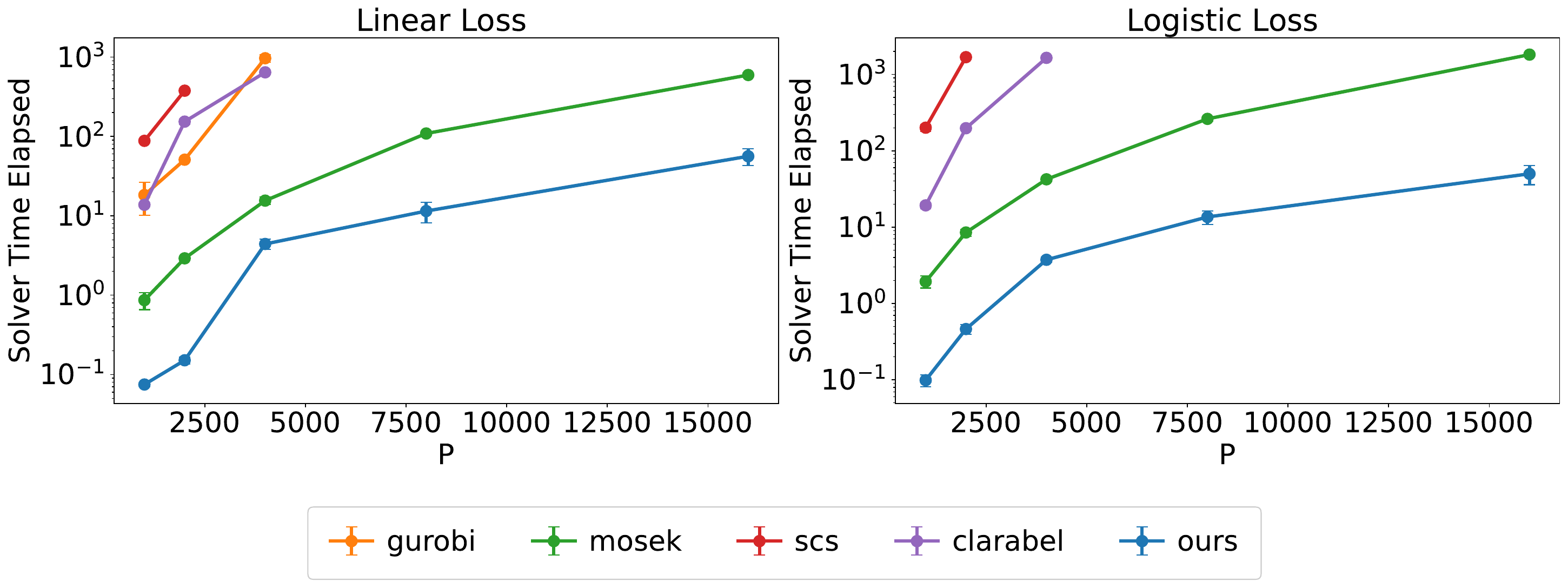}
    \vspace{0em}
    \caption{Running time comparison of solving Problem~\eqref{obj:original_sparse_problem_perspective_formulation_convex_relaxation}, the perspective relaxation of the original MIP problem.
    We set $M=2.0$, $\lambda_2=1.0$, and $n$-to-$p$ ratio to be 1. Gurobi cannot solve the cardinality constrained logistic regression problem.}
    \label{fig:solve_convex_relaxation_main_paper}
    \vspace{-5mm}
\end{figure*}

\begin{table}[!t]
    \centering
    \vspace{-2mm}
    \caption{GPU acceleration of our method on the linear regression task. Top and bottom rows correspond to the mean and standard deviation of running times (seconds).}
    \vspace{2mm}
    \label{tab:GPU_acceleration}
    \resizebox{\columnwidth}{!}{%
    \begin{tabular}{cccccc}
    \toprule
    $p$ & 1k & 2k & 4k & 8k & 16k \\ \hline
    \multirow{2}{*}{ours CPU} & 0.19 & 0.48 & 1.54 & 4.80 & 19.52 \\
     & (0.01) & (0.05) & (0.21) & (0.57) & (1.27) \\ \hline
    \multirow{2}{*}{ours GPU} & 0.29 & 0.19 & 0.26 & 0.59 & 2.09 \\
     & (0.04) & (0.02) & (0.02) & (0.08) & (0.11)\\
     \bottomrule
    \end{tabular}%
    }
    \vspace{-5mm}
\end{table}

In this subsection, we demonstrate the computational efficiency of using our PAVA algorithm for evaluating the proximal operators.
We conduct the comparisons in two ways --- evaluating both a) the proximal operator of the original function $g$ and b) the proximal operator of its conjugate $g^*$.
Detailed experimental configurations, including parameter specifications and synthetic data generation process, are provided in Appendix~\ref{appendix:setup_for_evaluating_proximal_operators}.

The results shown in Figure~\ref{fig:prox_comparison} highlight the superiority of our method.
Our algorithm achieves a computational speedup of  approximately two orders of magnitude compared to conventional SOCP solvers.
This performance gain is largely due to our customized PAVA implementation in Algorithm~\ref{alg:PAVA_algorithm}.
For instance, in high-dimensional settings ($p=10^5$), baseline methods require several seconds to minutes to evaluate the proximal operators, whereas our approach completes the same task in 0.01 seconds.
Additionally, our method guarantees \textit{exact} solutions to the optimization problem, in contrast to the approximate solutions returned by the baselines.
This combination of precision and efficiency constitutes a critical advantage for our first-order optimization framework over generic conic programming solvers, as demonstrated in subsequent sections.

\vspace{-2mm}
\subsection{How Fast Can We Calculate the Lower Bound?}
\vspace{-1mm}

\begin{figure*}[!htb]
    \centering
    \includegraphics[width=0.95\textwidth]{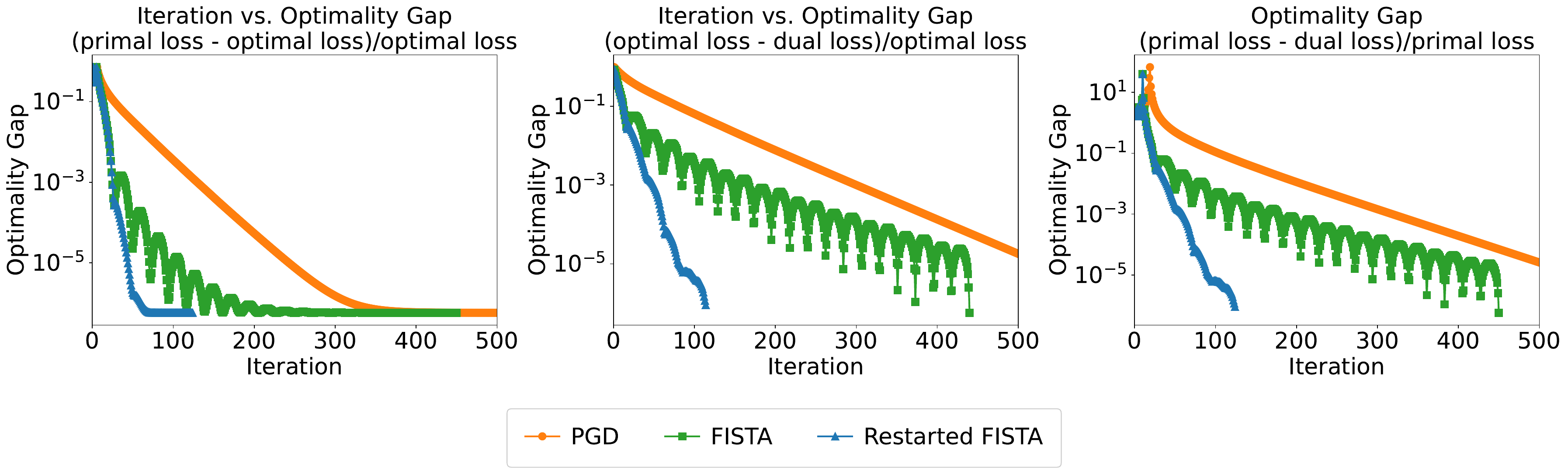}
    \vspace{-2mm}
    \caption{Empirical convergence rate of our restarted FISTA (compared with PGD, the proximal gradient method, and FISTA) on solving the perspective relaxation in Problem~\eqref{obj:original_sparse_problem_perspective_formulation_convex_relaxation} with the logistic loss, $n=16000, p=16000, k=10, \rho=0.5, \lambda_2=1.0, \text{ and } M=2.0$. }
    \label{fig:RestartedFISTA_linear_convergence_rate}
    \vspace{-3mm}
\end{figure*}

\begin{table*}[!t]
\centering
\caption{Certifying optimality on large-scale and real-world datasets.}
\vspace{2mm}
\label{tab:my-table}
\resizebox{\textwidth}{!}{%
\begin{tabular}{llcccccc}
\toprule
 &  & \multicolumn{2}{c}{ours} & \multicolumn{2}{c}{Gurobi} & \multicolumn{2}{c}{MOSEK} \\
 &  & time (s) & opt. gap (\%) & time (s) & opt. gap (\%) & time (s) & opt. gap (\%) \\ \hline
\multirow{2}{*}{Linear Regression} & \begin{tabular}[c]{@{}l@{}}synthetic ($k=10, M=2$)\\ (n=16k, p=16k, seed=0)\end{tabular} & 79 & 0.0 & 1800 & - & 1915 & - \\ \cline{2-8}
 & \begin{tabular}[c]{@{}l@{}}Cancer Drug Response ($k=5, M=5$)\\ (n=822, p=2300)\end{tabular} & 41 & 0.0 & 1800 & 0.89 & 188 & 0.0 \\ \hline
\multirow{2}{*}{Logistic Regression} & \begin{tabular}[c]{@{}l@{}}Synthetic ($k=10, M=2$)\\ (n=16k, p=16k, seed=0)\end{tabular} & 626 & 0.0 & N/A & N/A & 2446 & - \\ \cline{2-8}
 & \begin{tabular}[c]{@{}l@{}}DOROTHEA ($k=10, M=2$)\\ (n=2300, p=89989)\end{tabular} & 230 & 0.0 & N/A & N/A & 1814 & 0.63 \\
 \bottomrule
\end{tabular}%
}
\end{table*}

We next benchmark the computational speed and scalability of our method against the state-of-the-art solvers (Gurobi, MOSEK, SCS, and Clarabel) for solving the perspective relaxation of the original MIP problem.
Evaluations are performed on both linear and logistic regression tasks.
Experimental configurations are detailed in Appendix~\ref{appendix:setup_for_solving_the_perspective_relaxation}.
Additional perturbation studies, such as on the sample-to-feature ($n$-to-$p$) ratio, box constraint $M$, and $\ell_2$ regularization coefficient $\lambda_2$, are provided in Appendix~\ref{appendix:numerical_solve_convex_relaxation}.
All solvers are terminated upon achieving an optimality gap tolerance of $\epsilon=10^{-6}$ or exceeding a runtime limit of 1800 seconds.

The results, shown in Figure~\ref{fig:solve_convex_relaxation_main_paper}, demonstrates that our method outperforms the fastest conic solver (MOSEK) by over one order of magnitude.
For the largest tested instances ($n=16000$ and $p=16000$), our approach attains the target tolerance ($10^{-6}$) in under 100 seconds across regression and classification datasets, whereas most baselines fail to converge within the 1800-second threshold.

There are two factors driving this speedup.
First, our efficient proximal operator evaluation reduces per-iteration complexity.
Second, our efficient method to compute $g(\bbeta)$ (in Algorithm~\ref{alg:compute_g_value_algorithm}) exactly enables integration of the value-based restart technique within FISTA, significantly improving convergence.
Figure~\ref{fig:RestartedFISTA_linear_convergence_rate} illustrate this enhancement: while PGD (the proximal gradient descent method, also known as ISTA) and FISTA exhibit slow convergence rates, FISTA with restarts achieves fast linear convergence on both dual loss and primal-dual gap metrics.
To the best of our knowledge, this marks the first empirical demonstration of linear convergence for a first-order method applied to solving the convex relaxation of this MIP class.
Finally, our method permits GPU acceleration because our most computationally intensive component is matrix-vector multiplications.
As shown in Table~\ref{tab:GPU_acceleration}, GPU implementation reduces runtime by an additional order of magnitude on high-dimensional instances.

\subsection{How Fast Can We Certify Optimality?}
Finally, we demonstrate how our method's ability to compute tight lower bounds enables efficient optimality certification for large-scale datasets, outperforming state-of-the-art commericial MIP solvers.
Integrating our lower-bound computation into a minimalist branch-and-bound (BnB) framework, we prioritize node pruning via lower bound calculations while intentionally omitting advanced MIP heuristics (e.g., cutting planes, presolve routines) to evaluate the impact of our method.
Experimental configurations, including dataset descriptions and BnB implementation details, are provided in Appendix~\ref{appendix:setup_for_certifying_optimality}.
We benchmark our approach against Gurobi and MOSEK, reporting both runtime and final optimality gaps.
Note that on the two real-world datasets, we have used small $k$'s, which are selected based on 5-fold cross validation (see Appendix~\ref{appendix:setup_for_certifying_optimality}).
Small $k$'s are sufficient for accurate prediction and can help avoid overfitting.
They also improve interpretability of the model.

Results in Table~\ref{tab:my-table} show that our method certifies optimality for 2 of the four tested datasets around 1 minute, the third around 10 minutes, and the fourth around 4 minutes.
In contrast, Gurobi and MOSEK either exceed the time limit (1800 seconds) during the presolve stage or require significantly longer runtimes to achieve zero or small gaps.
Crucially, this efficiency stems from our efficient lower-bound computations and dynamic early termination criteria.
Specifically, we avoid waiting for full convergence by leveraging two key rules: 
(1) if the primal loss falls below the incumbent solution’s loss, we terminate early and proceed to branching; 
(2) if the dual loss exceeds the incumbent’s loss, we halt computation and prune the node immediately. This adaptive approach eliminates unnecessary iterations while ensuring we prune the search space effectively.

\section{Conclusion}

We introduce a first-order proximal algorithm to solve the perspective relaxation of cardinality-constrained GLM problems.
By leveraging the problem’s unique mathematical structure, we design a customized PAVA to efficiently evaluate the proximal operator, ensuring scalability to high-dimensional settings.
Further acceleration is achieved through an efficient value-based restart strategy and compatibility with GPUs, which collectively enhance convergence rates and computational speed.
Extensive empirical results demonstrate that our method outperforms state-of-the-art solvers by 1-2 orders of magnitude, establishing it as a practical, high-performance component for integration into next-generation MIP solvers.

\section*{Code Availability}
Implementations discussed in this paper are available at~\url{https://github.com/jiachangliu/OKGLM}.

\section*{Acknowledgments}
This work used the Delta system at the National Center for Supercomputing Applications through allocation CIS250029 from the Advanced Cyberinfrastructure Coordination Ecosystem: Services \& Support (ACCESS) program, which is supported by National Science Foundation grants \#2138259, \#2138286, \#2138307, \#2137603, and \#2138296.

\section*{Impact Statement}
This paper presents work whose goal is to advance the field of Machine Learning. There are many potential societal consequences of our work, none which we feel must be specifically highlighted here.

\bibliographystyle{icml2025}
\bibliography{ref}

\appendix
\onecolumn

\part{Appendix} 

\newcommand{\appendixnumberline}[1]{Appendix\space}

\renewcommand{\appendixname}{Appendix}
\renewcommand{\thesection}{\appendixname~\Alph{section}}
\renewcommand{\thesubsection}{\Alph{section}.\arabic{subsection}}

\section{Proofs}
\label{appendix_sec:proofs}
This section contains all omitted proofs in the paper.

\subsection{Proof of Lemma~\ref{lemma:equivalence_between_perspective_relaxation_and_convexification}}

\begin{namedlemma}
    [~\ref{lemma:equivalence_between_perspective_relaxation_and_convexification}]
    The closed convex hull of the set
    \begin{align*}
        \textstyle \left\{ (\tau, \bbeta, \bz) \middle|
        \| \bbeta \|_\infty \leq M, \, \bz \in \{0, 1\}^p, \, \mathbf{1}^\top \bz \leq k, \, \beta_j ( 1 - z_j) = 0 ~~ \forall j \in [p], \, \sum_{j \in [p]} \beta_j^2 \leq \tau \right\}
    \end{align*}
    is given by the set
    \begin{align*}
        \textstyle \left\{ (\tau, \bbeta, \bz)  \;\middle|\; -M z_j\leq \bbeta_j \leq M z_j ~ \forall j \in [p], \, \bz \in [0, 1]^p, \, \mathbf{1}^\top \bz \leq k, \, \sum_{j \in [p]} \beta_j^2 / z_j \leq \tau \right\}.
    \end{align*}
\end{namedlemma}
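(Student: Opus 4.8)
The plan is to prove the two inclusions $\cl\conv(S) \subseteq C$ and $C \subseteq \conv(S)$, where $S$ and $C$ denote, respectively, the first and second sets appearing in the statement. The first inclusion is routine. The map $(\beta_j, z_j) \mapsto \beta_j^2/z_j$ is the perspective of a convex function and hence jointly convex (with the stated convention at $z_j = 0$), while the constraints $-Mz_j \le \beta_j \le Mz_j$, $\bz \in [0,1]^p$, and $\mathbf{1}^\top \bz \le k$ are linear; thus $C$ is closed and convex. A direct check gives $S \subseteq C$: for a point of $S$ and any $j \in [p]$, either $z_j = 1$ (so $|\beta_j| \le M = Mz_j$ and $\beta_j^2/z_j = \beta_j^2$) or $z_j = 0$ (so $\beta_j = 0$ by the complementarity $\beta_j(1-z_j)=0$, and the $\beta_j^2/z_j$ term vanishes), whence $\sum_{j\in[p]} \beta_j^2/z_j = \sum_{j\in[p]} \beta_j^2 \le \tau$. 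Since $C$ is closed and convex and contains $S$, it contains $\cl\conv(S)$.

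The substance is the reverse inclusion. First I would reduce to the tight case $\tau = \sum_{j\in[p]} \beta_j^2/z_j$: both $S$ and $C$ are upward closed in $\tau$, so it suffices to exhibit the tight point as a convex combination of points of $S$ (and afterwards raise $\tau$ in a single summand). Fix such a point $(\tau, \bbeta, \bz) \in C$. The key structural fact I intend to exploit is that the capped simplex $\{\bz \in [0,1]^p : \mathbf{1}^\top \bz \le k\}$ is an integral polytope whose vertices are exactly the $0/1$ vectors with at most $k$ ones. Hence I can write $\bz = \sum_i \lambda_i \bz^{(i)}$ as a finite convex combination with each $\bz^{(i)} \in \{0,1\}^p$ and $\mathbf{1}^\top \bz^{(i)} \le k$.

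The crux is to lift this single decomposition of $\bz$ to a simultaneous decomposition of $(\bbeta, \tau)$. For each atom $i$ I set $\beta^{(i)}_j = \beta_j/z_j$ when $z^{(i)}_j = 1$ and $\beta^{(i)}_j = 0$ when $z^{(i)}_j = 0$, together with $\tau^{(i)} = \sum_{j\in[p]} (\beta^{(i)}_j)^2$. I would then verify three points: (i) each $(\tau^{(i)}, \bbeta^{(i)}, \bz^{(i)}) \in S$, where the only nontrivial check is the box constraint, which holds because $|\beta^{(i)}_j| = |\beta_j|/z_j \le M$ follows from $|\beta_j| \le Mz_j$; (ii) the $\bbeta$-marginal is exact, since coordinatewise $\sum_i \lambda_i \beta^{(i)}_j = (\beta_j/z_j)\sum_{i : z^{(i)}_j = 1}\lambda_i = (\beta_j/z_j)\,z_j = \beta_j$, where $\sum_{i : z^{(i)}_j = 1}\lambda_i = z_j$ is precisely the marginal identity furnished by the decomposition of $\bz$; and (iii) the $\tau$-marginal is exact by the same computation, $\sum_i \lambda_i \tau^{(i)} = \sum_{j\in[p]} (\beta_j/z_j)^2 z_j = \sum_{j\in[p]} \beta_j^2/z_j = \tau$. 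Coordinates with $z_j = 0$ force $\beta_j = 0$ and $z^{(i)}_j = 0$ for every $i$ with $\lambda_i > 0$, so they contribute nothing and cause no division by zero.

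The main obstacle, and the genuinely new ingredient relative to the box-free result of Günlük--Linderoth and the cardinality-free result of Shafiee et al., is arranging that one and the same convex decomposition of $\bz$ drives the decompositions of both $\bbeta$ and $\tau$ while keeping the cardinality bound $\mathbf{1}^\top \bz^{(i)} \le k$ on every atom. The integrality of the capped simplex is exactly what makes this possible, and the ``active value'' $\beta_j/z_j$ is precisely the quantity whose boundedness is guaranteed by the big-$M$ box constraint, which is how the box gets incorporated. Assembling (i)--(iii) yields $(\tau, \bbeta, \bz) \in \conv(S)$, hence $C \subseteq \conv(S) \subseteq \cl\conv(S)$; combined with the first inclusion this proves $\cl\conv(S) = C$ (and in fact shows $\conv(S)$ is already closed here).
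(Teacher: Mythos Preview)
Your proof is correct, but it takes a genuinely different route from the paper's. The paper proceeds by lifting: it introduces auxiliary variables $t_j$ so that the perspective epigraph becomes a product of rotated second-order cones, recognizes the resulting set as a mixed-binary conic set of the form treated by \citet{shafiee2024constrained}, invokes their Theorem~1 to conclude that the continuous relaxation equals the closed convex hull of the lifted set, and finally projects out the auxiliary variables. Your argument is instead direct and constructive: you use the integrality of the capped simplex $\{\bz \in [0,1]^p : \mathbf{1}^\top \bz \le k\}$ to decompose $\bz$ into $0/1$ atoms, and then lift that single decomposition to $\bbeta$ and $\tau$ via the ``active value'' $\beta_j/z_j$, checking that the box constraint is exactly what bounds these quantities. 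What the paper's approach buys is modularity---once the conic framework is in place, the result is a corollary of a general theorem, and similar statements for other separable structures follow the same template. What your approach buys is self-containment and a sharper conclusion: you never leave the original variables, you exhibit the convex combination explicitly, and you actually show $C = \conv(S)$ (not merely $\cl\conv(S)$), so closedness of the convex hull comes for free.
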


\begin{proof}
    Let $\mathcal T$ represent the first set mentioned in the statement of the lemma. Using the definition of the perspective function and applying the big-M formulation technique, we have
    \begin{align*}
        \textstyle \mathcal T = \left\{ (\tau, \bbeta, \bz)  \;\middle|\; -M z_j\leq \bbeta_j \leq M z_j ~ \forall j \in [p], \, \bz \in \{0, 1\}^p, \, \mathbf{1}^\top \bz \leq k, \, \sum_{j \in [p]} \beta_j^2 / z_j \leq \tau \right\}.
    \end{align*}
    As the epigraph of a perspective function constitutes a cone \citep[Lemma~1 \& 2]{shafiee2024constrained}, we may write $\mathcal T = \mathrm{Proj}_{(\tau, \bbeta, \bz)}(\overline {\mathcal T})$, where 
    \begin{align*}
        \textstyle \overline {\mathcal T} = \left\{ (\tau, \bbeta, \bt, \bz) \;\middle|\; \bm 1^\top \bt = \tau, \, \bz \in \{0, 1\}^p, \, \mathbf{1}^\top \bz \leq k, \, \bm A_j \begin{bmatrix} t_j \\ \beta_j \end{bmatrix} + \bm B_j z_j \in \mathbb K_j ~ \forall j \in [p] \right\}
    \end{align*}
    admits a mixed-binary conic representation with
    \begin{align*}
        \bm A = \begin{bmatrix} 1 & 0 \\ 0 & 1 \\ 0 & 0 \\ 0 & 1 \\ 0 & -1 \end{bmatrix}, \,
        \bm B = \begin{bmatrix} 0 \\ 0 \\ 0 \\ M \\ M \end{bmatrix}, \,
        \mathbb K_j = \mathbb L_+ \times \R_+ \times \R_+ \qquad \forall j \in [p].
    \end{align*}
    Here, $\mathbb L_+ \in \R^3$ denotes the rotated second order cone, that is, $\mathbb L_+ = \{ (t, \beta, z) \in \R_+ \times \R \times \R_+: \beta^2 \leq t z  \}$.
    Thus, using \citep[Lemma~4]{shafiee2024constrained}, the set $\overline{\mathcal T}$ satisfies all the requirements of \citep[Theorem~1]{shafiee2024constrained}, and therefore, its continuous relaxation gives the closed convex hull of $\overline{\mathcal T}$, that is,
    \begin{align*}
        \textstyle \cl \conv(\overline {\mathcal T}) = \left\{ (\tau, \bbeta, \bt, \bz) \;\middle|\; \bm 1^\top \bt = \tau, \, \bz \in [0, 1]^p, \, \mathbf{1}^\top \bz \leq k, \, \bm A_j \begin{bmatrix} t_j \\ \beta_j \end{bmatrix} + \bm B_j z_j \in \mathbb K_j ~ \forall j \in [p] \right\}.
    \end{align*}
    The prove concludes by applying Fourier-Motzkin elimination method to project out the variable $\bt$.
\end{proof} 

\subsection{Proof of Lemma~\ref{lemma:fenchel_conjugate_of_g_closed_form_expression}}

\begin{namedlemma}
    [~\ref{lemma:fenchel_conjugate_of_g_closed_form_expression}]
    The conjugate of $g$ is given by
    \begin{equation*}
        g^*(\balpha) = \TopSum_k({\bf H}_M(\balpha)).
    \end{equation*}
\end{namedlemma}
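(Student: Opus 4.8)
The plan is to expand the definition of the conjugate and to exploit that $g$ is itself defined through a minimization. Writing $g(\bbeta) = \min_{\bz \in \calZ(\bbeta)} \tfrac12 \sum_{j} \beta_j^2/z_j$, where $\calZ(\bbeta)$ collects the constraints $\bz \in [0,1]^p$, $\mathbf 1^\top \bz \le k$, and $\vert{\beta_j} \le M z_j$, the term $-g(\bbeta)$ turns into a supremum over $\bz$. Hence $g^*(\balpha) = \sup_{\bbeta}\balpha^\top\bbeta - g(\bbeta)$ collapses into a single joint supremum over $(\bbeta,\bz)$ subject to $\bz \in [0,1]^p$, $\mathbf 1^\top\bz \le k$, and $\vert{\beta_j}\le M z_j$. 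No minimax argument is needed, since this is merely the identity $-\min = \sup(-\,\cdot\,)$.

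First I would fix $\bz$ and carry out the inner maximization over $\bbeta$, which separates across coordinates. For each $j$ with $z_j > 0$, the subproblem is the one-dimensional box-constrained concave program $\max_{\vert{\beta_j}\le M z_j} \alpha_j\beta_j - \tfrac12\beta_j^2/z_j$. Its unconstrained optimizer is $\beta_j = \alpha_j z_j$, feasible precisely when $\vert{\alpha_j}\le M$ and then yielding value $\tfrac12\alpha_j^2 z_j$; otherwise the constraint is active at $\beta_j = M z_j\,\sgn(\alpha_j)$, yielding $z_j(M\vert{\alpha_j}-\tfrac12 M^2)$. These two regimes are exactly $z_j H_M(\alpha_j)$, so recognizing the Huber loss is the heart of this step. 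The boundary case $z_j=0$ forces $\beta_j=0$ and contributes $0 = z_j H_M(\alpha_j)$ under the stated $0/0$ convention, so the per-coordinate optimal value is $z_j H_M(\alpha_j)$ uniformly.

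Next I would solve the remaining outer problem $\sup\{\sum_{j} z_j H_M(\alpha_j) : \bz\in[0,1]^p,\ \mathbf 1^\top\bz\le k\}$, which is linear in $\bz$. Because $H_M(\alpha_j)\ge 0$ for every $j$, this is a fractional knapsack with unit weights and budget $k$; its optimum places $z_j=1$ on the $k$ coordinates with largest $H_M(\alpha_j)$ and $z_j=0$ elsewhere. The optimal value is therefore the sum of the $k$ largest entries of $\mathbf H_M(\balpha)$, namely $\TopSum_k(\mathbf H_M(\balpha))$, as claimed.

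The main obstacle is the inner coordinate maximization: one must handle the box constraint correctly, verify that the two cases fuse into the single Huber expression, and treat the degenerate $z_j=0$ case under the $0/0$ convention. Once $H_M \ge 0$ is observed, the outer step is routine, as it guarantees that the greedy top-$k$ selection attains the supremum.
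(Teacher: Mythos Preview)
Your proposal is correct and follows essentially the same route as the paper's proof: both convert $g^*(\balpha)$ into a joint supremum over $(\bbeta,\bz)$, solve the inner separable maximization over $\bbeta$ for fixed $\bz$ to obtain $\sum_j z_j H_M(\alpha_j)$, and then recognize the outer linear program over $\bz$ as the $\TopSum_k$ selection. Your treatment of the $z_j=0$ boundary case and the explicit case split on $\vert{\alpha_j}\lessgtr M$ are slightly more careful than the paper's single-formula argument, but the structure is the same.
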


\begin{proof}
    Using the definition of the implicit function $g$ in~\eqref{eq:function_g_definition}, we have
    \begin{align}
        \label{eq:max:g*}
        g^*(\balpha) = \left\{
        \begin{array}{cl}
            \max & \balpha^\top \bbeta -  \frac{1}{2} \sum_{j \in [p]} {\beta_j^2}/{z_j} \\[1ex]
            \st & \bbeta \in \R^p, \, \bz \in [0, 1]^p, \, \bm 1^\top \bz \leq k, \\[1ex]
            & -M z_j \leq \beta_j \leq M z_j ~ \forall j \in [p]
        \end{array}
        \right.
    \end{align}
    For any fixed feasible $\bz$, the maximization problem over $\bbeta$ is a simple constrained quadratic problem, that can be solved analytically by the vector $\beta^\star$ whose $j$'th element is given by
    $\beta_j^\star = \sgn(\alpha_j) \min(\vert{\alpha_j}, M) z_j.$
    Substituting the optimizer $\beta^\star$, the objective function of the maximization problem in~\eqref{eq:max:g*} simplifies to
    \begin{align*}
        \balpha^\top \bbeta^\star - \frac{1}{2} \sum_{j \in [p]} {\beta_j^\star}^2 / z_j 
        &= \sum_{j \in [p]} \alpha_j \cdot \sgn(\alpha_j) \min(\vert{\alpha_j}, M) z_j - \frac{\left( \sgn\left( \alpha_j \right) \min\left(\vert{\alpha_j}, M \right) z_j \right)^2}{2z_j} \\
        &= \sum_{j \in [p]} \left( \vert{\alpha_j} \min(\vert{\alpha_j}, M) - \frac{1}{2} \min(\alpha_j^2, M^2) \right) z_j \\
        &= \sum_{j \in [p]} H_M(\alpha_j) z_j,
    \end{align*}
    where the second equality holds as $\bz$ is a binary vector, and the last equality follows from the definition of the Huber loss function:
    \begin{align*}
        H_M(x) = \begin{cases} \frac{1}{2} x^2 & \text{if } \vert{x} \leq M  \\ M \vert{x} - \frac{1}{2} M^2 & \text{if } \vert{x} > M
        \end{cases}.
    \end{align*}
    We thus arrive at
    \begin{align*}
        g^*(\balpha) = \max_{\bz \in [0,1]^p} \left\{ \textstyle \sum_{j \in [p]} H_M (\alpha_j) z_j: \bm 1^\top \bz \leq k \right\} = \TopSum_k ({\mathbf{H}}_M(\balpha)).
    \end{align*}
    This completes the proof.
    
\end{proof}

Note that recent works~\cite{xie2020differentiable,lei2023conditional} have discussed the smooth approximation of the $\text{TopSum}_k(\cdot)$.
However, such a smooth approximation is not suitable for this work.
The effectiveness of the proximal algorithm relies on the exact evaluation of the proximal operator, which we will talk about next.
Replacing $\text{TopSum}_k(\cdot)$ would lead to solving a different problem rather than the proximal operator evaluation.
This will not help us to use FISTA to solve the perspective relaxation.
As a result, this approach would not guarantee valid lower bounds necessary for optimality certification.

\subsection{Proof of Lemma~\ref{lemma:equivalence_between_proximal_operator_and_huber_isotonic_regression}}

\begin{namedlemma}
    [~\ref{lemma:equivalence_between_proximal_operator_and_huber_isotonic_regression}]
    For any $\bmu \in \R^p$, we have 
    $$\prox_{\rho g^*}(\bmu) = \sgn(\bmu) \odot \bnu^\star, $$ 
    where $\odot$ denotes the Hadamard (element-wise) product, $\bnu^\star$ is the unique solution of the following optimization problem
    \begin{align}
        \label{A:obj:KyFan_Huber_isotonic_regression}
        \begin{array}{cl}
            \min\limits_{\bnu \in \R^p} & \frac{1}{2} \sum_{j \in [p]} (\nu_j - \vert{\mu_j})^2 + \rho \sum_{j \in \calJ} H_M (\nu_j) \\[2ex]
            \st & \quad \nu_j \geq \nu_l \; \text{ if } \; \vert{\mu_j} \geq \vert{\mu_l} ~~ \forall j, l \in [p],
        \end{array} 
    \end{align}
    and $\calJ$ is the set of indices of the top $k$ largest elements of~$ \vert{\mu_j}, j \in [p]$. 
\end{namedlemma}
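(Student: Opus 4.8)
The plan is to peel off the sign and the top-$k$ selection one at a time, reducing the proximal computation to a strictly convex isotonic regression whose unique solution is exactly $\bnu^\star$. Unfolding the definition and substituting the closed form from Lemma~\ref{lemma:fenchel_conjugate_of_g_closed_form_expression} gives $\prox_{\rho g^*}(\bmu) = \argmin_{\balpha} \frac{1}{2}\|\balpha - \bmu\|_2^2 + \rho\,\TopSum_k(\mathbf{H}_M(\balpha))$. Because $H_M$ is even, the penalty $\TopSum_k(\mathbf{H}_M(\balpha))$ depends on $\balpha$ only through its componentwise magnitudes, while for fixed magnitudes the quadratic term is minimized by aligning $\sgn(\alpha_j)$ with $\sgn(\mu_j)$ (since $(|\alpha_j|-|\mu_j|)^2 \le (\alpha_j-\mu_j)^2$). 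First I would use this to obtain the sign reduction $\prox_{\rho g^*}(\bmu) = \sgn(\bmu)\odot\bnu^\star$, where $\bnu^\star$ is the minimizer of the nonnegative problem (P1): $\min_{\bnu\ge 0}\,\frac{1}{2}\sum_j(\nu_j - |\mu_j|)^2 + \rho\,\TopSum_k(\mathbf{H}_M(\bnu))$; uniqueness of $\bnu^\star$ follows from strict convexity of the quadratic part.

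Next I would establish the monotone rearrangement property: the minimizer of (P1) respects the order of $|\bmu|$, i.e. $|\mu_j|\ge|\mu_l|\Rightarrow \nu_j^\star\ge\nu_l^\star$. Since $\sum_j(\nu_j-|\mu_j|)^2$ equals a permutation-invariant constant minus $2\sum_j\nu_j|\mu_j|$, and $\TopSum_k(\mathbf{H}_M(\cdot))$ is itself permutation-invariant, exchanging two coordinates that form a strict inversion leaves the penalty unchanged but strictly increases $\sum_j\nu_j|\mu_j|$ by the rearrangement inequality, hence strictly lowers the (P1) objective, contradicting optimality; ties in $|\bmu|$ force equality of the corresponding $\nu_j^\star$ by uniqueness. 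This is the crux of the argument, and the step on which I expect the work to concentrate, because it is precisely what converts the combinatorial top-$k$ term into a fixed separable sum: once $\bnu^\star$ lies in the isotonic cone $S=\{\bnu:\nu_j\ge\nu_l\text{ if }|\mu_j|\ge|\mu_l|\}$ and is nonnegative, monotonicity of $H_M$ on $[0,\infty)$ places the $k$ largest values of $H_M(\nu_j)$ exactly at the indices in $\calJ$, so on $S\cap\{\bnu\ge 0\}$ the selection collapses to $\TopSum_k(\mathbf{H}_M(\bnu)) = \sum_{j\in\calJ}H_M(\nu_j)$ and the (P1) objective coincides with the objective of~\eqref{A:obj:KyFan_Huber_isotonic_regression}, call it (P2).

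Finally I would identify the minimizers of (P1) and (P2). The point $\bnu^\star$ is feasible for (P2) and the two objectives agree on $S\cap\{\bnu\ge 0\}$, so it remains only to show that (P2) attains its minimum there. For this I would clamp any feasible $\bnu\in S$ to its nonnegative part $\max(\bnu,0)$: clamping preserves the ordering so it stays in $S$, and it increases neither the quadratic term (as $|\mu_j|\ge 0$) nor the Huber term (as $H_M$ is even and nondecreasing in magnitude). Hence the (P2) minimum is attained on $S\cap\{\bnu\ge 0\}$, where the objective equals that of (P1); strict convexity of the (P2) objective over the convex set $S$ then yields a unique minimizer, which must equal $\bnu^\star$. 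Combining this with the sign reduction gives $\prox_{\rho g^*}(\bmu) = \sgn(\bmu)\odot\bnu^\star$ with $\bnu^\star$ solving~\eqref{A:obj:KyFan_Huber_isotonic_regression}, completing the proof. The sign reduction and the clamping argument are routine; the rearrangement step is the substantive ingredient.
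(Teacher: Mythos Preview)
Your proposal is correct and follows essentially the same route as the paper's proof: a sign-preserving step, then an order-preserving (rearrangement/swap) step, then collapsing $\TopSum_k$ to the fixed sum over $\calJ$ on the isotonic cone. Your clamping argument for dropping the nonnegativity constraint is in fact cleaner than the paper's somewhat terse justification at that point.
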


\begin{proof}
    For simplicity, let $\balpha^\star = \prox_{\rho g^*}(\bmu)$, that is,
    \begin{align}
        \label{eq:alpha:star}
        \balpha^\star = \argmin_{\bm \alpha \in \R^p} ~ \frac{1}{2} \Vert{\bm \alpha - \bm \mu}_2^2 + \rho g^*(\bm \alpha).
    \end{align}
    We first show that $\sgn(\balpha^\star) = \sgn(\bmu)$ (step 1) and then establish that for every $j, l \in [p]$ with $\vert{\mu_j} \geq \vert{\mu_l}$, we have $\vert{\alpha_j^\star} \geq \vert{\alpha_l^\star}$ (step 2). We then conclude the proof using these observations.

    \begin{itemize}[label=$\diamond$,leftmargin=*]
        \item \textbf{Step 1.} We prove the sign-preserving property through a proof by contradiction. For the sake of contradiction, suppose that there exists some $j \in [p]$ such that $\sgn(\alpha_j^\star) \neq \sgn(\mu_j)$.
        Hence, we can construct a new $\balpha'$ by flipping the sign of $\alpha_j^\star$, i.e., $\alpha_j' = -\alpha_j^\star$, and keeping the rest of the elements the same as $\balpha^\star$.
        Now under the assumption that $\sgn(\alpha_j^\star) \neq \sgn(\mu_j)$, we have $\left\lvert{\alpha_j^\star - \mu_j}\right\rvert > \left\lvert{\lvert{\alpha_j^\star}\rvert - \lvert{\mu_j}\rvert}\right\rvert = \left\lvert{\alpha_j' - \mu_j}\right\rvert$, so the $j$-th term in the first summation of the objective function will decrease while everything else remains the same.
        This leads to a smaller objective value for $\balpha'$ than $\balpha^\star$, which contradicts the optimality of $\balpha^\star$.
        Thus, the claim follows.
        
        \item \textbf{Step 2.} We prove the relative magnitude-preserving property through a proof by contradiction. For the sake of contradiction, suppose that there exists some $j, l \in [p]$ such that $\vert{\mu_j} \geq \vert{\mu_l}$ but $\vert{\alpha_j^\star} < \vert{\alpha_l^\star}$.
        Then, we can construct a new $\balpha'$ by swapping $\alpha_j^\star$ and $\alpha_l^\star$, i.e., $\alpha_j' = \alpha_l^\star$ and $\alpha_l' = \alpha_j^\star$, and keeping the rest of the elements the same as $\balpha^\star$.
        Under the assumption that $\vert{\mu_j} \geq \vert{\mu_l}$ but $\vert{\alpha_j^\star} < \vert{\alpha_l^\star}$, we have $\left\lvert{\alpha_j^\star - \mu_j}\right\rvert + \left\lvert{\alpha_l^\star - \mu_l}\right\rvert > \left\lvert{\alpha_l^\star - \mu_j}\right\rvert + \left\lvert{\alpha_j^\star - \mu_l}\right\rvert =
        \left\lvert{\alpha_j' - \mu_j}\right\rvert + \left\lvert{\alpha_l' - \mu_l}\right\rvert$, so the sum of the $j$-th and $l$-th terms in the first summation of the objective function will decrease while everything else remains the same.
        This leads to a smaller objective value for $\balpha'$ than $\balpha^\star$, which contradicts the optimality of $\balpha^\star$. Thus, the claim~follows.
    \end{itemize}
    Using these two observations, we are ready to prove that $\balpha^\star = \sgn(\bmu) \odot \bnu^\star$.
    We first reparametrize the minimization problem~\eqref{eq:alpha:star} by substituting the decision variable $\balpha$ with a new variable $\bnu \in \R_+^p$ satisfying $\balpha = \sgn(\bmu) \odot \bnu$. By the sign-preserving property in step 1, it is easy to show the equivalence between the optimization problem in~\eqref{eq:alpha:star} and the following optimization problem
    \begin{align*}
        \min_{\bnu \in \R^p_+} ~ \textstyle \frac{1}{2} \sum_{j \in [p]} (\nu_j - \vert{\mu_j})^2 + \rho \TopSum_k \left( \mathbf{H}_M ( \bnu ) \right).
    \end{align*}
    By the relative magnitude-preserving property in step 2, we can further set the equivalence between the minimization problem in~\eqref{eq:alpha:star} and the following optimization problem
    \begin{align*}
        \begin{array}{cl}
            \displaystyle \min_{\bnu \in \R_+^p} & \frac{1}{2} \sum_{j \in [p]} (\nu_j - \vert{\mu_j})^2 + \rho \sum_{j \in \calJ} H_M (\nu_j), \\ 
            \st & \quad \nu_j \geq \nu_l \; \text{ if } \; \vert{\mu_j} \geq \vert{\mu_l}.
        \end{array} 
    \end{align*}
    Lastly, the nonnegative constraint on $\bnu$ can be removed as the second summation term in the objective function implies that $\nu_j \geq 0$. Thus, we have shown that any feasible point $\balpha$ in the minimization problem~\eqref{eq:alpha:star} can be reconstructed by any feasible point $\bnu$ in the minimization problem in the statement of lemma, while maintaining the same objective value. Hence, we may conclude that $\balpha^\star = \sgn(\bmu) \odot \bnu^\star$, as required.
\end{proof}

\subsection{Proof of Lemma~\ref{lemma:PAVA_algorithm_exact_solution}}

\begin{namedlemma}
    [~\ref{lemma:PAVA_algorithm_exact_solution}]
    The vector $\hat \bnu$ in Algorithm~\ref{alg:PAVA_algorithm} solves~\eqref{obj:KyFan_Huber_isotonic_regression} exactly.
\end{namedlemma}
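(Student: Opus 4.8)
The plan is to recognize problem~\eqref{obj:KyFan_Huber_isotonic_regression} as a separable convex isotonic regression once the coordinates are sorted, and then certify that the vector $\hat\bnu$ produced by Algorithm~\ref{alg:PAVA_algorithm} is its (unique) global minimizer. I work throughout in the coordinates produced by \texttt{SpecialSort}, relabeling so that $\vert{\mu_1} \geq \cdots \geq \vert{\mu_p}$; the order constraint then reduces to $\nu_1 \geq \nu_2 \geq \cdots \geq \nu_p$ and $\calJ = \{1,\dots,k\}$. Writing $\rho_j = \rho$ for $j \leq k$ and $\rho_j = 0$ otherwise, the objective separates as $\sum_{j \in [p]} \phi_j(\nu_j)$ with $\phi_j(\nu) = \tfrac12(\nu - \vert{\mu_j})^2 + \rho_j H_M(\nu)$. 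Each $\phi_j$ is a sum of convex functions and hence convex; the quadratic term makes it strictly convex, coercive, and $C^1$, so~\eqref{obj:KyFan_Huber_isotonic_regression} admits a unique solution $\bnu^\star$, and the unconstrained block minimizer $\theta_B := \argmin_\nu \sum_{j \in B} \phi_j(\nu)$ of any contiguous block $B$ is well defined. This $\theta_B$ is exactly the quantity evaluated in the initialization step (Line~\ref{alg_line:PAVA_algorithm_initialization_step}) and in the pooling step (Line~\ref{alg_line:PAVA_algorithm_pooling_step}). Since the sort is a bijection preserving both the separable objective and the relabeled order constraints, it suffices to prove optimality in these sorted coordinates.

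The core of the argument is a pooling (amalgamation) lemma driven by convexity. For a block $B$ let $\Phi_B'(\nu) = \sum_{j \in B} \phi_j'(\nu)$, which is strictly increasing in $\nu$, so $\theta_B$ is characterized by $\Phi_B'(\theta_B) = 0$. I would first show that for adjacent blocks $B_1$ (left) and $B_2$ (right), a violation $\theta_{B_1} < \theta_{B_2}$ forces $\theta_{B_1} \leq \theta_{B_1 \cup B_2} \leq \theta_{B_2}$: evaluating $\Phi_{B_1 \cup B_2}' = \Phi_{B_1}' + \Phi_{B_2}'$ at $\theta_{B_1}$ gives $\Phi_{B_2}'(\theta_{B_1}) \leq 0$, and at $\theta_{B_2}$ gives $\Phi_{B_1}'(\theta_{B_2}) \geq 0$, so the merged root lies between the two by monotonicity. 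The decisive step is then to prove that \emph{any} optimizer is constant across $B_1 \cup B_2$ whenever their block minimizers violate the ordering, so that merging them and resetting their common value to $\theta_{B_1 \cup B_2}$ neither violates feasibility nor changes the optimal value. I expect this amalgamation step to be the main obstacle, since it is the classical heart of PAVA correctness and must be handled carefully in the present ``generalized'' setting, where the extra Huber terms $\rho_j H_M$ act only on the indices in $\calJ$ and thus make the per-coordinate losses nonidentical across the block.

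With the pooling lemma in hand, I would close the argument in one of two equivalent ways. The first is an invariant/induction argument: the blocks maintained by Algorithm~\ref{alg:PAVA_algorithm} always partition $[p]$ into contiguous segments on which the candidate is constant at its block minimizer, and by the amalgamation lemma every merge the algorithm performs is consistent with the unique optimal solution; since the while-loop exits only when no adjacent violation remains, i.e.\ when $\theta_{B_1} \geq \cdots \geq \theta_{B_m}$, the terminal candidate is feasible and agrees with $\bnu^\star$. The second, which I would present for rigor, is a direct KKT verification: upon termination set $\hat\nu_j = \theta_{B_i}$ for $j \in B_i$ and, for the constraints $\nu_j \geq \nu_{j+1}$, define multipliers by the within-block prefix sums $\lambda_j = \sum_{i \in B,\, i \leq j} \phi_i'(\theta_B)$ (and $0$ at block boundaries). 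Stationarity $\phi_j'(\hat\nu_j) = \lambda_j - \lambda_{j-1}$ together with the boundary conditions hold by construction and the block first-order condition $\Phi_B'(\theta_B) = 0$; primal feasibility and complementary slackness follow from the no-violation termination; and dual feasibility $\lambda_j \geq 0$ is precisely the prefix-sum sign condition guaranteed by the way PAVA forms its blocks. Because the program is convex, these KKT conditions are sufficient, so $\hat\bnu = \bnu^\star$ in the sorted coordinates, and undoing the permutation yields the claim.
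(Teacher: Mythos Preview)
Your proposal is correct and follows the same high-level route as the paper: recognize~\eqref{obj:KyFan_Huber_isotonic_regression} (after sorting) as a generalized separable convex isotonic regression $\min_{\bnu}\sum_j\phi_j(\nu_j)$ subject to $\nu_1\ge\cdots\ge\nu_p$, and then appeal to the classical pooling/amalgamation property of PAVA together with the ``no violation at termination $\Rightarrow$ optimal'' fact. The paper simply \emph{cites} these two properties from \citet{best2000minimizing} and \citet{ahuja2001fast} and is done; you instead plan to re-derive them from scratch via the strict monotonicity of $\Phi_B'$ and, alternatively, via an explicit KKT construction with prefix-sum multipliers. Both are standard, and your version is more self-contained; the paper's version is shorter but delegates the real work to the references. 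One small caution on your KKT route: the dual feasibility step (nonnegativity of the within-block prefix sums $\sum_{i\le j}\phi_i'(\theta_B)$) is exactly the nontrivial invariant that PAVA maintains across merges, so if you go that way you still need the amalgamation lemma (or an equivalent inductive argument about how blocks are formed) to justify it---it does not come for free from termination alone.
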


\begin{proof}
    The minimization problem~\eqref{obj:KyFan_Huber_isotonic_regression} is an instance of a generalized isotonic regression problem taking the form
    \begin{align}
        \label{obj:KyFan_Huber_isotonic_regression_rewritten_as_generalized_isotonic_regression}
        \min_{\bnu} \sum_{j=1}^{p} h_j(\nu_j) \quad \st \quad \nu_1 \geq \nu_2 \geq \cdots \geq \nu_J,
    \end{align}
    where $h_j(\nu) = \frac{1}{2} (\nu - \mu_j)^2 + \rho_j H_M(\nu)$, $\rho_j = \rho$ if $j \in \calJ$ and $\rho_j = 0$ otherwise, and the set $\calJ$ is the set of indices of top k largest elements of $\vert{\mu_j}$, as defined in the statement of Lemma~\ref{lemma:equivalence_between_proximal_operator_and_huber_isotonic_regression}.
    Thanks to~\cite{best2000minimizing,ahuja2001fast}, the optimizer of~\eqref{obj:KyFan_Huber_isotonic_regression_rewritten_as_generalized_isotonic_regression} satisfies two key properties: 
    \begin{itemize}[label=$\diamond$,leftmargin=*]
        \item \textbf{Property 1: Optimal solution for a merged block is single-valued.} 
        Suppose we have two adjacent blocks $[a_1, a_2]$ and $[a_2+1, a_3]$ such that the optimal solution of each block is single-valued, that is, the minimization problems
        \begin{align*}
            \left\{
            \begin{array}{cl}
                \min\limits_{\bnu_{a_1:a_2}} & \sum_{j=a_1}^{a_2} h_j(\nu_j) \\
                \st & \nu_{a_1} \geq \cdots \geq \nu_{a_2}
            \end{array}
            \right. \quad \text{and} \quad
            \left\{
            \begin{array}{cl}
                \min\limits_{\bnu_{a_2+1:a_3}} & \sum_{j=a_2+1}^{a_3} h_j(\nu_j) \\
                \st & \nu_{a_2+1} \geq \cdots \geq \nu_{a_3} \\
            \end{array}
            \right.
        \end{align*}
        are solved by $\bnu_{a_1:a_2}^\star$ and $\bnu_{a_2+1:a_3}^\star$ with $\nu_{a_1}^\star = \cdots = \nu_{a_2}^\star$ and $\nu_{a_2+1}^\star = \cdots = \nu_{a_3}^\star$, respectively.
        If $\nu_{a_1}^\star \leq \nu_{a_2+1}^\star$, then the optimal solution for the merged block $[a_1, a_3]$ is single-valued, that is, the minimization problem
        \begin{align*}
            \left\{
            \begin{array}{cl}
                \min\limits_{\bnu_{a_1:a_3}} & \sum_{j=a_1}^{a_3} h_j(\nu_j) \\
                \st & \nu_{a_1} \geq \cdots \geq \nu_{a_3}
            \end{array}
            \right.
        \end{align*}
        is solved by $\bnu_{a_1:a_3}^\star$ with $\nu_{a_1}^\star = \cdots = \nu_{a_3}^\star$.

        \item \textbf{Property 2: No isotonic constraint violation between single-valued blocks implies the solution is optimal.} Suppose that we have $s$ blocks $[a_1, a_2], [a_2+1, a_3], \ldots, [a_{s}+1, a_{s+1}]$ (with $a_1=1$ and $a_{s+1}=p$) such that the optimal solution for each block is single-valued, that is, $\nu^\star_{a_l+1} = \dots = \nu^\star_{a_{l+1}}$ for all $l \in [s]$. Then, if $\hat{\nu}_{a_1} \geq \hat{\nu}_{a_2+1} \geq \ldots \hat{\nu}_{a_{s}}$, then $\hat{\bnu}$ is the optimal solution to~\eqref{obj:KyFan_Huber_isotonic_regression_rewritten_as_generalized_isotonic_regression}.
    \end{itemize}
    
    Using these two properties, it is now easy to see why Algorithm~\ref{alg:PAVA_algorithm} returns the optimal solution. 
    We start by constructing blocks which have length 1.
    The initial value restricted to each block is optimal.
    Then, we iteratively merge adjacent blocks and update the values of $\nu_j$'s whenever there is a violation of the isotonic constraint.
    By the first property, the optimal solution for the merged block is single-valued.
    Therefore, we can compute the optimal solution for the merged block by solving a univariate optimization problem.
    We keep merging blocks until there is no isotonic constraint violation.
    When this happens, by construction, the solution for each block is single-valued and optimal.
    By the second property, the final vector $\hat{\bnu}$ is the optimal solution to~\eqref{obj:KyFan_Huber_isotonic_regression_rewritten_as_generalized_isotonic_regression}, as required.
\end{proof}

\subsection{Proof of Lemma~\ref{lemma:PAVA_merging_linear_time_complexity}}

\begin{namedlemma}
    [~\ref{lemma:PAVA_merging_linear_time_complexity}]
    The merging step (lines 11-14) in Algorithm~\ref{alg:PAVA_algorithm} can be performed in linear time complexity $\mathcal O(p)$.
\end{namedlemma}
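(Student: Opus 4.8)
The plan is to establish the linear-time bound through an amortized analysis, resting on the observation that each block maintained by the PAVA can be summarized by a small set of \emph{additive sufficient statistics}, so that both a merge and the recomputation of the block value in line~\ref{alg_line:PAVA_algorithm_pooling_step} cost only $\mathcal{O}(1)$. Since the algorithm starts with $p$ singleton blocks and every execution of the while-loop body strictly decreases the number of blocks by one, at most $p-1$ merges can occur in total; combined with the constant per-merge cost, this yields the claimed $\mathcal{O}(p)$ bound.

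First I would isolate the sufficient statistics. For a contiguous block $[a_1, a_3]$, the block value solves the univariate problem $\min_\nu \sum_{j=a_1}^{a_3} \left[\tfrac{1}{2}(\nu - \vert{\mu_j})^2 + \rho_j H_M(\nu)\right]$, and because we have already reduced to $\bnu \geq 0$ (Lemma~\ref{lemma:equivalence_between_proximal_operator_and_huber_isotonic_regression}), the only relevant branches of $H_M$ are the quadratic piece on $[0,M]$ and the linear piece on $(M,\infty)$. Writing down the stationarity condition and splitting on whether the minimizer lies below or above $M$, I would obtain two closed-form candidates,
\begin{equation*}
    \nu_{\mathrm{low}} = \frac{\sum_{j=a_1}^{a_3} \vert{\mu_j}}{\sum_{j=a_1}^{a_3} (1+\rho_j)}, \qquad \nu_{\mathrm{high}} = \frac{\sum_{j=a_1}^{a_3} \vert{\mu_j} - M \sum_{j=a_1}^{a_3} \rho_j}{a_3 - a_1 + 1},
\end{equation*}
and select the valid one by a single comparison against $M$ (strong convexity of each $h_j$ guarantees the minimizer is unique). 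The crucial point is that each candidate depends on the block only through the three aggregates $\sum \vert{\mu_j}$, $\sum \rho_j$, and the block length $a_3 - a_1 + 1$, all of which are additive: merging two adjacent blocks simply adds their respective aggregates.

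With these statistics in hand, I would describe the concrete implementation as a stack sweep over the sorted entries. Each entry is first pushed as a singleton block carrying its own triple $(\vert{\mu_j}, \rho_j, 1)$; whenever the top two blocks on the stack violate the isotonic constraint, they are popped, their aggregates are summed in $\mathcal{O}(1)$, the new block value is recomputed via the closed forms above, and the merged block is pushed back. Since each merge reduces the block count by one and there are $p$ blocks to begin with, at most $p-1$ merges occur; as each merge performs $\mathcal{O}(1)$ pop-and-push operations, the total work over the sweep is $\mathcal{O}(p)$, matching the intuition that every entry is touched at most twice.

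The main obstacle I anticipate is precisely the piecewise nature of the Huber loss: unlike plain isotonic mean regression, where the pooled value is a single weighted average, here the objective has a kink at $\nu = M$, so I must verify that (i) the per-block minimizer remains expressible in closed form through additive quantities, and (ii) choosing between the two branches never requires re-examining the individual entries of a pooled block. Both should follow from the strong convexity of each $h_j$ and the monotonicity in $\nu$ of the aggregated derivative, but care is needed to confirm that the branch selection depends only on the aggregates, so that it is genuinely $\mathcal{O}(1)$ and does not conceal a search over the block's elements.
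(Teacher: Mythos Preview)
Your proposal is correct and follows essentially the same approach as the paper: both identify the three additive sufficient statistics $\sum_j |\mu_j|$, $\sum_j \rho_j$, and the block length, both show the per-block minimizer is computable in $\mathcal{O}(1)$ from these aggregates, and both use the amortized argument that at most $p-1$ merges can occur. The only cosmetic difference is that the paper packages the closed form as $\prox_{\frac{P_b}{N_b} H_M}\!\bigl(\frac{S_b}{N_b}\bigr)$ rather than writing out your two branches $\nu_{\mathrm{low}},\nu_{\mathrm{high}}$ explicitly, and presents a forward--backward ``up and down'' sweep (Algorithm~\ref{alg:up_and_down_block_algorithm_for_merging_in_PAVA}) rather than your stack formulation, but these are equivalent implementations of the same idea.
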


\begin{proof}
A detailed implementation of line 11-14 (Step 3) of the PAVA Algorithm~\ref{alg:PAVA_algorithm} that achieves a linear time complexity is presented in Algorithm~\ref{alg:up_and_down_block_algorithm_for_merging_in_PAVA}. In the following, we first show that Algorithm~\ref{alg:up_and_down_block_algorithm_for_merging_in_PAVA} accomplishes the objective in lines 11-14 of Algorithm~\ref{alg:PAVA_algorithm}. We then establish that Algorithm~\ref{alg:up_and_down_block_algorithm_for_merging_in_PAVA} runs in linear time complexity.

\begin{algorithm}[ht]
    \caption{Up and Down Block Algorithm for Merging in PAVA}
    \label{alg:up_and_down_block_algorithm_for_merging_in_PAVA}
    \begin{flushleft}
    \textbf{Input:} vector $\bmu \in \mathbb{R}^p$, nonnegative weights $\brho \in \mathbb{R}_{+}^p$ ($\rho_{[1:k]}=\rho, \rho_{k+1:p}=0$), vector $\hat{\bnu}$ ($\hat{\nu}_j = \text{prox}_{\rho_j H_M}(\vert{\mu_j})$), integer $k \in \mathbb{N}$ (first $k$ elements subject to Huber penalty), and threshold $M > 0$ for the Huber loss function. 
    \end{flushleft}
    \begin{algorithmic}[1]
        \STATE \COMMENT{Initialization for the first block}
        \STATE Initialize $b=1$, $P_1 = \rho_1$, $S_1 = \vert{\mu_1}$, $N_b=1$, $\nu_1$, $r_1 = 1$.
        \STATE $\nu_{\text{prev}} = \hat{\nu}_1$, $j=2$
        \WHILE{$j \leq n$}
            \STATE $b = b + 1$
            \STATE $P_b = \rho_j$, $S_b = \vert{\mu_j}$, $N_b=1$, $\nu = \hat{\nu}_j$
            \STATE \COMMENT{If the value for the current singleton block is greater that of the previous block (isotonic violation), merge the current block with the previous block}
            \IF{$\nu > v_{\text{prev}}$}
                \STATE $b = b - 1$
                \STATE $P_b = P_b + \rho_j$, \, $S_b = S_b + \vert{\mu_j}$, \, $N_b = N_b + 1$, \, $\nu = \text{prox}_{\frac{P_b}{N_b} H_{M}}(\frac{S_b}{N_b})$
                \STATE \COMMENT{Look forward: keep merging the current block with the next block if the isotonic violation persists}
                \WHILE{$j < n$ \AND $\nu \leq \hat{\nu}_j$}
                    \STATE $j = j + 1$
                    \STATE $P_b = P_b + \rho_j$, \, $S_b = S_b + \vert{\mu_j}$, \, $N_b = N_b + 1$, \, $\nu = \text{prox}_{\frac{P_b}{N_b} H_{M}}(\frac{S_b}{N_b})$
                \ENDWHILE
                \STATE \COMMENT{Look backward: keep merging the current block with the previous block if the isotonic violation persists}
                \WHILE{$b > 1$ \AND $\nu_{b-1} < \nu$}
                    \STATE $b = b - 1$
                    \STATE $P_b = P_b + P_{b+1}$, \, $S_b = S_b + S_{b+1}$, \, $N_b = N_b + N_{b+1}$, \, $\nu = \text{prox}_{\frac{P_b}{N_b} H_{M}}(\frac{S_b}{N_b})$
                \ENDWHILE
            \ENDIF
            \STATE \COMMENT{Save the current block's value and the index of the last element in the block}
            \STATE $\nu_b = \nu$, $r_b = j$
            \STATE \COMMENT{Start fresh on the next element}
            \STATE $\nu_{\text{prev}} = \nu$, $j = j + 1$
        \ENDWHILE
        \STATE \COMMENT{Modify the output vector to have the same new value for all elements in each block}
        \FOR{$l = 1, ..., b$}
            \STATE $\hat{\nu}_{[r_{l-1}+1:r_l]} = \nu_l$
        \ENDFOR
        \STATE \textbf{return} $\hat{\bnu}$
    \end{algorithmic}
\end{algorithm}

To prove the first claim, we show that the parameters $P_b, S_b,$ and $\nu_b$ amount to
\begin{align*}
    \textstyle
    P_b = \sum_{j \in \calB(b)} \rho_j, ~ 
    S_b = \sum_{j \in \calB(b)} \vert{\mu_j}, ~ 
    \nu_b = \prox_{\sum_{j \in \calB(b)} \rho_j H_M}(|\mu_j|)
\end{align*}
for each block index $b$, where $\calB(b)$ denoting the set of indices in the $b$'th block. It is easy to verify that Algorithm~\ref{alg:up_and_down_block_algorithm_for_merging_in_PAVA} recursively computes $P_b$ and $S_b$. Thus, we will focus on $\nu_b$.
Note that the computation of the proximal operator in $\nu_b$ is reduced to solving a univariate optimization problem for each $b$ and satisfies
\begin{align*}
    \nu_b =& \argmin_{v \in \R} \sum_{j \in \calB(b)} \left( \frac{1}{2} (v - \vert{\mu_j})^2 + \rho_j H_M(v) \right) \\
    = & \argmin_{v} \sum_{j \in \calB(b)} \left( \frac{1}{2} v^2 - v\vert{\mu_j} + \rho_j H_M(v) \right) \\
    = & \argmin_{v} \left( \frac{1}{2} v^2 - \frac{S_b}{N_b} \vert{\mu_j} + \frac{P_b}{N_b} H_M(v) \right) 
    = \argmin_{v} \left( \frac{1}{2} \left( v - \frac{S_b}{N_b} \right)^2 + \frac{P_b}{N_b} H_M(v) \right) 
    = \prox_{\frac{P_b}{N_b} H_{M}}(\frac{S_b}{N_b}).
\end{align*}

Thus, Algorithm~\ref{alg:up_and_down_block_algorithm_for_merging_in_PAVA} merges two adjacent blocks if the isotonic violation persists, and the output of the proximal operator is the minimizer of the univariate function in the merged block.
This is exactly the same as the objective in lines 11-14 of Algorithm~\ref{alg:PAVA_algorithm}. Hence, the first claim follows.

To show that the algorithm runs in linear time, notice that in the while loop $j \leq p$ in Algorithm~\ref{alg:up_and_down_block_algorithm_for_merging_in_PAVA}, the variable $j$ is incremented by $1$ in each iteration, and the loop terminates when $j = p$.
Although there are two while loops inside the main while loop, the total number of iterations in the two inner while loops is at most $p$.
This is because we start with $p$ blocks, and each iteration of the inner while loops either merges two blocks forward or merges two blocks backward.
The total number of merging operations is at most $p-1$.
Thus, the total number of iterations in the while loop $j \leq p$ is at most $p$.
Lastly, since we can evaluate the proximal operator of the Huber loss function, $\mathbf{H}_M$, in constant time complexity, the total time complexity of Algorithm~\ref{alg:up_and_down_block_algorithm_for_merging_in_PAVA} is $O(p)$.
\end{proof}

\subsection{Proof of Theorem~\ref{theorem:pava_algorithm_linear_time_complexity_and_exact_solution}}

\begin{namedtheorem}
    [~\ref{theorem:pava_algorithm_linear_time_complexity_and_exact_solution}]
    For any $\bmu \in \R^p$, Algorithm~\ref{alg:PAVA_algorithm} returns the \textit{exact} evaluation of $\prox_{\rho g^*}(\bmu)$ in $\mathcal O(p \log p)$.
\end{namedtheorem}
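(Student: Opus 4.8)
The plan is to assemble the three auxiliary lemmas into a single correctness-plus-complexity argument, since the theorem is essentially a synthesis step. First I would invoke Lemma~\ref{lemma:equivalence_between_proximal_operator_and_huber_isotonic_regression} to reduce the evaluation of $\prox_{\rho g^*}(\bmu)$ to the generalized isotonic regression problem~\eqref{obj:KyFan_Huber_isotonic_regression}, namely $\prox_{\rho g^*}(\bmu) = \sgn(\bmu) \odot \bnu^\star$ with $\bnu^\star$ its unique minimizer. The key observation here is that after the SpecialSort step (line 3), the entries of $\vert{\bmu}$ are arranged in nonincreasing order, so the family of pairwise constraints $\nu_j \geq \nu_l$ whenever $\vert{\mu_j} \geq \vert{\mu_l}$ collapses, in the sorted index space, to the single chain $\nu_1 \geq \nu_2 \geq \cdots \geq \nu_p$. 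Thus, in this permuted coordinate system, the problem is exactly a one-dimensional isotonic regression to which PAVA applies, with the nonsmooth Huber penalty appearing only on the top-$k$ coordinates (encoded through $\brho$ in line 1, consistent with the set $\calJ$ of the lemma).

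Second, for exactness I would cite Lemma~\ref{lemma:PAVA_algorithm_exact_solution}, which guarantees that the vector $\hat\bnu$ produced by the pooling loop (lines 11--14) solves~\eqref{obj:KyFan_Huber_isotonic_regression} exactly. Composing with the sign and the inverse permutation $\bpi^{-1}$ applied in line 16 then recovers $\sgn(\bmu) \odot \bnu^\star = \prox_{\rho g^*}(\bmu)$, establishing that the returned vector is the exact proximal point. The only point needing care is that the SpecialSort output is unsorted back correctly, so that the block-merged minimizer in the sorted order maps precisely to the minimizer in the original coordinate order.

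Finally, for the $\mathcal O(p \log p)$ bound I would account for each phase of Algorithm~\ref{alg:PAVA_algorithm} separately. The SpecialSort on line 3 costs $\mathcal O(p \log p)$. The per-coordinate initialization on lines 7--9 solves $p$ univariate proximal problems of the Huber-plus-quadratic form, each in $\mathcal O(1)$, giving $\mathcal O(p)$ overall. By Lemma~\ref{lemma:PAVA_merging_linear_time_complexity}, the merging phase (realized by Algorithm~\ref{alg:up_and_down_block_algorithm_for_merging_in_PAVA}) runs in $\mathcal O(p)$. The concluding sign multiplication and inverse permutation on line 16 are $\mathcal O(p)$. Summing, the total is dominated by the sort, yielding $\mathcal O(p \log p)$.

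I expect no deep obstacle; the substance lives entirely in the three cited lemmas. The one subtlety worth stating explicitly is the reduction in the first paragraph: verifying that after sorting, the constraint set of~\eqref{obj:KyFan_Huber_isotonic_regression} becomes a totally ordered chain, with ties in $\vert{\mu_j}$ handled consistently so that reordering within a tie leaves the objective unchanged, and hence that the output, once unsorted, is genuinely the minimizer in the original order.
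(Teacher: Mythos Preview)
Your proposal is correct and follows essentially the same approach as the paper: invoke Lemmas~\ref{lemma:equivalence_between_proximal_operator_and_huber_isotonic_regression} and~\ref{lemma:PAVA_algorithm_exact_solution} for exactness, then Lemma~\ref{lemma:PAVA_merging_linear_time_complexity} together with the sorting cost for the $\mathcal O(p\log p)$ bound. Your write-up is simply more detailed than the paper's two-sentence synthesis, spelling out the chain-reduction after sorting and the per-phase cost accounting, but the underlying argument is identical.
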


\begin{proof}
    By Lemmas~\ref{lemma:equivalence_between_proximal_operator_and_huber_isotonic_regression} and~\ref{lemma:PAVA_algorithm_exact_solution}, the output of Algorithm~\ref{alg:PAVA_algorithm} computes $\prox_{\rho g^*}$ exactly. 
    The log-linear time complexity statement also holds thanks to Lemma~\ref{lemma:PAVA_merging_linear_time_complexity} and the initial sorting step.
\end{proof}

\subsection{Proof of Theorem~\ref{theorem:compute_g_value_algorithm_correctness}}
\label{appendix_proof:compute_g_value_algorithm_correctness}

\begin{namedtheorem}
    [~\ref{theorem:compute_g_value_algorithm_correctness}]
        For any $\bbeta \in \R^p$, Algorithm~\ref{alg:compute_g_value_algorithm} computes the exact value of $g(\bbeta)$, defined in~\eqref{eq:function_g_definition}, in $\mathcal O(p + p \log k)$.
\end{namedtheorem}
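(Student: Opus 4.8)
The plan is to first derive a closed-form expression for the optimal value of the convex program defining $g$ in~\eqref{eq:function_g_definition}, and then verify that Algorithm~\ref{alg:compute_g_value_algorithm} evaluates exactly this expression within the stated complexity. Throughout I assume $g(\bbeta) < \infty$, which holds for every $\bbeta$ produced by the proximal step since such points lie in $\operatorname{dom}(g)$; otherwise the feasible set of~\eqref{eq:function_g_definition} is empty and $g(\bbeta) = \infty$. After sorting so that $\vert{\beta_1} \geq \cdots \geq \vert{\beta_p}$, the inner minimization is convex over $\{ z_j \in [\vert{\beta_j}/M, 1],\ \bm{1}^\top \bz \leq k \}$, so KKT conditions are necessary and sufficient. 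Introducing a multiplier $\mu \ge 0$ for the budget constraint and writing $t = \sqrt{\mu}$, a termwise analysis of stationarity places each $z_j^\star$ into one of three regimes: \emph{saturated} ($z_j^\star = 1$, when $\vert{\beta_j} \geq t$), \emph{interior} ($z_j^\star = \vert{\beta_j}/t$, when $\vert{\beta_j} < t$), or \emph{floored} ($z_j^\star = \vert{\beta_j}/M$, which stationarity forces to require $t \geq M$).

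The crucial point of Step~1 is to rule out the floored regime whenever $g(\bbeta) < \infty$. A floored coordinate needs $t \ge M$, but since $\Vert{\bbeta}_\infty \le M$ no coordinate can then be saturated (saturation needs $\vert{\beta_j} \ge t \ge M$, possible only at equality), forcing the breakpoint $r = 0$ and hence $t = \Vert{\bbeta}_1/k \le M$ by feasibility --- contradicting $t \ge M$ except exactly on the feasibility boundary, where the formula below still holds by continuity. Thus the big-$M$ floor is inactive, and the optimum is governed by a single breakpoint $r$: coordinates $1,\dots,r$ are saturated and $r+1,\dots,p$ are interior, with the budget binding. Solving $r + t^{-1}\sum_{l>r}\vert{\beta_l} = k$ gives $t = \big(\sum_{l>r}\vert{\beta_l}\big)/(k-r)$, and substituting back yields
\[
    g(\bbeta) = \tfrac12 \Big( \textstyle\sum_{j=1}^{r} \beta_j^2 + \tfrac{1}{k-r}\big( \sum_{l>r}\vert{\beta_l} \big)^2 \Big),
\]
the squared $k$-support-norm value, with $r$ characterized by the consistency conditions $\vert{\beta_{r+1}} \le t \le \vert{\beta_r}$.

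Step~2 shows the algorithm recovers $r$ and this value. Writing $t_r = \big(\sum_{l>r}\vert{\beta_l}\big)/(k-r)$, at iteration $j$ the algorithm holds $s = \sum_{l \ge j}\vert{\beta_l}$, so $\overline{s} = t_{j-1}$, and it breaks at the first $j$ with $t_{j-1} \ge \vert{\beta_j}$. The key monotonicity identity, a short computation, is
\[
    t_{j} - t_{j-1} = \frac{\sum_{l\ge j}\vert{\beta_l} - (k-j+1)\vert{\beta_j}}{(k-j)(k-j+1)},
\]
so the break condition $t_{j-1} \ge \vert{\beta_j}$ is \emph{equivalent} to $t_j \ge t_{j-1}$. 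Hence $t_{j-1}$ strictly decreases while scanning continues, and the first break index $j^\star$ is the turning point; setting $r = j^\star-1$, the no-break condition at $j^\star-1$ gives $t_r < \vert{\beta_r}$ and the break condition gives $t_r \ge \vert{\beta_{r+1}}$, which are exactly the consistency conditions of Step~1 (the partial sort guarantees $\vert{\beta_{r+1}} \ge \vert{\beta_l}$ for all $l > r+1$, so interior feasibility extends to the whole tail). The assignment $\psi_j = \vert{\beta_j}$ for $j \le r$ and $\psi_j = \overline{s} = t_r$ for $r < j \le k$ then gives $\tfrac12\sum_{j=1}^k \psi_j^2 = \tfrac12\big(\sum_{j\le r}\beta_j^2 + (k-r)t_r^2\big)$, and since $(k-r)t_r^2 = \tfrac{1}{k-r}\big(\sum_{l>r}\vert{\beta_l}\big)^2$ this equals $g(\bbeta)$. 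The identity also forces a break by $j = k$ (as $\sum_{l>k}\vert{\beta_l}\ge 0$), covering the degenerate case $\Vert{\bbeta}_0 \le k$.

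For Step~3 (complexity), the required partial sort needs only the $k$ largest magnitudes in sorted order, obtainable by a single pass maintaining a size-$k$ min-heap at cost $\mathcal O(p\log k)$, followed by $\mathcal O(k\log k)$ to extract them sorted; the running sum $s$ costs $\mathcal O(p)$, and the loop and final summation cost $\mathcal O(k)$, for a total of $\mathcal O(p + p\log k)$. The main obstacle is Step~1: rigorously eliminating the big-$M$ floor constraint so that the finite-$M$ problem collapses to the $k$-support-norm formula the algorithm implicitly uses, and handling the feasibility boundary by continuity; the monotonicity identity of Step~2 is the other essential ingredient but reduces to a direct calculation once the quantities $t_r$ are set up.
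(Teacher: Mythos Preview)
Your proof is correct and takes a genuinely different route from the paper. The paper invokes an external convex-hull characterization via majorization \citep[Theorem~4]{kim2022convexification} to rewrite $g(\bbeta)$ as
\[
\min_{\bphi}\Big\{\tfrac12\|\bphi\|_2^2 \;:\; |\bbeta| \preceq_m \bphi,\ 0\le\phi_k\le\cdots\le\phi_1\le M,\ \phi_{k+1}=\cdots=\phi_p=0\Big\},
\]
and then argues directly that the greedy assignment in Algorithm~\ref{alg:compute_g_value_algorithm} solves this majorization-constrained least-squares problem. You instead perform a direct KKT analysis on the defining program~\eqref{eq:function_g_definition}, show that the big-$M$ lower bound $z_j\ge|\beta_j|/M$ is inactive whenever $\bbeta\in\operatorname{dom}(g)$, reduce to the squared $k$-support norm, and then match its known evaluation to the algorithm via the monotonicity identity for $t_r$. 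Your approach is more elementary and self-contained---it requires no external structural lemma---and it makes explicit the connection to the $k$-support norm that the paper only mentions in passing. The paper's route, in turn, situates the algorithm within a broader majorization framework that may generalize more readily to other separable regularizers.

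One minor point: your contradiction argument in Step~1 mixes the interior formula $t=\|\bbeta\|_1/k$ with the floored regime, where that formula does not directly apply. A cleaner phrasing is to first solve the relaxed problem with the floor dropped, observe that its optimum satisfies $t_r<|\beta_r|\le M$ for $r\ge 1$ (by your own monotonicity argument) and $t_0=\|\bbeta\|_1/k\le M$ for $r=0$ (by feasibility), hence $z_j^\star=|\beta_j|/t_r\ge|\beta_j|/M$ automatically; the relaxed optimum is therefore feasible, and so optimal, for the constrained problem.
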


\begin{proof}

We first show that the algorithm correctly computes the value of $g(\bbeta)$ and then analyze its computational complexity. Define the mixed-binary set
\begin{align*}
    \calS_0 = \left\{ (t, \bbeta) \;\middle|\; \textstyle \frac{1}{2} \sum_{j \in [p]} \beta_j^2 \leq t, \, \|\bbeta \|_\infty \leq M, \, \|\bbeta \|_0 \leq k \right\}.
\end{align*}
Using the perspective and big-M reformulation techniques, the set $\calS_0$ admits the equivalent representation
\begin{align*}
    \calS_0 = \left\{ (t, \bbeta) \;\middle|\; \exists \bz \in \{0,1\}^p ~ \st ~ \textstyle \frac{1}{2} \sum_{j \in [p]} \beta_j^2 / z_j \leq t, \, \bm 1^\top \bz \leq k, \, -M z_j \leq \beta_j \leq M z_j ~~ \forall j \in [p] \right\}.
\end{align*}
Following the proof of Lemma~\ref{lemma:equivalence_between_perspective_relaxation_and_convexification}, one can show that the closed convex hull of $\calS_0$ is given by 
\begin{align*}
    \cl \conv(\calS_0) = \left\{ (t, \bbeta) \;\middle|\; \exists \bz \in [0,1]^p ~ \st ~ \textstyle \frac{1}{2} \sum_{j \in [p]} \beta_j^2 / z_j \leq t, \, \bm 1^\top \bz \leq k, \, -M z_j \leq \beta_j \leq M z_j ~~ \forall j \in [p] \right\}.
\end{align*}
Therefore, the implicit function $g$ can be written as the evaluation of the support function of $\cl\conv(\calS_0)$ at $(1, \bm 0)$, that is,
\begin{align}
    \label{eq:g:S0}
    g(\bbeta) = \min  \{ t : (t, \bbeta) \in \cl\conv(\calS_0) \}.
\end{align}
Notice that the set $\calS_0$ is sign- and permutation-invariants. Hence, by ~\citep[Theorem~4]{kim2022convexification}, its closed convex hull admits the following (different) lifted represenation
\begin{align}
    \label{eq:diff:conv}
    \cl \conv(\calS_0) = \left\{ (t, \bbeta) \;\middle|\; \exists \bphi \in \R^p ~ \st ~
    \begin{array}{l}
        \frac{1}{2} \sum_{j \in [p]} \phi_j^2 \leq t, \, \vert{\bbeta} \preceq_m \bphi, \\
        0 \leq \phi_k \leq \ldots \leq \phi_1 \leq M, \\
        \phi_{k+1} = \phi_{k+2} = \ldots = \phi_n = 0 
    \end{array}
    \right\},
\end{align}
where the absolute value operator $\vert{\cdot}$ is applied to a vector in an element-wise fashion, and the constraint $\vert{\bbeta} \preceq_m \bphi$ denotes that $\bphi$ majorizes $\vert{\bbeta}$, that is,
\begin{align*}
    \textstyle \vert{\bbeta} \preceq_m \bphi  \quad \iff \quad  \sum_{j \in [l]} \vert{\beta_j} \leq \sum_{j \in [l]} \phi_j \quad \forall l \in [p-1] \quad \text{and} \quad \sum_{j \in [p]} \phi_j = \sum_{j \in [p]} \vert{\beta_j}.
\end{align*}
Using this alternative convex hull description of $\calS_0$ in~\eqref{eq:diff:conv} and the implicit formulation~\eqref{eq:g:S0}, we may conclude that
\begin{align}
    g(\bbeta) = \min\limits_{\bphi \in \R^p}
    \textstyle \left\{ \frac{1}{2} \sum_{j \in [p]} \phi_j^2 :  \vert{\bbeta} \preceq_m \bphi, \, 0 \leq \phi_k \leq \ldots \leq \phi_1 \leq M, \, \phi_{k+1} = \phi_{k+2} = \ldots = \phi_n = 0
    \right\}. \label{appendix_obj:compute_g_value_majorization_formulation}
\end{align}
In the following we show that Algorithm~\ref{alg:compute_g_value_algorithm} can efficiently solve the minimization problem in~\eqref{appendix_obj:compute_g_value_majorization_formulation}. At the first iteration $j=1$ of the algorithm, we have
\begin{align*}
    \textstyle k \phi_1 \geq \sum_{j \in [k]} \phi_j = \sum_{j \in [p]} \phi_j \geq \sum_{j \in [p]} \vert{\beta_j} \quad \Rightarrow \quad \phi_1 \geq \frac{1}{k} \sum_{j \in [p]} \vert{\beta_j}.
\end{align*}
At the same time, we also need to satisfy $\vert{\beta_1} \leq \phi_1$ from the first majorization constraint. We now discuss two cases
\begin{itemize}[label=$\diamond$,leftmargin=*]
    \item \textbf{Case 1:} If $\frac{1}{k} \sum_{j \in [p]} \vert{\beta_j} \geq \vert{\beta_1}$, in order to solve the minimization problem in~\eqref{appendix_obj:compute_g_value_majorization_formulation}, we set $\phi_1 = \frac{1}{k} \sum_{j=1}^n \vert{\beta_j}$. Notice that $\phi_1 \leq M$ is automatically satisfied because $\phi_1 = \frac{1}{k} \sum_{j \in [p]} \vert{\beta_j} = \frac{1}{k} \sum_{j \in [p]} M z_j \leq M$. This leads to $\phi_2 = \ldots = \phi_k = \frac{1}{k} \sum_{j \in [p]} \vert{\beta_j}$.
    To see this, for the sake of contradition, assume that $\exists j \in \{2, \ldots, k\}$ such that $\phi_j < \frac{1}{k} \sum_{j \in [p]} \vert{\beta_j}$. 
    Since $\phi_j \leq \phi_1 = \frac{1}{k} \sum_{j \in [p]} \vert{\beta_j}$, we have $\sum_{j \in [k]} \phi_j < \sum_{j \in [k]} \frac{1}{k} \sum_{j \in [p]} \vert{\beta_j} = \sum_{j \in [p]} \vert{\beta_j}$, which contradicts the majorization constraint.

    \item \textbf{Case 2:} If $\frac{1}{k} \sum_{j \in [n]} \vert{\beta_j} < \vert{\beta_1}$, we can set $\phi_1 = \vert{\beta_1}$. Notice that $\phi_1 \leq M$ is automatically satisfied because $\vert{\beta_1} \leq M z_1 \leq M$.
    Then we are left with $k-1$ coefficients to set, and we can follow the same argument as we did for $j=1$ with slight difference that the majorization constraints are changed to
    \begin{align*}
        \textstyle
        \sum_{j=2}^l \phi_j \geq \sum_{j=2}^l \vert{\beta_j} \quad \forall l \in \{2, \ldots, p-1\} \quad \text{and} \quad \sum_{j=2}^p \phi_j = \sum_{j=2}^p \vert{\beta_j}.
    \end{align*}
\end{itemize}
We repeat this process until we set all $k$ coefficients $\phi_1, \ldots, \phi_k$, as implemented by Algorithm~\ref{alg:compute_g_value_algorithm}.
The output of the algorithm coincides with the optimal value of the minimization problem in~\eqref{appendix_obj:compute_g_value_majorization_formulation}. Hence, the first claim follows.

As for the complexity claim, it is easy to see that Algorithm~\ref{alg:compute_g_value_algorithm}.
only requires partial sorting step on Line 2, which has a complexity of $\mathcal O(p \log k)$. The summation step on Line 3 has a complexity of $\mathcal O(p)$. The for-loop step on Line 4-8 has a complexity of $\mathcal O(k)$, so does the final summation step on Line 9. Therefore, the overall computational complexity of Algorithm~\ref{alg:compute_g_value_algorithm} is $\mathcal O(p + p \log k)$. This concludes the proof.
\end{proof}

\clearpage
\section{Experimental Setup Details}
\label{appendix:experimental_setup}

\subsection{Setup for Evaluating Proximal Operators}
\label{appendix:setup_for_evaluating_proximal_operators}
The synthetic data generation process is as follows.
We sample the input vector $\bgamma \in \bbR^p$ from the standard multivariate Gaussian distribution, $\bgamma \sim \calN(\mathbf{0}, \bI_p)$, where $\bI_p$ denotes the identity matrix with dimension $p$.
We vary the dimension $p \in \{2^0, 2^1, ..., 2^{10}\} \times 10^2$ and set the cardinality $k$ to be $10$, the box constraint $M$ to be $1.0$, and the weight parameter $\rho$  to be $1.0$.
We report the running time for evaluating these proximal operators.
To obtain the mean and standard deviation of the running time, we repeat each setting 5 times, each with a different random seed.

\subsection{Setup for Solving the Perspective Relaxation}
\label{appendix:setup_for_solving_the_perspective_relaxation}

We generate our synthetic datasets in the following procedure.
First, we sample each feature vector $\bx_i \in \bbR^p $ from a Gaussian distribution, $\bx_i \sim \calN(\mathbf{0}, \bSigma)$, where the covariance matrix has entries $\Sigma_{jl} = \sigma^{\vert{j-l}}$.
The variable $\sigma \in (0, 1)$ controls the features correlation: if we increase $\sigma$, feature columns in the design matrix $\bX$ become more correlated.
Throughout the experimental section, we set $\sigma=0.5$.
Next, we create the sparse coefficient vector $\bbeta^*$ with $k$ equally spaced nonzero entries, where $\beta^*_j = 1$ if $j \text{ mod } (p/k) = 0$ and $\beta^*_j = 0$ otherwise.
After these two steps, we build the prediction vector $\by$.
If our loss function is squared error loss (regression task), we set $y_i = \bx_i^T \bbeta^* + \epsilon_i$, where $\epsilon_i$ is a Gaussian random noise with $\epsilon_i \sim \calN(0, \frac{\Vert{\bX \bbeta^*}}{\text{SNR}})$, and $\text{SNR}$ stands for the signal-to-noise ratio.
In all our experiments, we choose $\text{SNR}=5$.
If our loss function is logistic loss (classification task), we set $y_i \sim Bern(\bx_i^T \bbeta^* + \epsilon_i)$, where $Bern(P)$ is a Bernoulli random variable with $\bbP(y_i = 1) = P$ and $\bbP(y_i = -1) = 1 - P$.
For this experiment, we vary the feature dimension $p \in \{1000, 2000, 4000, 8000, 16000\}$.
We control the sample size by using a parameter called $n$-to-$p$ ratio, or sample to feature ratio.
For the results in the main paper, we set $n$-to-$p$ ratio to be $1.0$, the box constraint $M$ to be $2$, the number of nonzero coefficients k (also the cardinality constraint) to be $10$, and $\ell_2$ regularization coefficient $\lambda_2$ to be $1.0$.
Again, we report and compare the running times, with means and standard deviations calculated based on 5 repeated simulations with different random seeds.

\subsection{Setup for Certifying Optimality}
\label{appendix:setup_for_certifying_optimality}

\paragraph{Datasets and Preprocessing}
We run on both synthetic and real-world datasets.
For the synthetic datasets, we run on the largest synthetic instances ($n=16000$ and $p=16000$).
For the real-world datasets, we use the dataset cancer drug response~\cite{liu2020deepcdr} for linear regression and DOROTHEA~\cite{asuncion2007uci} for logistic regression.

The cancer drug response dataset has 822 samples and orginally has 34674 features.
However, many feature only has a single value, so we prune all these features, which result in 2200 features.
The DOROTHEA dataset originally has 1150 samples and 100000 features.
However, the dataset is highly imbalanced (only 112 samples for the minority class).
To alleviate this data imbalance problem and train a meaningful classifier, we apply the resampling preprocessing technique on the DOROTHEA dataset.
We sample (with replacement) 1150 samples from each class.
Thus, in total, we have 2300 samples.
After pruning redundant features, we have 89989 features.

For both the cancer drug response and DOROTHEA dataset, we center each feature to have mean $0$ and norm equal to $1$.

\paragraph{Choice of Hyperparameters}
For the cardinality constraint $k$, we set $k=10$ for both synthetic datasets.
This corresponds to the true number of nonzero coefficients.
For the cancer drug response dataset, we set $k=5$.
For DOROTHEA, we set $k=10$.
The $k$ values for both real-world datasets are selected based on doing 5 fold cross validation with a heuristic sparse learning algorithm first.
We apply the heuristics to get a path of solutions with different number of nonzero coefficients.
The $k$ values are selected on this path of solutions right before the loss objective just starts to plateau or increase on the test set.

For the $\ell_2$ regularization coefficient, we set $\lambda_2=1$.
For the box constraint, we set $M=2$ for the synthetic datasets and DOROTHEA.
The infinity norm of the final optimal solution less than this value.
For the cancer drug response dataset, we set $M=5$, which is also bigger than the infinity norm of the final optimal solution.

\paragraph{Branch and Bound}
We first provide some background on the branch and bound (BnB) framework and then elaborate on the details of our BnB implementation.
Problem~\ref{obj:original_sparse_problem} is both nonconvex and NP-hard.
Thus, any method capable of certifying optimality, including heuristics, branching, bounding (calculating lower bounds), presolving, among many others.
Our paper focuses on rapidly solving relaxation problems at both the root and node levels.
This, in turn, provides fast lower bound certificates for the BnB algorithm.

For our method, we write a customized BnB framework.
We use Algorithm~\ref{alg:main_algorithm} to solve the relaxation at each node and use Equation~\eqref{eq:fenchel_duality_theorem_F_y(Ax)+G(x)} to calculate the safe lower bound to prune the search space.
To find feasible solutions, we use an effective approach called beamsearch~\cite{liu2022fasterrisk} from the existing literature.
For branching, we branch on the feature based on the best feasible solution found by the beamsearch algorithm at each node.
For the nonzero coefficients of this solution, we branch on the variable which would lead to the largest loss increase if the coefficient to $0$.
The intuition is that such a variable is important and should be branched early in the BnB framework.

\subsection{Computing Platforms}
When investigating how much GPU can accelerate our computation, we ran the experiments with both CPU and GPU implementations on the Nvidia A100s.
For everything else, we ran the experiments with the CPU implementation on AMD Milan with CPU speed 2.45 Ghz and 8 cores.
We conducted the experiments in the the Delta system at the National Center for Supercomputing Applications from the Advanced Cyberinfrastructure Coordination Ecosystem: Services \& Support (ACCESS) program~\cite{boerner2023access}.

\clearpage
\section{Additional Numerical Results}
\label{appendix:numerical}

\subsection{Perturbation Study regarding Solving the Perspective Relaxation}
\label{appendix:numerical_solve_convex_relaxation}

\subsubsection{Perturbation Study on $M$ Values}

\begin{figure*}[!ht]
    \centering
    \includegraphics[width=0.9\textwidth]{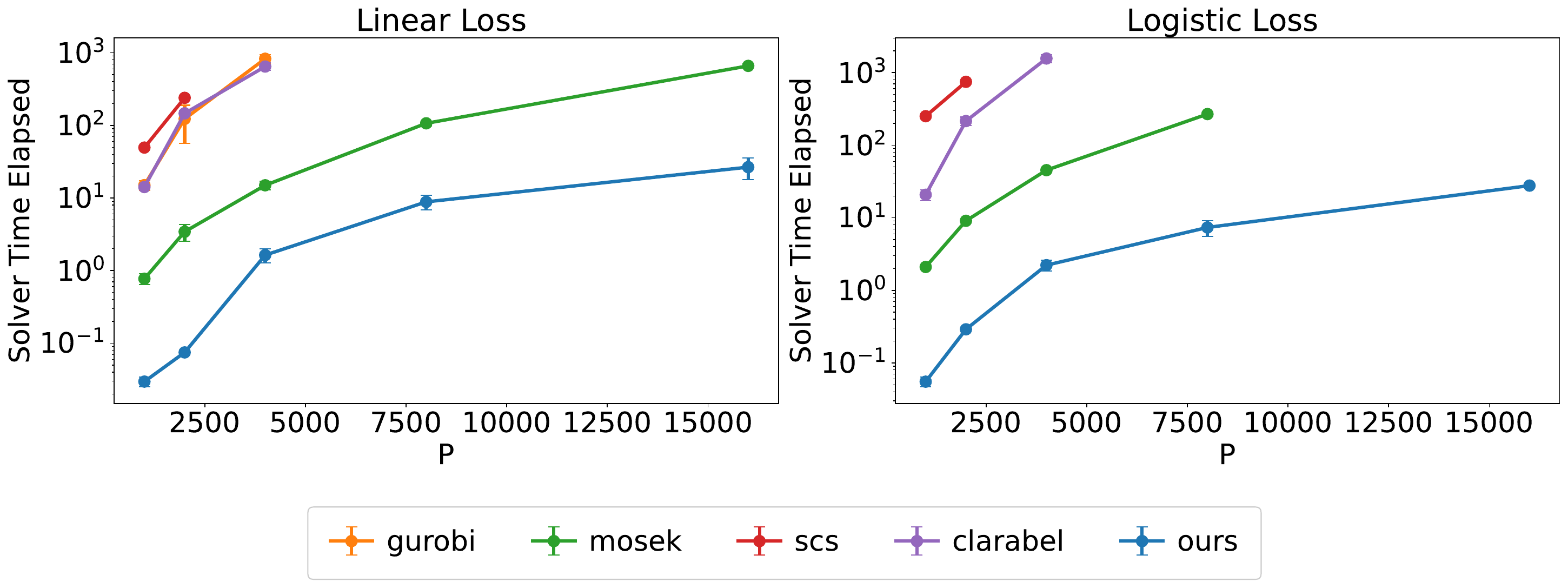}
    \caption{Solve the perspective relaxation in Problem~\eqref{obj:original_sparse_problem_perspective_formulation_convex_relaxation}.
    We set $M=1.2$, $\lambda_2=1.0$, $n/p=1$, and $k=10$.}
    \label{fig:solve_convex_relaxation_M_1.2_lambda2_1.0_n_p_ratio_1.0}
\end{figure*}

\begin{figure*}[!ht]
    \centering
    \includegraphics[width=0.9\textwidth]{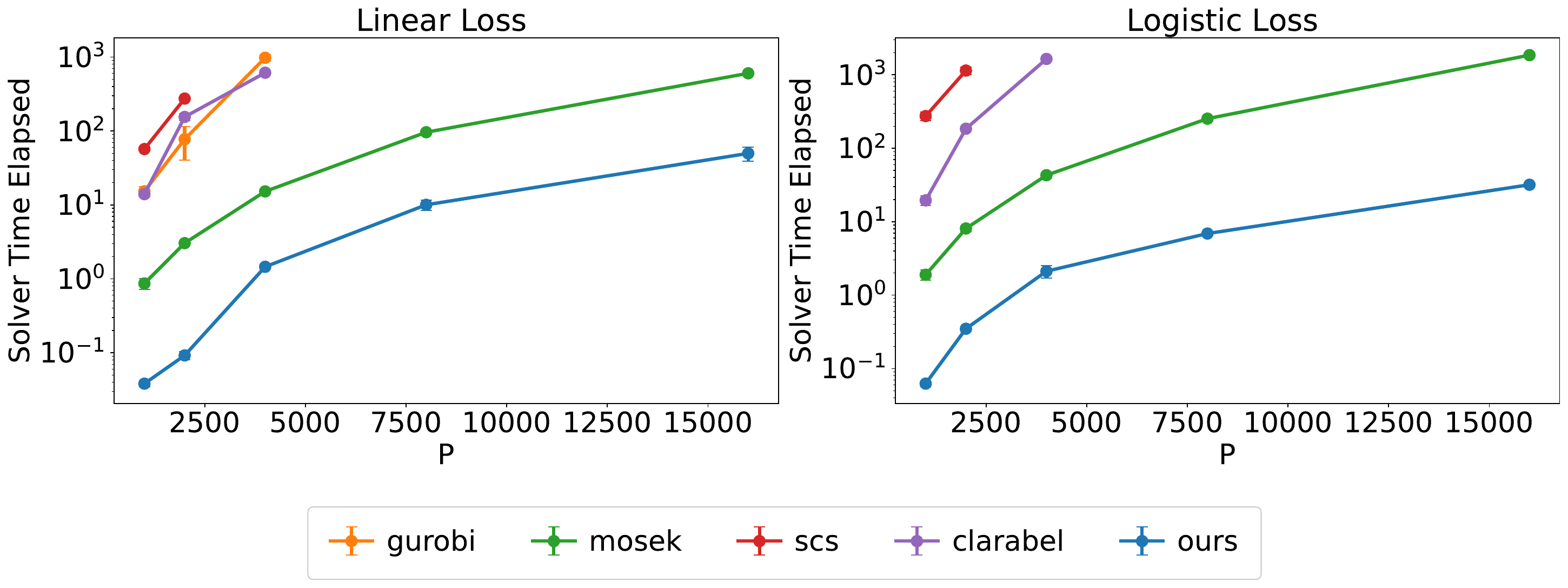}
    \caption{Solve the perspective relaxation in Problem~\eqref{obj:original_sparse_problem_perspective_formulation_convex_relaxation}.
    We set $M=1.5$, $\lambda_2=1.0$, $n/p=1$, and $k=10$.}
    \label{fig:solve_convex_relaxation_M_1.5_lambda2_1.0_n_p_ratio_1.0}
\end{figure*}

\begin{figure*}[!ht]
    \centering
    \includegraphics[width=0.9\textwidth]{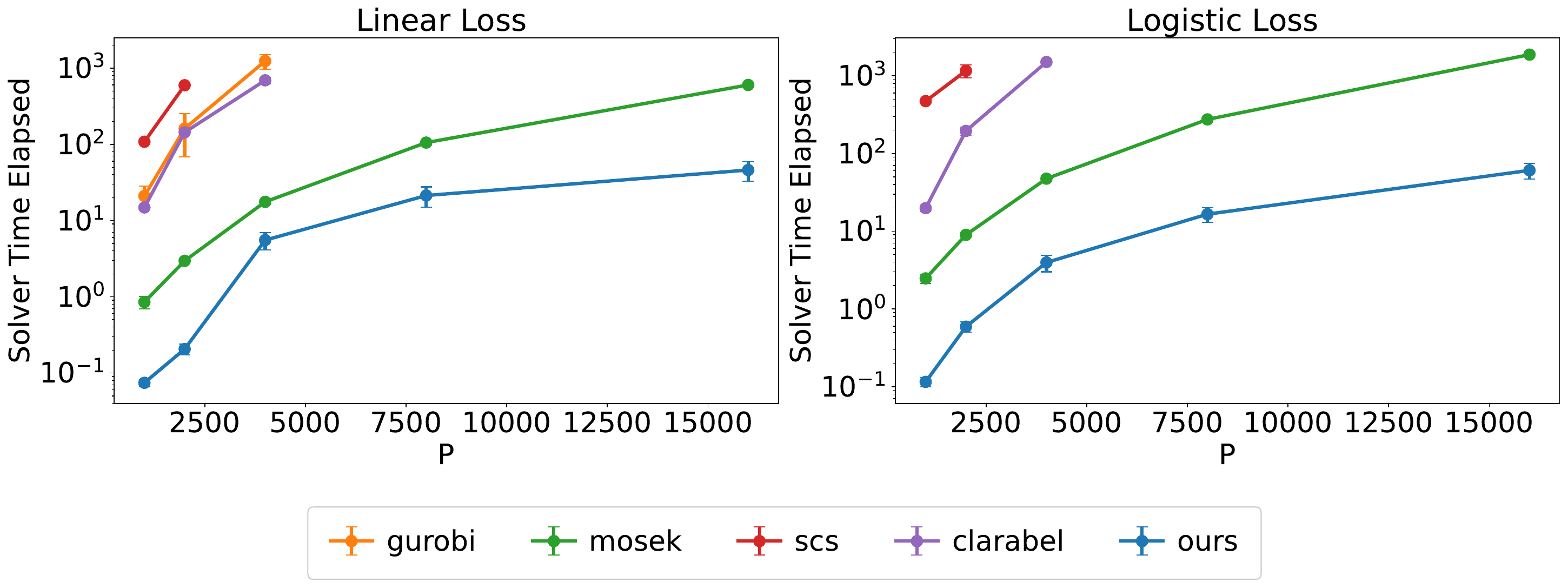}
    \caption{Solve the perspective relaxation in Problem~\eqref{obj:original_sparse_problem_perspective_formulation_convex_relaxation}.
    We set $M=3.0$, $\lambda_2=1.0$, $n/p=1$, and $k=10$.}
    \label{fig:solve_convex_relaxation_M_3.0_lambda2_1.0_n_p_ratio_1.0}
\end{figure*}

\begin{figure*}[!ht]
    \centering
    \includegraphics[width=0.9\textwidth]{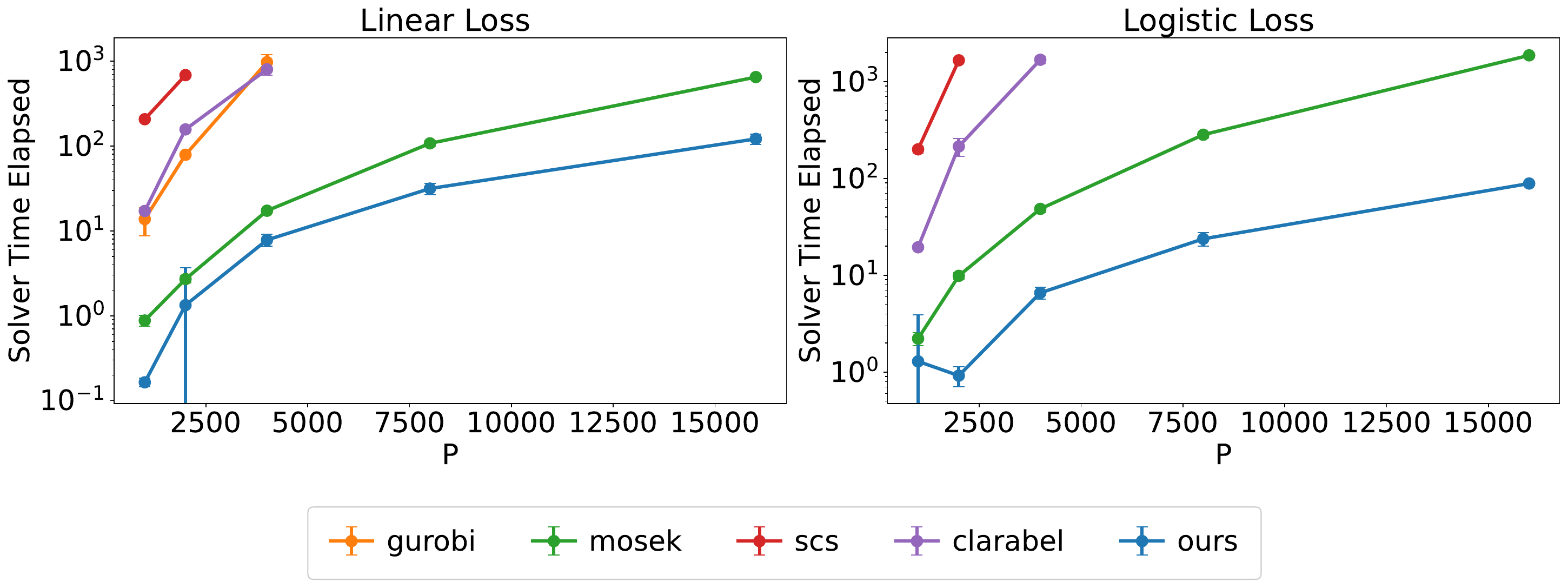}
    \caption{Solve the perspective relaxation in Problem~\eqref{obj:original_sparse_problem_perspective_formulation_convex_relaxation}.
    We set $M=5.0$, $\lambda_2=1.0$, $n/p=1$, and $k=10$.}
    \label{fig:solve_convex_relaxation_M_5.0_lambda2_1.0_n_p_ratio_1.0}
\end{figure*}

\begin{figure*}[!ht]
    \centering
    \includegraphics[width=0.9\textwidth]{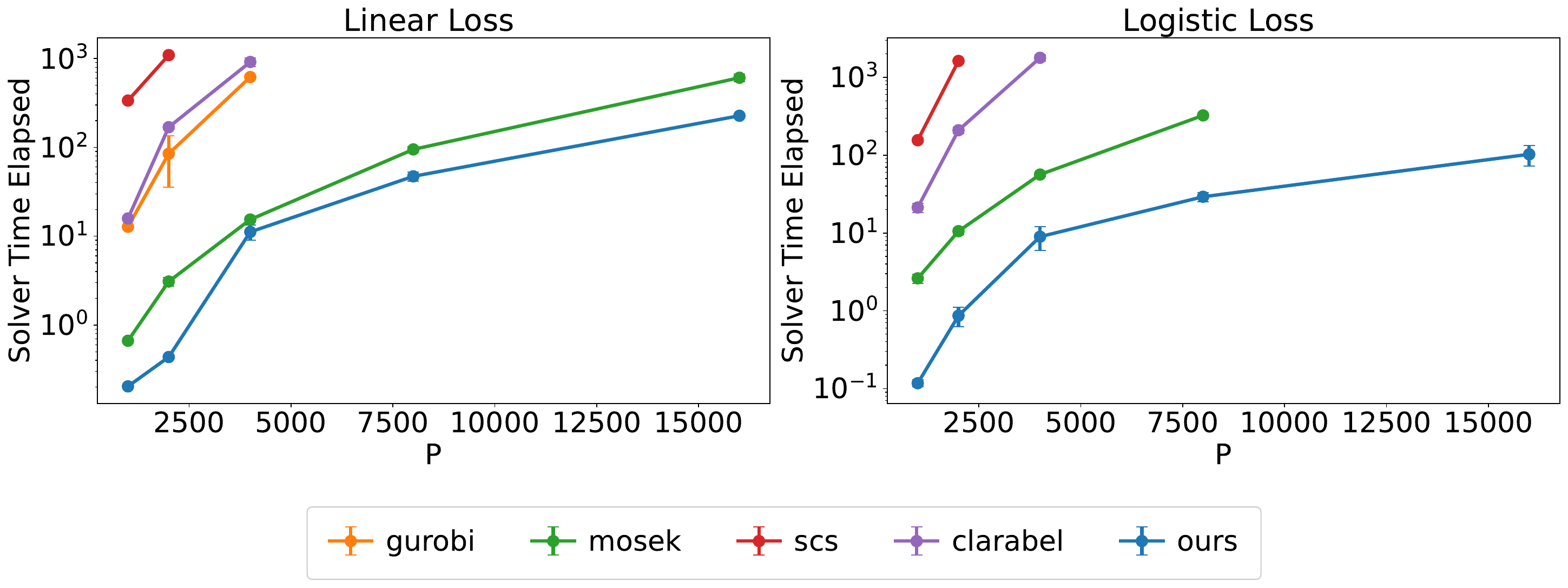}
    \caption{Solve the perspective relaxation in Problem~\eqref{obj:original_sparse_problem_perspective_formulation_convex_relaxation}.
    We set $M=10.0$, $\lambda_2=1.0$, $n/p=1$, and $k=10$.}
    \label{fig:solve_convex_relaxation_M_10.0_lambda2_1.0_n_p_ratio_1.0}
\end{figure*}

\clearpage

\subsubsection{Perturbation Study on $\lambda_2$ Values}

\begin{figure*}[!ht]
    \centering
    \includegraphics[width=0.9\textwidth]{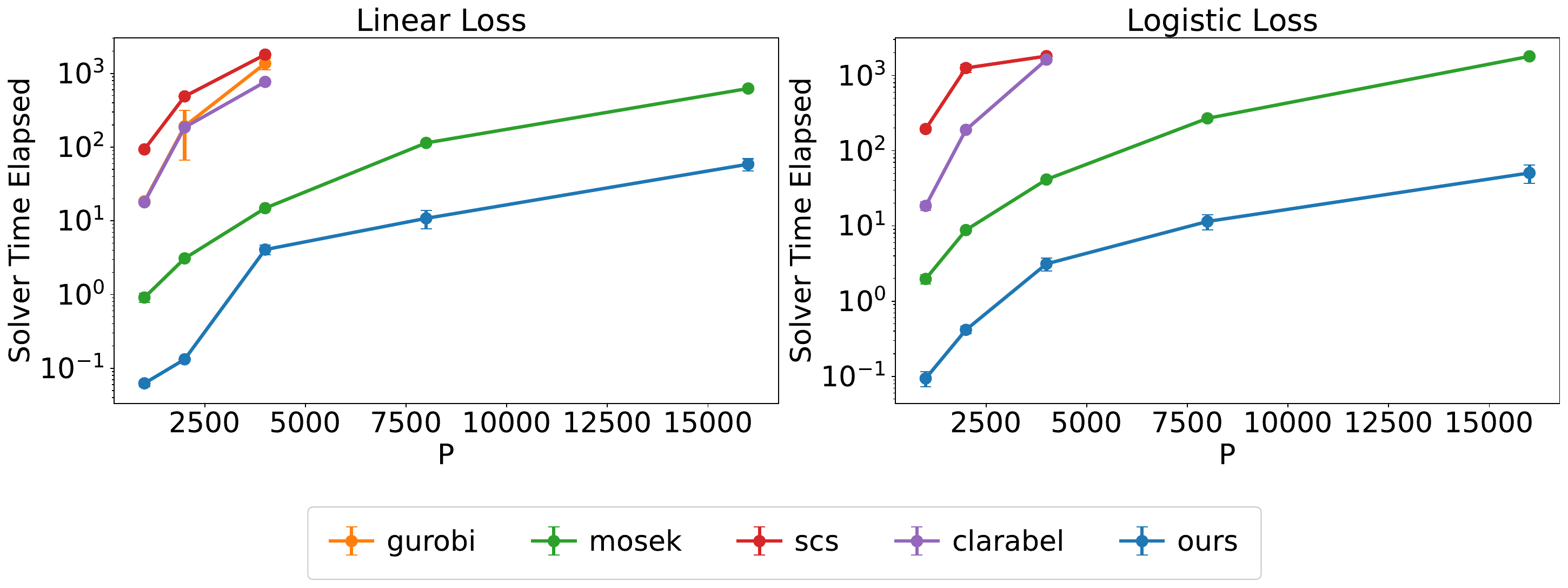}
    \caption{Solve the perspective relaxation in Problem~\eqref{obj:original_sparse_problem_perspective_formulation_convex_relaxation}.
    We set $M=2.0$, $\lambda_2=0.1$, $n/p=1$, and $k=10$.}
    \label{fig:solve_convex_relaxation_M_2.0_lambda2_0.1_n_p_ratio_1.0}
\end{figure*}

\begin{figure*}[!ht]
    \centering
    \includegraphics[width=0.9\textwidth]{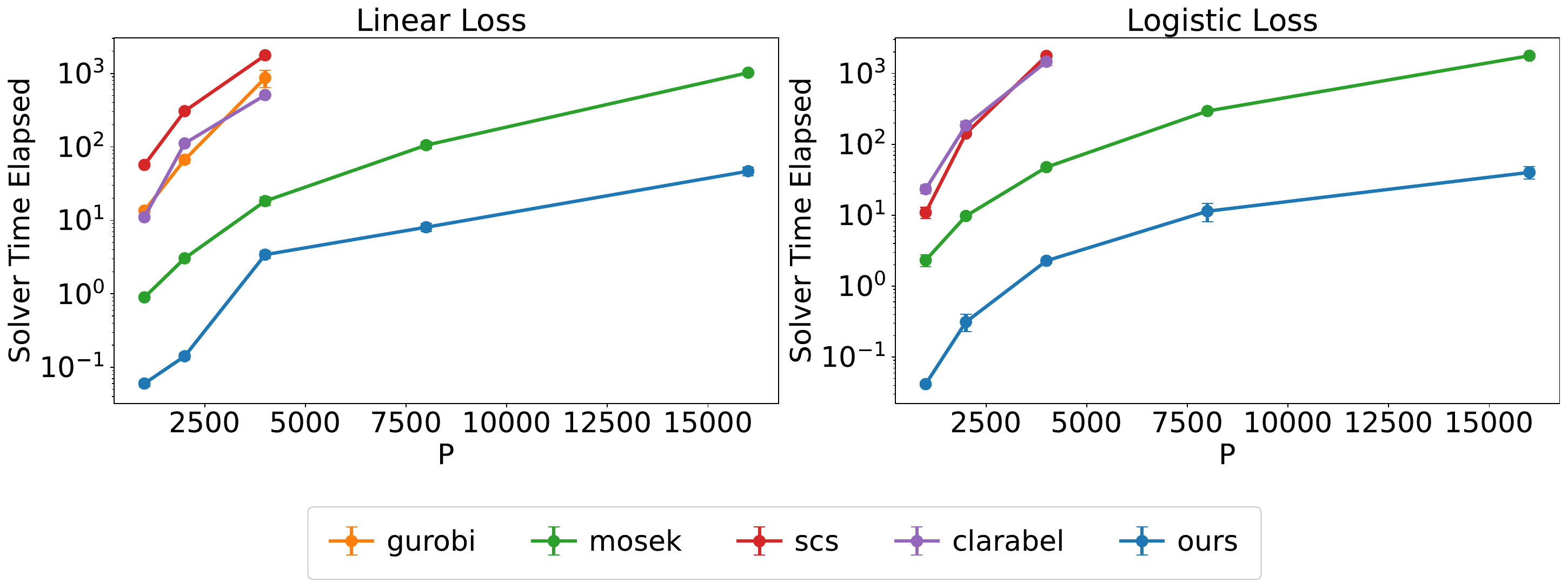}
    \caption{Solve the perspective relaxation in Problem~\eqref{obj:original_sparse_problem_perspective_formulation_convex_relaxation}.
    We set $M=2.0$, $\lambda_2=10.0$, $n/p=1$, and $k=10$.}
    \label{fig:solve_convex_relaxation_M_2.0_lambda2_10.0_n_p_ratio_1.0}
\end{figure*}

\clearpage

\subsubsection{Perturbation Study on $n$-to-$p$ Ratios}

\begin{figure*}[!ht]
    \centering
    \includegraphics[width=0.9\textwidth]{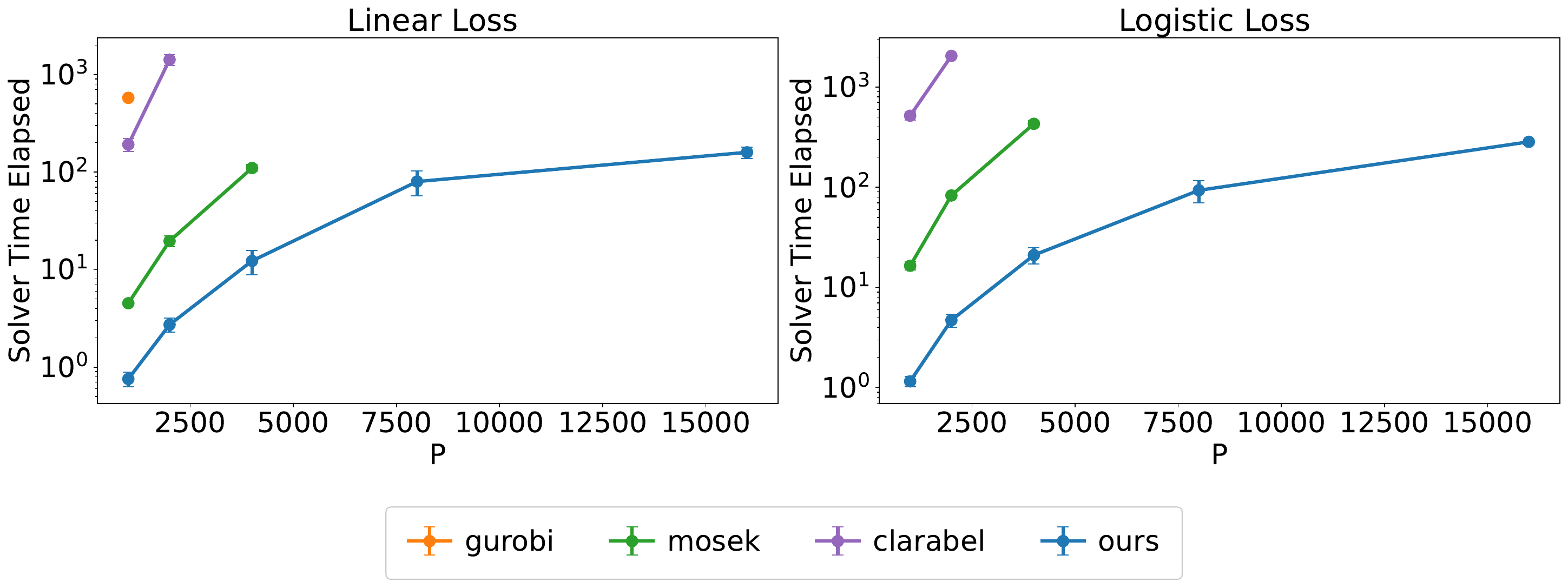}
    \caption{Solve the perspective relaxation in Problem~\eqref{obj:original_sparse_problem_perspective_formulation_convex_relaxation}.
    We set $M=2.0$, $\lambda_2=1.0$, $n/p=10.0$, and $k=10$.}
    \label{fig:solve_convex_relaxation_M_2.0_lambda2_1.0_n_p_ratio_10.0}
\end{figure*}

\begin{figure*}[!ht]
    \centering
    \includegraphics[width=0.9\textwidth]{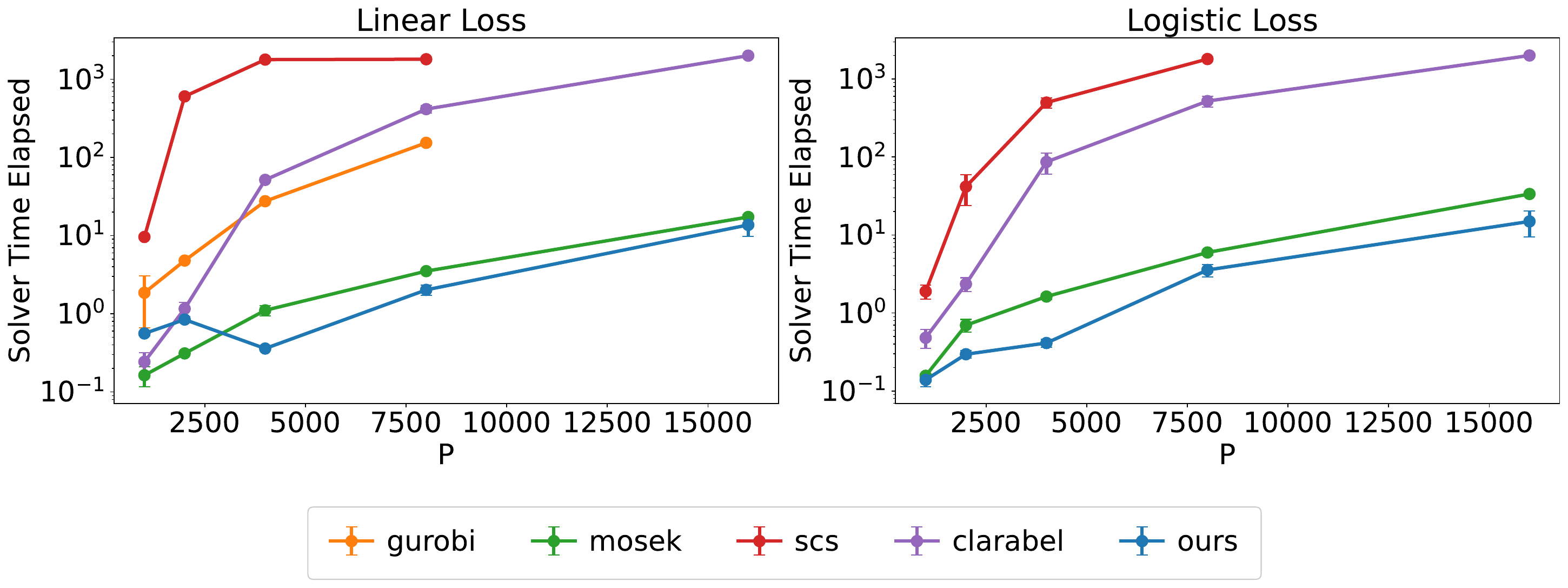}
    \caption{Solve the perspective relaxation in Problem~\eqref{obj:original_sparse_problem_perspective_formulation_convex_relaxation}.
    We set $M=2.0$, $\lambda_2=1.0$, $n/p=0.1$, and $k=10$.}
    \label{fig:solve_convex_relaxation_M_2.0_lambda2_1.0_n_p_ratio_0.1}
\end{figure*}

\clearpage

\subsubsection{Perturbation Study on Large $k$ Value}

\begin{figure*}[!ht]
    \centering
    \includegraphics[width=0.9\textwidth]{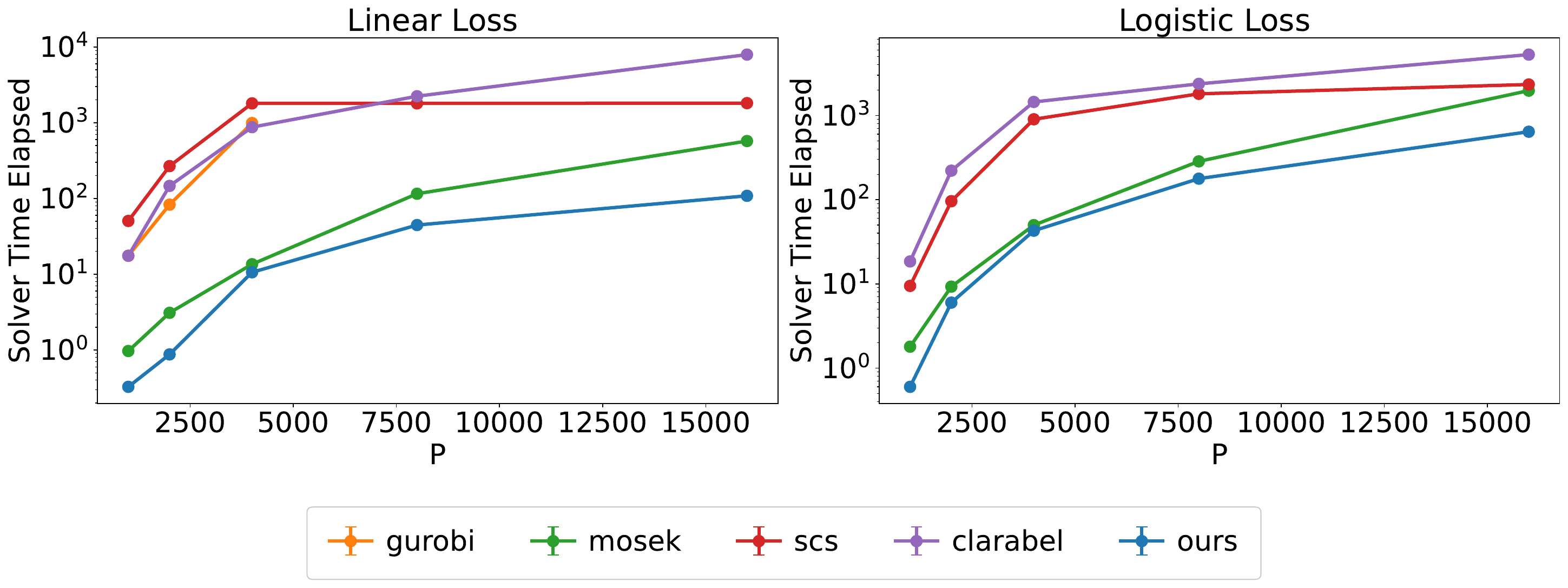}
    \caption{Solve the perspective relaxation in Problem~\eqref{obj:original_sparse_problem_perspective_formulation_convex_relaxation}.
    We set $M=2.0$, $\lambda_2=1.0$, $n/p=0.1$, and $k=500$.}
    \label{fig:solve_convex_relaxation_M_2.0_lambda2_1.0_n_p_ratio_1.0_k_500}
\end{figure*}

\clearpage

\subsection{Comparison between Perspective Relaxation and $\ell_1$-Relaxation}

Besides the perspective relaxation discussed in Section~\ref{sec:problem_formulation}, there is another common relaxation for the sparse learning problem, which is the $\ell_1$-relaxation.

Recall that the perspective relaxation problem is formulated as:
\begin{align}
    \label{app_obj:original_sparse_problem_perspective_formulation_perspective_relaxation}
    P_{\text{conv}}^\star = \left\{
    \begin{array}{cll}
        \min\limits_{\bbeta, \bz \in \R^p} & f(\bX \bbeta, \by) + \lambda_2 \sum_{j \in [p]} {\beta_j^2}/{z_j} \\[1ex]
        \text{\; s.t.} & \bz \in [0, 1]^p, \, \mathbf{1}^T \bz \leq k, \\[1ex]
        & -M z_j \leq \beta_j \leq M z_j ~ \forall j \in [p].
    \end{array}
    \right.
\end{align}

In contrast, the $\ell_1$-relaxation problem is formulated as (following~\citet[Equation~2]{mhenni2020sparse}):
\begin{align}
    \label{app_obj:original_sparse_problem_perspective_formulation_l1_relaxation}
    P_{\text{conv}}^\star = \left\{
    \begin{array}{cll}
        \min\limits_{\bbeta, \bz \in \R^p} & \quad f(\bX \bbeta, \by) + \lambda_2 \Vert{\bbeta}_2^2 \\[1ex]
        \text{\; s.t.} & \Vert{\bbeta}_1 \leq k M, \, \Vert{\bbeta}_{\infty} \leq M.
    \end{array}
    \right.
\end{align}

Since Lemma~\ref{lemma:equivalence_between_perspective_relaxation_and_convexification} establishes that the perspective relaxation is equivalent to the convex hull of the $\ell_2$-regularization term subject to the cardinality and box constraints, this is the tightest convex relaxation for the $\ell_2$ term.
Thus, the perspective relaxation is a better choice than the $\ell_1$-relaxation.
We verify the superiority of the perspective relaxation over the $\ell_1$-relaxation by numerically comparing the two relaxation methods in terms of the objective values.

\begin{table*}[!ht]
\centering
\caption{Linear Regression Results: objective values of the perspective relaxation and $\ell_1$-relaxation (the higher the better).}
\label{tab:linear_regression_results}
\begin{tabular}{lccccc}
    \toprule
    method      & p=1000  & p=2000  & p=4000  & p=8000   & p=16000  \\
    \midrule
    l1          & 1088.78 & 2418.70 & 5518.70 & 11877.03 & 25710.79 \\
    perspective & 1119.53 & 2449.34 & 5549.40 & 11907.39 & 25741.67 \\
    \bottomrule
\end{tabular}%
\end{table*}

\begin{table*}[!ht]
\centering
\caption{Logistic Regression Results: objective values of the perspective relaxation and $\ell_1$-relaxation (the higher the better).}
\label{tab:logistic_regression_results}
\begin{tabular}{lccccc}
    \toprule
    method      &  p=1000  & p=2000  & p=4000  & p=8000  & p=16000 \\
    \midrule
    l1          &  202.12  & 485.82  & 1000.84 & 2216.15 & 4667.79 \\
    perspective &  234.34  & 518.99  & 1032.95 & 2248.64 & 4699.78 \\
    \bottomrule
\end{tabular}%
\end{table*}

Lastly, we want to mention that there exist theoretically tigher relaxations in the literature (particularly rank-one SOCP and SDP formulations), but these formulations face fundamental computational limitations.
They require interior-point methods, which cannot be warm-started effectively and also scale poorly with problem size.
Our perspective relaxation provides the optimal practical balance: it delivers sufficiently tight bounds while remaining computationally efficient through first-order methods.

\clearpage

\section{Additional Discussions}
We first provide common calculus rules for conjugate functions, whose proof can be found in standard optimization textbooks such as~\citep{beck2017first}.

\begin{itemize}[label=$\diamond$,leftmargin=*]
\item \textbf{Separable Sum Rule:} Let $f(\bx) = \sum_{j \in [p]} f_j(x_j)$, where $f_j: \R \rightarrow \R$ is convex for all $j \in [p]$. Then, the conjugate of $f$ is given by $f^*(\bmu) = \sum_{j \in [p]} f_j^*(\mu_j)$.

\item \textbf{Scalar Multiplication Rule:}  Let $g : \R^p \to \R$ be convex and $\alpha > 0$ be a scalar. Then, the conjugate of $f(\bx) = \alpha g(\bx)$ is given by $f^*(\bmu) = \alpha g^*(\bmu/\alpha)$.

\item \textbf{Addition to Affine Function Rule:} Let $g : \R^p \to \R$ be convex and $\ba, \bb \in \mathbb{R}^p$ be two vectors. Then, the conjugate of $f(\bx) = g(\bx) + \ba^\top\bx + b$ is given by $f^*(\bmu) = g^*(\bmu - \ba) - \bb$.

\item \textbf{Composition with Invertible Linear Mapping Rule:} Let $g : \R^p \to \R$ be convex and $\bA \in \mathbb{R}^{p \times p}$ be an invertible matrix. Then, the convex conjugate of $f(\bx) = g(\bA \bx)$ is given by $f^*(\bmu) = g^*(\bA^{-\top} \bmu)$.

\item \textbf{Infimal Convolution Rule:} Let $g, h : \R^p \to \R$ be convex. Then, the convex conjugate of $f(\bx) = \inf_{by} ~ g(\by) + h(\bx - \by)$ is given by $f^*(\bmu) = g^*(\bmu) + h^*(\bmu)$.
\end{itemize}
These rules are useful for discussions in~\ref{appendix_sec:convex_conjugate_for_GLM_loss_functions} and~\ref{appendix_sec:safe_lower_bound_more_discussions}.

\subsection{Convex Conjugate for GLM Loss Functions}
\label{appendix_sec:convex_conjugate_for_GLM_loss_functions}

The convex conjugates of some of GLM loss functions are summarized bellow.
\begin{itemize}[label=$\diamond$,leftmargin=*]
    \item \textbf{Linear Regression:} 
    $$F(\bX \bbeta) = \Vert{\bX \bbeta - \by}_2^2 \quad \& \quad F^*(-\bzeta) = \frac{1}{4} \Vert{\bzeta}_2^2 - \by^T \bzeta.$$
    \item \textbf{Logistic Regression:} 
    $$F(\bX \bbeta) = \sum_{i \in [n]} \log(1 + \exp(-y_i (\bX \bbeta)_i)) \quad \& \quad F^*(-\bzeta) = \sum_{i \in [n]} \left( 1- \frac{\zeta_i}{y_i} \right) \log \left( 1-\frac{\zeta_i}{y_i} \right) + \frac{\zeta_i}{y_i} \log \left( \frac{\zeta_i}{y_i} \right).$$ 
        \item \textbf{Poisson Regression:} 
    $$F(\bX \bbeta) = \sum_{i \in [n]} \left( \exp(\bX \bbeta)_i - y_i (\bX \bbeta)_i \right) \quad \& \quad F^*(-\bzeta) = \sum_{i \in [n]} h(-\zeta_i + y_i), $$
    where $h(z) = z \log(z) - z$ if $z > 0$ and $h(z)=0$ if $z = 0$.
    \item \textbf{Gamma Regression:}
    $$F(\bX \bbeta) = \sum_{i \in [n]} \left( y_i \exp(-(\bX \bbeta)_i) + (\bX \bbeta)_i\right) \quad \& \quad F^*(-\bzeta) = \sum_{i \in [n]} y_i h(\frac{1-\zeta_i}{y_i}), $$
    where $h(z) = z \log(z) - z$ if $z > 0$ and $h(z)=0$ if $z = 0$.
    \item \textbf{Squared Hinge Loss:}
    For binary classification with labels $y_i \in \{-1, +1\}$,
    $$F(\bX \bbeta) = \sum_{i \in [n]} \max(0, 1-y_i (\bX \bbeta)_i)^2 \quad \& \quad F^*(-\bzeta) = \sum_{i \in [n]}  h(- y_i \zeta_i),$$
    where $h(z) = z + \frac{z^2}{4}$ if $z \leq 0$ and $h(z)=\infty$ if $z > 0$.
    \item \textbf{Multinomial Logistic Regression:}
    For multiclass classification with $K$ classes with coefficients $\bbeta \in \mathbb{R}^{p \times K}$, let $y_{ik}$ be a binary indicator such that $y_{ik}=1$ if the $i$-th sample belongs to class $k$, and $y_{ik}=0$ otherwise.
    $$F(\bX \bbeta) = \sum_{i \in [n]} \left( \log\left( \sum_{j=1}^K \exp((\bX \bbeta)_{ij}) \right) - \sum_{k=1}^K y_{ik} (\bX \bbeta)_{ik} \right) \quad \& \quad F^*(-\bzeta) = \sum_{i \in [n]} h(\by_{i} - \bzeta_{i}),$$
    where $h(\bz) = \sum_{k=1}^K z_k \log(z_k)$ if $\bz > \mathbf{0}$ and $\mathbf{1}^T \bz = 1$, and $h(\bz) = \infty$ otherwise.
\end{itemize}

\subsection{Safe Lower Bound}
\label{appendix_sec:safe_lower_bound_more_discussions}

The linear regression problem with eigen-perspective relaxation is formulated as
\begin{align*}
    P^\star_{\text{eig-conv}} = \min_{\bbeta \in \R^p} \bbeta^\top \bQ_{\text{eig}} \bbeta - 2\by^\top \bX \bbeta +  2 \lambda_{\text{eig}} g(\bbeta),
\end{align*}
where $\bQ_{\text{eig}} = \bX^\top \bX - \lambda_{\text{min}}(\bX^\top \bX) \bI$, $\lambda_{\text{eig}} = \lambda_2 + \lambda_{\text{min}}(\bX^\top \bX)$, and $\lambda_{\text{min}}(\cdot)$ denotes minimum eigenvalue of the input matrix.
Using the standard version of weak duality theorem, we have
\begin{align*}
    P_{\text{MIP}}^\star \geq P_{\text{eig-conv}}^\star \geq - F^*(-\hat{\bzeta}) - G^*(\hat{\bzeta}),
\end{align*}
where $F(\bbeta)= \bbeta^\top \bQ_{\text{eig}} \bbeta$, $G(\bbeta) = -2\by^\top \bX \bbeta + 2 \lambda_{\text{eig}} g(\bbeta)$, and $\hat{\bzeta} = -\nabla F(\hat{\bbeta}) = -2\bQ_{\text{eig}} \hat{\bbeta}$.
The conjugate functions admit the following closed form expressions
\begin{align*}
    F^*(-\hat{\bzeta}) &= \frac{1}{4} \hat{\bzeta}^\top \bQ_{\text{eig}}^{\dagger} \hat{\bzeta} = \hat{\bbeta} \bQ_{\text{eig}} \hat{\bbeta} \quad \& \quad G^*(\hat{\bzeta}) = 2\lambda_{\text{eig}} \, g^* \left(\frac{-\bQ_{\text{eig}}\hat{\bbeta} +  \bX^\top \by}{\lambda_{\text{eig}}} \right), 
\end{align*}
where we use $(\cdot)^{\dagger}$ to denote the pseudo-inverse of a matrix. We may conclude that
\begin{align*}
    P_{\text{MIP}}^\star & \geq \hat{\bbeta} \bQ_{\text{eig}} \hat{\bbeta} + 2\lambda_{\text{eig}} \, g^* \left(\frac{-\bQ_{\text{eig}}\hat{\bbeta} +  \bX^\top \by}{\lambda_{\text{eig}}}\right).
\end{align*}
The above lower bound can be viewed as a generalization of the safe lower bound formula from~\citep[Theorem~3.1]{liu2024okridge}. Specifically, as $M$ approaches $\infty$, the above lower bound matches the lower bound in in~\citep[Theorem~3.1]{liu2024okridge}. 
Furthermore, Our proof uses a simple weak duality argument and is concise, in contrast to the lengthy two-page algebraic proof of~\citep[Theorem~3.1]{liu2024okridge}.

\end{document}